\theoremstyle{plain}
\newtheorem{theorem}{Theorem}[section]
\newtheorem{lemma}[theorem]{Lemma}
\newtheorem{corollary}[theorem]{Corollary}
\theoremstyle{definition}
\theoremstyle{remark}
\icmltitlerunning{A General framework for PAC-Bayes Bounds for Meta-Learning}
\newcommand{\Btsk}{ \textup{B}_\textup{Task}}
\newcommand{\Benv}{ \textup{B}_\textup{Env}}
\newcommand{\taskRV}{\ensuremath{T}}
\newcommand{\task}{\ensuremath{t}}
\newcommand{\lenZm}{\ensuremath{\mathrm{M}}} 
\newcommand{  \lenDataSet}{\ensuremath{\mathrm{N}}}
\newcommand{\kposter}{\ensuremath{\kappa_s^2}}
\newcommand{\kprior}{\ensuremath{\kappa_p^2}}
\newcommand{\metaGenLoss}{\ensuremath{\mathrm L_{\mathrm P_{\taskRV\bm Z^{\lenZm}}}(u)}}
\newcommand{\Prior}{\ensuremath{\mathrm P}}
 \newcommand{\Posterior}{\ensuremath{\mathrm Q}}  
\newcommand{\hyperprior}{\ensuremath{\mathcal P}}
\newcommand{\hyperpost}{\ensuremath{\mathcal Q}}
  \newcommand{\aEnv}{\lambda_{\textup{Env}}}
   \newcommand{\aTsk}{\lambda_{\textup{Tsk}}}
\newcommand{\PZti}{\ensuremath{\mathrm P_{Z|\task_i}}}
\newcommand{\Dkl}{\ensuremath{\mathrm D}}
\newcommand{\Dgama}{\ensuremath{\mathrm D_{\gamma}}}
\newcommand{\trainTsk}{\ensuremath{\bm Z_i^{\lenZm}}}
\newcommand{\trainTskSimp}{\ensuremath{\bm Z_i^{\lenZm}}}
\newcommand{\trainEnvSimp}{\ensuremath{\bm Z_{1:\lenDataSet}^{\lenZm}}}
\newcommand{\trainEnv}{\ensuremath{\bm Z_{1:\lenDataSet}^{\lenZm}}}
\newcommand{\expLossTsk}[2][i]{\ensuremath{\mathrm L_{\PZti}(#1)}}
\newcommand{\DefexpLossTsk}[2][i]{\ensuremath{\mathbbm E_{\PZti}\left[ \ell(#1,#2)\right]}}
\newcommand{\expLossEnv}[1][i]{\ensuremath{\mathrm L_{\mathrm P_{\taskRV\bm Z^\lenZm}}(#1)}}
  \newcommand{\PWcondZimU}{\mathrm Q_i}
    \newcommand{\PWcondZimUsimp}{\ensuremath{\mathrm Q_i
    }
    }
      \newcommand{\PWcondZimUsimpSmallU}{\mathrm Q_i}
\newcommand{\empLossEnv}[1][i]{\ensuremath{\mathrm L_{\trainEnv}(#1)}}
\newcommand{\expLossTskRV}{\ensuremath{\mathrm L_{\PZti}(W)}}
\newcommand{\expLossTskRVRV}{\ensuremath{\mathrm L_{\mathrm P_{Z|\taskRV_i}}(W)}}
\newcommand{\empLossTskRV}{\ensuremath{\mathrm L_{\trainTsk}(W)}}
\newcommand{\empLossEnvRV}{\ensuremath{\mathrm L_{\trainEnv}(U)}}
\newcommand{\expLossEnvRV}{\ensuremath{\mathrm L_{\mathrm P_{\taskRV\bm Z^\lenZm}}(U)}}
 \newcommand{\decomMeta}[3][i]{\ensuremath{\tilde{\mathrm L}^{#3}_{#2}(#1)}} 
  \newcommand{\decomMetaRV}{\ensuremath{\decomMeta [U]{\bm Z^{\lenZm}_i}{\task_i}}} 
   \newcommand{\decomMetaRVTot}
   {\ensuremath{\frac{1}{\lenDataSet}\sum_{i=1}^{\lenDataSet} \decomMetaRV}}
      \newcommand{\decomMetaRVTotRV}
   {\ensuremath{\frac{1}{\lenDataSet}\sum_{i=1}^{\lenDataSet} \decomMeta [U]{\bm Z^{\lenZm}_i}{\taskRV_i}}}
  \newcommand{\DefdecomMeta}[3][i]{\ensuremath {\mathbbm E_{W\sim 
  \PWcondZimUsimpSmallU}\left[\mathrm L_{\mathrm P_{Z|#3}}(W)\right]}}
\newcommand{\func}[2][i]{\ensuremath{\mathrm F \left(#1,#2\right)}}
\newcommand{\funcTskRV}{\ensuremath{\mathrm F^{\textup{Task}} }}
\newcommand{\funcEnvRV}{\ensuremath{\mathrm F^{\textup{Env}} }}
\newcommand{\funcTsk}[2][i]{\ensuremath{\mathrm F^{\textup{Task}} \left(#1,#2\right)}}
\newcommand{\funcinvTsk}[1]{\ensuremath{\mathrm G_{\textup{Task}} \left(#1\right)}}
\newcommand{\funcinvEnv}[1]{\ensuremath{\mathrm G_{\textup{Env}} \left(#1\right)}}
\newcommand{\funcEnv}[2][i]{\ensuremath{\mathrm F^{\textup{Env}} \left(#1,#2\right)}}
\newcommand{\funcinv}[1]{\ensuremath{\mathrm G \left(#1\right)}}
\newcommand{\PriorTsk}{\Prior}
\newcommand{\PriorTskSimp}{\Prior}
\newcommand{\PriorTskSimpSmallU}{\mathrm P}
\newcommand{\JointPriorTskSimp}{\hyperprior\Prior}
\newcommand{\PriorEnvSimp}{\mathcal P}
\newcommand{\empLossTsksimple}[1][i]{\ensuremath{\mathrm L_{\trainTskSimp}(#1)}}
\newcolumntype{P}[1]{>{\centering\arraybackslash}p{#1}}
\newcommand{\squeezeup}{\vspace{-2.5mm}}
\begin{document}

\onecolumn

\icmltitle{A General framework for PAC-Bayes Bounds for Meta-Learning}



\icmlsetsymbol{equal}{}

\begin{icmlauthorlist}
\icmlauthor{Arezou Rezazadeh}{yyy}
\end{icmlauthorlist}

\icmlaffiliation{yyy}{Department of Electrical Engineering,  Chalmers University of Technology, Gothenburg, Sweden}

\icmlcorrespondingauthor{Arezou Rezazadeh}{arezour@chalmers.se}





\footnote{
This work has been funded by the European Union’s Horizon 2020
research and innovation programme under the Marie Sklodowska-Curie grant
agreement No. 893082.\\
Department of Electrical Engineering,  Chalmers University of Technology, Gothenburg, Sweden.
arezour@chalmers.se}
 \begin{abstract}

Meta learning automatically infers an inductive bias, that includes the hyperparameter of the base-learning algorithm, by observing data from a finite number of related tasks. This paper studies PAC-Bayes bounds on meta generalization gap. The meta-generalization gap comprises two sources of generalization gaps: the environment-level and task-level gaps
resulting from observation of a finite number of tasks and data samples per task, respectively. In this paper, by upper bounding arbitrary convex functions, which link the expected and empirical losses at the environment and also per-task levels, we obtain new PAC-Bayes bounds. Using these bounds, we develop new PAC-Bayes meta-learning algorithms. Numerical examples
demonstrate the merits of the proposed novel bounds and algorithm in comparison to prior PAC-Bayes bounds for meta-learning.

\end{abstract}
\section{Introduction}
\label{br_int}
 Based on Mitchell's definition \cite{mitchel}, a machine learns a task from an experience when its performance improves with training examples of the task. In other words, during the learning process, the learner can produce a \textit{hypothesis}  that performs well on future examples of the same task. This learning process is done based on the set of assumptions known as \textit{inductive bias} \cite{baxter}. In many machine learning problems, finding methods for automatically learning the inductive bias is desirable. \textit{Meta learning} also known as \textit{learning to learn} \cite{thrun1998learning} formalizes this goal by observing data from a number of inherently related tasks. Then, it uses the gained experience and knowledge to learn appropriate bias which can be fine-tuned to perform well on new tasks.
Thus, the meta-learner speeds up the learning of a new, previously unseen task \cite{baxter}.
 For instance, learning the initialization and the learning rate of a training algorithm \cite{finn2017model,li2017meta}, the model architectures of a neural network \cite{Zoph2018},  or the optimization algorithm of a neural network  \cite{Ravi2017},  all are within the scope of meta-learning.

  As mentioned, the goal is extracting knowledge from several observed tasks referred to as \textit{meta-training set}, and using the knowledge to improve performance on a novel task. The meta-learner generalizes well if after observing sufficiently training tasks, it infers a \textit{hyperparameter} which contains good solutions to novel tasks. The good solution means that \textit{meta-generalization loss}, which is defined as the average loss incurred by the hyperparameter when used on a new task, is minimized. However, since both data and task
distributions are unknown, the meta-generalization loss can not be optimized. Instead, the meta-learner evaluates the \textit{empirical meta-training loss} for the hyperparameter based on the meta-training set. \textit{Meta-generalization gap} is defined as the difference between the meta-generalization loss and the meta-training loss. If the meta-generalization gap is small, it means that the meta-training loss is a good estimation of the meta-generalization loss.

Thus, bounding the meta-generalization gap is a key technique to understanding how the prior knowledge acquired from previous tasks may improve the performance of learning an unseen task. Here, a key question is `how to regularize the meta-learner, to avoid overfitting?'
The probably approximately correct (PAC)-Bayes generalization bound, is one way to answer this question.

In this paper, we derive a general framework that gives PAC-Bayes bounds on the meta-generalization gap. Under certain setups, different families of PAC-Bayes bounds, namely classic, quadratic and fast-rate families, can be re-obtained by the general framework. We also propose new PAC-Bayes classic bounds which reduce the meta-overfitting problem.  
\paragraph*{Related Work}
In  statistical meta-learning problems,  one line of research is learning of the parameters of the optimization algorithms, and analyzing gradients based on meta-learning methods  \cite{finn2017model,paper3}. For example, \cite{paper2,paper5} worked on an online convex optimization framework with the assumption that tasks are close to a global task parameter. Additionally, \cite{paper1,paper4} studied algorithms which incrementally update the bias regularization parameter using a sequence of observed tasks.
Another line of research is studying the meta-generalization gap, and finding bounds on it  on average \cite{sharu, arezou2021}
or with high probability \cite{pentina14, amit2018meta,rothfuss2020pacoh,liu2021,mys}.

We recall that in the ordinary learning problem, the bound for \textit{generalization gap} can be obtained for average  generalization error scenario \cite{Russo2016ControllingTheory,Xu2017,Bu,Negrea2019Information-TheoreticEstimates} and PAC-Bayes scenario \cite{McAllaster1999,Seeger2002,Maurer,catoni2007,Alquier2008,McAllester2013,Guedj2019,Guedj2019ALearning,dziugaite20-10b,Ohnishi2020,omar}.
In the former case, the bound of generalization error is derived by averaging over the training set and hypothesis. While, the PAC-Bayes bounds hold with high probability.

Following the initial work of  McAllester \cite{McAllaster1999},  PAC-Bayes bounds for conventional learning have been widely investigated. Selecting  different convex functions, which link the expected and empirical losses,  such as KL-divergence \cite{Seeger2002},  square function \cite{Mcallester2003} or linear function \cite{Alquier2016} implies  different PAC-Bayes  bounds.
The dependency on the sample size, in most of these bounds, is inversely proportional to the square root of the number of samples.
In \cite{McAllester2013}, by choosing the convex function as    $\Dgama(a||b)=\gamma a-\log(1-b+be^\gamma)$,  a family of PAC-Bayes bounds known as fast-rate bounds were obtained.
In these kinds of bounds, the dependence on the sample size can be improved  by the inverse of the number of samples. Directly relevant to this paper, in \cite{omar} by proposing a general approach of finding PAC-Bayes bounds, various known and also new PAC-Bayes bounds were obtained. 

In the meta-learning setup, inspired by the PAC-Bayes bounds for conventional learning problem, by using different convex functions, different kinds of bounds were obtained \cite{pentina14,amit2018meta,rothfuss2020pacoh,liu2021,mys}.
Initially, an extension of generalization error bounds to meta-learning was provided in \cite{pentina14} with a convergence rate $O(1/\sqrt{{\lenDataSet}})+O(1/(\lenDataSet\sqrt{{\lenZm}})+1/\sqrt{\lenZm})$. To have tighter bounds, the  approaches proposed in \cite{McAllaster1999} and \cite{Alquier2016} have been extended to the meta-learning problem in  \cite{amit2018meta} and \cite{rothfuss2020pacoh}, respectively.
  In \cite{amit2018meta} with a rate $O(\sqrt{\log(\lenDataSet)/{\lenDataSet}})+O(\sqrt{\log(\lenDataSet\lenZm)/{\lenZm}}$, by minimizing the obtained PAC-Bayes bound, a gradient-based algorithm was proposed. In \cite{rothfuss2020pacoh} with a convergence rate $O(1/\sqrt{{\lenDataSet}})+O(1/(\lenDataSet\sqrt{{\lenZm}})+1/\sqrt{\lenDataSet})$, by optimizing the obtained bound,  a class of PAC-optimal meta-learning algorithms was developed. To achieve meta-learning algorithms with rapid convergence ability, \cite{liu2021} and \cite{mys} have studied fast-rate bounds for the meta-learning setup with improved complexities.

\paragraph*{Contributions}
Here, we summarize the main contributions of the paper.
\begin{itemize}
    \item  Firstly, inspired by \cite{omar}, by upper bounding arbitrary convex functions, which link the expected and empirical losses at  environment and also per-task levels, we propose the general
PAC-Bayes meta-generalization bounds (Section \ref{gen_sec}). 

\item
Proper choices of the  convex functions recover known PAC-Bayes bounds including classic, quadratic and fast-rate families (Section \ref{pre_works}).

\item We  provide a new fast-rate bound and also a new classic bound  with better performance on the meta-test set and with convergence rate $O(\sqrt{(1/\lenDataSet+1/\lenZm)
})$ (Section \ref{our_bounds_sec}).
 Following the  meta-learning by adjusting the priors (MLAP) algorithm \cite{amit2018meta}, we develop the MLAP algorithm for  our new obtained bounds in the Section
 \ref{meta-alg}. 
 We demonstrate the usefulness of the proposed bounds in an example in Section  \ref{num_res}.
 The main merit of our new classic bound is its significant performance to avoid meta overfitting.

\end{itemize}

\section{ Notations, Definitions and Methods }
\label{lab1_int}
In this paper, the sample  $Z$ takes on a value in the instance
space $\mathcal Z$. The hypothesis space (named also as model
parameter space) is denoted by $\mathcal W$. The non-negative loss function $\ell : \mathcal W\times \mathcal Z\rightarrow \mathbbm R^+$ measures the model parameter $w\in\mathcal W$ on a datasample $z\in\mathcal Z$, the hyperparameter space
is represented by $\mathcal U$, and the task environment is defined by a set of tasks $\mathcal T$ which can be a discrete or a continuous set. The Kullback-Leibler (KL) divergence between two Bernoulli distributions with respective parameters $p$ and $q$, is given by $kl(p,q)$. In other cases, the KL divergence between distributions $\mathrm Q$ and $\mathrm P$ is denoted by $\Dkl(\mathrm Q||\mathrm P)$.

\subsection{Conventional single-task learning}
In conventional learning,  each task $\task\in\mathcal T$   is associated
with an underlying unknown data distribution  $\mathrm P_{Z|\taskRV=\task}$ on $\mathcal{Z}$.   For a given task
$\task_i\in \mathcal T$, the \textit{base-learner} observes a data set $\trainTsk=(Z^1_i,\dots,Z^{\lenZm}_i)$ of $\lenZm$ independently and identically distributed (i.i.d.) samples from
 $\mathrm P_{Z|\taskRV=\task_i}$. 
 For the
conventional single-task learning, the inductive bias comprising of the hyperparameter vector $u\in\mathcal U$ of the base-learner is
fixed. For the fixed $u\in\mathcal U$, 
the base-learner uses $u$ and the training set $\bm Z^\lenZm_i$ to  output a 
distribution 
over $\mathcal W$. 

The goal of the base-learner is to infer the model parameter  $w \in \mathcal{W}$ that minimizes the \emph{per-task generalization loss} (named also as the per-task expected loss)
\begin{align}
    \expLossTsk[w]{}= \DefexpLossTsk[w]{Z},\label{exp_loss_tsk}
\end{align} 
where the average is taken over a test sample $Z \sim \mathrm P_{Z|\taskRV=\task_i}$ drawn independently from $ \bm Z^\lenZm_i$. Since $\mathrm P_{Z|\taskRV=\task_i}$ is unknown, the generalization loss  $\mathrm L_{\mathrm P_{Z|\taskRV=\task_i}}(w)$ cannot be computed.
Instead, the base-learner evaluates the \emph{training loss}
\begin{align}
        \empLossTsksimple[w]{}= \frac{1}{\lenZm}\sum_{j=1}^{\lenZm}\ell(w,Z_i^j).
        \label{emp_loss_task}
\end{align} 
The difference between the generalization loss and the
training loss is referred to as the generalization gap
\begin{align}
   \Delta \mathrm L(w|\bm Z^{\lenZm}_i,u,\task_i)=\expLossTsk[w]{}-  \empLossTsksimple[w]{}.
       \label{2020.06.22_23.18}
\end{align} 

Roughly speaking, if the generalization gap is small, then with
high probability, the performance of the inferred model
parameter $w$ on the training set can be taken as a reliable measure of the per-task generalization loss. Here,
the question is that if we want to avoid overfitting and
minimize per-task generalization loss with respect to $w$, 
what should be optimized on the training data $\trainTsk$? The
PAC-Bayes framework studies this problem.

Given hyperparameter vector $u\in \mathcal U$,  and   task $\task_i\in \mathcal T$, 
in
the conventional single-task PAC-Bayes setting \cite{Alquier2021}, 
the base-learner assumes a prior distribution  $\PriorTskSimpSmallU$ over $\mathcal W$. By observing the training data $\trainTsk$, the base learner
updates the prior distribution to a data-dependent distribution referred as posterior distribution $\PWcondZimUsimpSmallU$. Having a new instance, the base learner randomly picks a model parameter $w\in\mathcal W$ according to $\PWcondZimUsimpSmallU$. 
To have a guarantee that the performance of training loss for the picked $w$ holds with
high probability as the performance of per-task generalization loss, we bound generalization gap averaged over
the posterior distribution, i.e., $\mathbbm E_{W\sim\PWcondZimUsimpSmallU}[\Delta \mathrm L(W|\bm Z^{\lenZm}_i,u,\task_i)]$.   

Roughly speaking, most PAC-Bayes proofs follow four key steps. \cite{Alquier2021} presents a comprehensive tutorial
about PAC-Bayes bounds. Here, we review the key steps of
finding PAC-Bayes bounds. 
 Let $\func[a]{b}$  be a convex function in both $a$ and $b$.
Firstly, a suitable convex function such as $\func[\cdot]{\cdot}$ links the expected loss  averaged over the posterior distribution with the empirical loss 
averaged over the posterior distribution. Then, by applying Jensen’s inequality, the function over the expectation (posterior distribution) is bounded by the expectation of the function.
By using a change of measure inequality \cite{Ohnishi2020}, we find a bound in terms of a divergence (usually KL-divergence between  posterior and prior distributions),    
and the expectation of the  function over prior distribution. Then, by applying Markov's inequality, we usually bound the expectation of the function with the logarithm
of the confidence parameter. 
Thus, the convex function linking the expected and empirical losses is bounded by a complexity term, like $\func[a]{b}\leq c$.
Usually, a further bounding technique, which we refer to as the ‘affine transformation’, is used to bound the expected loss as an affine transformation of the complexity term. It means that from  $\func[a]{b}\leq c$, one can conclude that $a\leq k\cdot b+\funcinv{c}$, where $k\in \mathbbm R$ is a coefficient, and $\mathrm G:\mathbbm R^+\rightarrow\mathbbm R$.
 
To look through the mentioned concepts in detail, we consider the conventional PAC-Bayes bound in \cite{McAllaster1999}. In \cite{McAllaster1999}, by setting $\funcTsk[a]{b}=2(\lenZm-1)(a-b)^2$,
it is proved that    given the prior distribution $\PriorTskSimpSmallU$, for any confidence parameter  $\delta\in(0,1)$, with probability at least $1-\delta$,
\begin{align}
 2(\lenZm-1) \left( \mathbbm E_{W\sim\PWcondZimUsimpSmallU}\left[ \expLossTsk[W]{}-\empLossTsksimple[W]\right]\right)^2 \leq \Dkl\left(\PWcondZimUsimpSmallU||\PriorTskSimpSmallU \right)+\log\left(\frac{\lenZm}{\delta}\right). \label{2021.06.15_16.30_neq}
\end{align}

The right hand side of \eqref{2021.06.15_16.30_neq}  is known as the \emph{complexity term}, and contains  KL-divergence, as the information gain in specializing from the prior to posterior distributions, and the log-term, as the dependence expression on the confidence parameter, and the number of
samples $\lenZm$.
A learning algorithm with generalization guarantee selects a posterior distribution $\PWcondZimUsimpSmallU$  which  minimizes \eqref{2021.06.15_16.30_neq}. Since minimizing \eqref{2021.06.15_16.30_neq} is not easy, to find bounds which are convenient to minimize, we apply the affine transformation step. In other words, for the convex function $\funcTsk[a]{b}=2(\lenZm-1)(a-b)^2$,
since
 from
$\funcTsk[a]{b}\leq c_{\textup{tsk}}$,
we have $a\leq 1.b+\sqrt{c_{\textup{tsk}}/(2(\lenZm-1))}$,  the affine transformation leads to
 $k_t=1$ and $\funcinvTsk{c}=\sqrt{c_{\textup{tsk}}/(2(\lenZm-1))}$. It means  that  from \eqref{2021.06.15_16.30_neq},
the following inequality holds uniformly for all posteriors distributions $\PWcondZimUsimpSmallU$ 
\begin{align}
   \mathbbm E_{W\sim\PWcondZimUsimpSmallU}\left[ \expLossTsk[W]{}-\empLossTsksimple[W]\right] \leq  \sqrt{\frac{\Dkl\left(\PWcondZimUsimpSmallU||\PriorTskSimpSmallU \right)+\log(\frac{\lenZm}{\delta})}{2(\lenZm-1)}}
   . \label{2021.06.15_16.30}
\end{align}

\subsection{Meta-Learning} 
The goal of meta-learning is  automatically infer the hyperparameter  $u$ of the base learner from training data pertaining to a number of related tasks. 
The tasks are assumed to belonging to a task environment, which is defined by a task distribution $\mathrm P_{\taskRV}$ on the space of tasks $\mathcal T$, and
by the per-task data distributions $\{\mathrm P_{Z|\taskRV=\task}\}_{\task \in \mathcal{T}}$.
The meta-learner observes a meta-training set $\trainEnv=(\bm Z_1^{\lenZm},\dots,\bm Z_\lenDataSet^{\lenZm})$ of $\lenDataSet$ data sets. Without loss of generality, we assume that
number of samples of all tasks equals to $\lenZm$. The obtained
results can be easily generalized to the case where per-task data samples are not equal.
Each $\trainTsk$ is generated independently by first drawing a task $\taskRV_i\sim \mathrm P_\taskRV$ and then a task-specific dataset  $\trainTsk\sim \mathrm P_{\bm Z^{\lenZm}|\taskRV_i}$.

The meta-learner uses the meta-training set $\trainEnv$ to infer the hyperparameter $u$.  
In the PAC-Bayes setup for meta
learning, the goal of the meta-learner is to infer hyperparameter $u$ 
from the observed tasks, and then use  $u$ as a
prior knowledge for learning new (yet unobserved) tasks
from task environment $\mathcal T$. The  quality of $u$ is is measured
by the \emph{meta-generalization loss} when using it to learn
new tasks. Formally, the objective of the meta-learner
is to infer the hyperparameter $u$ that minimizes the meta-generalization loss
\begin{align}
    \metaGenLoss=\mathbbm E_{\mathrm P_{\taskRV} \mathrm P_{\bm Z^{\lenZm} |\taskRV}} \bigl[
    \mathbbm E_{W\sim \Posterior}
 [\mathrm L_{\mathrm P_{Z|\taskRV}}(W)] \bigr],
    \label{sharu_eq12}
\end{align}
where the expectation is taken over an independently
generated meta-test task $\taskRV \sim \mathrm P_\taskRV$, over the associated data set $\bm Z^\lenZm \sim \mathrm P_{\bm Z^\lenZm|\taskRV}$, and over the output of the base-learner. 
Since $\mathrm P_\taskRV$ and $\{\mathrm P_{Z|\taskRV=\task}\}_{\task \in \mathcal{T}}$ are unknown, the meta-generalization loss~\eqref{sharu_eq12}
cannot be computed. Instead, the
meta-learner can evaluate the \emph{meta-training loss}, which
for a given hyperparameter $u$, is defined as the average
training loss on the meta-training set
\begin{align}
\mathrm L_{\trainEnv}(u)  =\frac{1}{\lenDataSet}\sum_{i=1}^{\lenDataSet}\mathbbm E_{W\sim\PWcondZimUsimpSmallU} [\mathrm L_{\bm Z^{\lenZm}_i}(W)].
\label{sharu_eq11}
\end{align}
Here, the average is taken over the output of the base-learner. The difference between meta-generalization
loss and the meta-training loss is the \emph{meta-generalization gap}
\begin{align}
\Delta\mathrm L(u|\trainEnv)=\metaGenLoss-\mathrm L_{\trainEnv}(u).
\label{2020.06.24_22.18}
\end{align}
Small meta-generalization gap means that with high probability, the performance of the inferred hyperparameter $u$ on the meta-training set can be taken as a reliable measure
of the meta-generalization loss~\eqref{sharu_eq12}.

In the PAC-Bayes setup for meta learning, the meta-
learner assumes a \emph{hyper-prior distribution} $\hyperprior\in\mathcal P_{\mathcal U}$ over hyperparameter space $\mathcal U$, observes the meta-training set $\trainEnv$, and updates the hyper-prior distribution to a data-dependent distribution referred as  \emph{hyper-posterior distribution}
 $\hyperpost\in\mathcal P_{\mathcal U}$. The goal is to use the hyper-posterior
distribution for learning new and unseen tasks. In other
words, having a new task, the meta learner randomly picks $u$ according to hyper-posterior distribution  $\hyperpost$ and then use it for learning of posterior $\PWcondZimUsimpSmallU$.

One approach for finding the PAC-Bayes bounds for meta
learning, is decomposing the meta-generalization gap
into environment-level and within-task generalization
gaps. We define the decomposition term as
\begin{align}
 \decomMetaRVTot
    ,\label{decom-pac_tot}
\end{align}
where $\decomMeta[u]{\bm Z^{\lenZm}_i}{\task_i}$ is the average per-task generalization loss
\begin{align}
    \decomMeta[u]{\bm Z^{\lenZm}_i}{\task_i}=\DefdecomMeta[u]{\bm Z^{\lenZm}_i}{\task_i}.\label{decom-pac}
\end{align}

From \eqref{sharu_eq12}, we can express the meta-expected loss as  $    \expLossEnv[U]{}=\mathbbm E_{\mathrm P_{T\bm Z^\lenZm}}[  \decomMeta[U]{\bm Z^\lenZm}{\taskRV}]$.
 Recalling that  $\funcEnv[a]{b}$ is  a convex function in both $a$ and $b$.
In the PAC-Bayes setup for meta learning, we can
follow the mentioned four steps for both environment-level generalization gap
\begin{align}
  \hspace{-0.7em}  \mathrm F^{\textup{Env}} \Bigg( \mathbbm E_{U\sim \hyperpost} \mathbbm E_{\mathrm P_{T\bm Z^\lenZm}}\left[  \decomMeta[U]{\bm Z^\lenZm}{\taskRV}\right], \mathbbm E_{U\sim\hyperpost} \left[\frac{1}{\lenDataSet}
\sum_{i=1}^\lenDataSet
    \decomMeta[U]{\bm Z^{\lenZm}_i}{\task_i}\right]\Bigg),\label{2021.06.16.00.57}
\end{align}
and within-task generalization gap 
\begin{align}
   \mathrm F^{\textup{Task}}\Bigg(\mathbbm E_{U\sim\hyperpost}\bigg(
   \frac{1}{\lenDataSet}\sum_{i=1}^{\lenDataSet} \mathbbm E_{ W\sim  \PWcondZimU}\Big(\mathrm L_{\mathrm P_{Z|\taskRV_i}}(W)\Big)
   \bigg), \mathbbm E_{U\sim\hyperpost} \bigg(
   \frac{1}{\lenDataSet}\sum_{i=1}^{\lenDataSet}\mathbbm E_{W\sim  \PWcondZimU} [\mathrm L_{\bm Z^{\lenZm}_i}(W)]
   \bigg)\Bigg),
   \label{2021.06.16.00.58}
\end{align}
 separately \cite{pentina14,amit2018meta,rothfuss2020pacoh,liu2021,mys}. 
\section{General Meta-Learning PAC-Bayes Bounds}
\label{gen_sec}
In this section, inspired by \cite{omar},  
we find
a general approach for finding PAC-Bayes bounds for meta-
generalization gap.
\begin{theorem}[General PAC-Bayes Bounds]
\label{prop-gen-pac-tsk_two_KL}
Let $\funcTsk[a]{b}$ and $\funcEnv[a]{b}$ be two functions which are convex in both $a$ and $b$. Additionally, assuming that the tasks are drawn
independently from the task environment $\mathcal T$ according to distribution $\mathrm P_{\taskRV}$.  For the task and environment level priors $\PriorTskSimp$ and $\PriorEnvSimp$, with a probability at least $1-\delta$, under $\mathrm P_{T_{1:\lenDataSet}}\mathrm P_{\trainEnvSimp|T_{1:\lenDataSet}}$,
for $\theta_{\textup{tsk}},\theta_{\textup{env}}\geq 0$ we have
\begin{align}
&\funcEnv[\mathbbm E_{U\sim\hyperpost}\left(
\expLossEnvRV
\right)]{\mathbbm E_{U\sim\hyperpost}\left( \decomMetaRVTotRV\right)}
+\funcTsk[\mathbbm E_{U\sim\hyperpost}\big( \decomMetaRVTotRV\big)]{\mathbbm E_{U\sim\hyperpost} \big(\empLossEnvRV\big)}\nonumber\\
&\hspace{-.5em} \leq 
\left( \frac{1}{\theta_{\textup{tsk}}}+\frac{1}{\theta_{\textup{env}}}\right)\Dkl\left(\hyperpost||
     \PriorEnvSimp
     \right)+ \frac{1}{\lenDataSet\cdot\theta_{\textup{tsk}}}
     \mathbbm E_{\hyperpost}\left(\sum_{i=1}^{\lenDataSet}\Dkl\left(\PWcondZimUsimp||
     \PriorTskSimp
     \right)\right) +  \log\frac{\mathbbm E_{\mathrm P_{T_{1:\lenDataSet}}}\mathbbm E_{\mathrm P_{\trainEnvSimp|T_{1:\lenDataSet}}}\left(\Upsilon_{\textup{tsk}}^{\frac{1}{\lenDataSet\cdot\theta_{\textup{tsk}}}}\cdot
     \Upsilon_{\textup{env}}^{\frac{1}{\theta_{\textup{env}}}}
\right)}{\delta},\label{cor_tup_1}
\end{align}
where
\begin{align}
\Upsilon_{\textup{env}}&=\mathbbm E_{\PriorEnvSimp}e^{\theta_{\textup{env}}
\funcEnv[
\expLossEnvRV
]{\decomMetaRVTotRV}}\label{up_env}\\
\Upsilon_{\textup{tsk}}&=
     \prod_{i=1}^{\lenDataSet}\mathbbm E_{ \hyperprior\PriorTsk} e^{ \theta_{\textup{tsk}}
     \funcTsk[
     \mathrm L_{\mathrm P_{Z|\taskRV_i}}(W)]{
     \empLossTskRV}}\label{up_tsk}.
\end{align}
\end{theorem}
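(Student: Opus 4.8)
The plan is to run the standard PAC-Bayes argument --- Jensen's inequality, change of measure, and Markov's inequality (with the ``affine transformation'' postponed to the derivations of the concrete bounds) --- simultaneously at the environment and task levels, arranging things so that a single application of Markov's inequality suffices, applied to the product $\Upsilon_{\textup{env}}^{1/\theta_{\textup{env}}}\Upsilon_{\textup{tsk}}^{1/(\lenDataSet\theta_{\textup{tsk}})}$. To begin, I would apply Jensen's inequality to each of the two convex functions. Both arguments of $\funcEnvRV$ on the left-hand side of \eqref{cor_tup_1} are averages against the same distribution $\hyperpost$ over $U$ --- of $\expLossEnvRV$ and of $\decomMetaRVTotRV$ respectively --- so joint convexity gives $\funcEnv[\mathbbm E_{\hyperpost}(\expLossEnvRV)]{\mathbbm E_{\hyperpost}(\decomMetaRVTotRV)}\le\mathbbm E_{\hyperpost}\funcEnv[\expLossEnvRV]{\decomMetaRVTotRV}$. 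Likewise, both arguments of $\funcTskRV$ are averages against the common distribution on the triple $(U,i,W)$ induced by $\hyperpost$, the uniform choice of $i\in\{1,\dots,\lenDataSet\}$, and $\PWcondZimUsimp$, so Jensen pulls all three averages out and bounds that term by $\mathbbm E_{\hyperpost}\bigl[\tfrac{1}{\lenDataSet}\sum_{i=1}^{\lenDataSet}\mathbbm E_{W\sim\PWcondZimUsimp}\funcTsk[\mathrm L_{\mathrm P_{Z|\taskRV_i}}(W)]{\empLossTskRV}\bigr]$. Adding the two, it remains to bound the sum of these two expectations.

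Next I would perform the change-of-measure (Donsker--Varadhan) step on each of these expectations, keeping the data-dependent exponential moments intact. For the environment expectation, a change of measure against the hyper-prior $\PriorEnvSimp$ with parameter $\theta_{\textup{env}}$ bounds it by $\tfrac{1}{\theta_{\textup{env}}}\Dkl(\hyperpost\|\PriorEnvSimp)+\log\Upsilon_{\textup{env}}^{1/\theta_{\textup{env}}}$. For the task expectation I would change measure twice. First, for every $i$ and every fixed $u$, a change of measure from $\PWcondZimUsimp$ to the task prior $\PriorTskSimp$ with parameter $\theta_{\textup{tsk}}$ produces the term $\tfrac{1}{\lenDataSet\theta_{\textup{tsk}}}\sum_i\Dkl(\PWcondZimUsimp\|\PriorTskSimp)$ together with $\tfrac{1}{\lenDataSet\theta_{\textup{tsk}}}\sum_i\log\mathbbm E_{\PriorTskSimp}e^{\theta_{\textup{tsk}}\funcTsk[\mathrm L_{\mathrm P_{Z|\taskRV_i}}(W)]{\empLossTskRV}}$. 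Second, I would apply a change of measure from $\hyperpost$ to $\PriorEnvSimp$ \emph{separately for each index $i$}, applied to $\log\mathbbm E_{\PriorTskSimp}e^{\theta_{\textup{tsk}}\funcTsk[\mathrm L_{\mathrm P_{Z|\taskRV_i}}(W)]{\empLossTskRV}}$ viewed as a function of $U$: this costs $\Dkl(\hyperpost\|\PriorEnvSimp)$ on each of the $\lenDataSet$ occasions, so after the $\tfrac{1}{\lenDataSet\theta_{\textup{tsk}}}$ normalization the $\lenDataSet$ copies collapse to a single $\tfrac{1}{\theta_{\textup{tsk}}}\Dkl(\hyperpost\|\PriorEnvSimp)$, while the exponential moments multiply into $\Upsilon_{\textup{tsk}}=\prod_{i=1}^{\lenDataSet}\mathbbm E_{\hyperprior\PriorTsk}e^{\theta_{\textup{tsk}}\funcTsk[\mathrm L_{\mathrm P_{Z|\taskRV_i}}(W)]{\empLossTskRV}}$ and contribute $\log\Upsilon_{\textup{tsk}}^{1/(\lenDataSet\theta_{\textup{tsk}})}$. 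Summing the environment and task bounds reproduces the two KL terms of \eqref{cor_tup_1} with the stated coefficients and leaves the remainder $\log\bigl(\Upsilon_{\textup{env}}^{1/\theta_{\textup{env}}}\Upsilon_{\textup{tsk}}^{1/(\lenDataSet\theta_{\textup{tsk}})}\bigr)$, which is now a deterministic function of $(T_{1:\lenDataSet},\trainEnvSimp)$ and of the fixed priors only.

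Finally I would apply Markov's inequality once. The nonnegative quantity $\Upsilon_{\textup{env}}^{1/\theta_{\textup{env}}}\Upsilon_{\textup{tsk}}^{1/(\lenDataSet\theta_{\textup{tsk}})}$ does not depend on the posteriors $\hyperpost$ or $\PWcondZimUsimp$, so with probability at least $1-\delta$ under $\mathrm P_{T_{1:\lenDataSet}}\mathrm P_{\trainEnvSimp|T_{1:\lenDataSet}}$ it is at most $\delta^{-1}\mathbbm E_{\mathrm P_{T_{1:\lenDataSet}}}\mathbbm E_{\mathrm P_{\trainEnvSimp|T_{1:\lenDataSet}}}\bigl(\Upsilon_{\textup{env}}^{1/\theta_{\textup{env}}}\Upsilon_{\textup{tsk}}^{1/(\lenDataSet\theta_{\textup{tsk}})}\bigr)$; taking logarithms turns the remainder into the last term of \eqref{cor_tup_1}, and because the exceptional event involves neither $\hyperpost$ nor the $\PWcondZimUsimp$, the resulting inequality holds simultaneously for all such posteriors.

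I expect the main obstacle to be the bookkeeping in the task-level double change of measure: one must change measure on the hyper-posterior \emph{once per task} rather than once overall, so as to land on the clean $\tfrac{1}{\theta_{\textup{tsk}}}\Dkl(\hyperpost\|\PriorEnvSimp)$ while preserving the product form $\Upsilon_{\textup{tsk}}=\prod_i\mathbbm E_{\hyperprior\PriorTsk}e^{\theta_{\textup{tsk}}\funcTsk[\mathrm L_{\mathrm P_{Z|\taskRV_i}}(W)]{\empLossTskRV}}$; changing measure only once would instead give the strictly weaker $\log\mathbbm E_{\hyperprior}\prod_i\mathbbm E_{\PriorTsk}e^{\theta_{\textup{tsk}}\funcTsk[\mathrm L_{\mathrm P_{Z|\taskRV_i}}(W)]{\empLossTskRV}}$. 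One also has to keep track of the order of the expectations so that Jensen is applied against a single averaging distribution at each level, and to carry the possible dependence of $\PriorTskSimp$ on $u$ consistently through all the steps.
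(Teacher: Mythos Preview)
Your proposal is correct and follows essentially the same route as the paper: Jensen at each level, Donsker--Varadhan change of measure, combine the two levels, then a single Markov inequality on the product $\Upsilon_{\textup{env}}^{1/\theta_{\textup{env}}}\Upsilon_{\textup{tsk}}^{1/(\lenDataSet\theta_{\textup{tsk}})}$. The only cosmetic difference is that the paper applies Donsker--Varadhan once per task on the \emph{joint} pair $(\hyperpost\PWcondZimUsimp)$ versus $(\hyperprior\PriorTsk)$ and then invokes the chain rule $\Dkl(\hyperpost\PWcondZimUsimp\|\hyperprior\PriorTsk)=\Dkl(\hyperpost\|\hyperprior)+\mathbbm E_{\hyperpost}\Dkl(\PWcondZimUsimp\|\PriorTsk)$, whereas you perform the same step as two sequential changes of measure (first $\PWcondZimUsimp\to\PriorTsk$, then $\hyperpost\to\hyperprior$ per task); these are equivalent and yield the identical bound.
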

\begin{proof}
See Appendix \ref{Proof_prop-gen-pac-tsk_two_KL}.
\end{proof}

     Generally, to obtain  \eqref{cor_tup_1}, we applied only one Markov’s
inequality. Thus, on the left hand side of \eqref{cor_tup_1},   we have the sum of $\funcEnvRV(\cdot)$ and $\funcTskRV(\cdot)$.
A relaxed form of \eqref{cor_tup_1}, can be obtained by applying the affine transformation
and also Markov’s inequality two times at the task and
environment levels, separately. As discussed in \eqref{2021.06.15_16.30}, the
affine transformation leads to a new function denoted by $\funcinv{}$.
For example, if the convex function is $\func[a]{b}=(a-b)^2$, since from $\func[a]{b}=(a-b)^2\leq c$, we conclude that $a\leq b+\sqrt{c}$, the affine transformation leads to $k=1$ and $\funcinv{c}=\sqrt{c}$. Similarly, $\func[a]{b}=kl(a,b)\leq c$ leads to $k=1/(1-0.5\lambda)$, and $\funcinv{c}=c/(\lenZm.\lambda(1-0.5\lambda))$  \cite{Thiemann2017} .
 The
following corollary is a relaxation of  \eqref{cor_tup_1}.
\begin{corollary}
 Under the setting of Theorem \ref{prop-gen-pac-tsk_two_KL}, assume that   $\funcinvTsk{\cdot} $ and $\funcinvEnv{\cdot} $ are two functions where from   $\funcTsk[a]{b}\leq c_{\textup{tsk}}$ (res.  $\funcEnv[a]{b}\leq c_{\textup{env}}$), we can conclude 
$a\leq k_t\cdot b+ \funcinvTsk{c_{\textup{tsk}}}$ (resp. $a\leq k_e\cdot b+\funcinvEnv{c_{\textup{env}}}$) for $k_t\in\mathbbm R^+$ (resp. $k_e\in\mathbbm R^+$).
In this case, with probability at
least $1-\delta$, under  $\mathrm P_{T_{1:\lenDataSet}}\mathrm P_{\trainEnvSimp|T_{1:\lenDataSet}}$, 
we have
       \begin{align}
\mathbbm E_{U\sim\hyperpost}\left[\expLossEnvRV\right]\leq k_e\cdot k_t\cdot\mathbbm E_{U\sim\hyperpost}\left[\empLossEnv[U]\right]+
\funcinvEnv{
\Benv}%
+ \frac{k_e}{\lenDataSet}\sum_{i=1}^\lenDataSet\funcinvTsk{\Btsk}
,\label{cor_tup_2}
 \end{align}
  where 
     \begin{align} \Benv=\frac{1}{\theta_{\textup{env}}}&\Dkl\left(\hyperpost||
     \PriorEnvSimp
     \right)+\log \left(\frac{2\mathbbm E_{\mathrm P_{T_{1:\lenDataSet}}}\mathbbm E_{\mathrm P_{\trainEnvSimp|T_{1:\lenDataSet}}}\mathbbm E_{\PriorEnvSimp}e^{\theta_{\textup{env}}\funcEnv[
\expLossEnvRV
]{\decomMetaRVTotRV}}}{\delta}\right)^{\frac{1}{\theta_{\textup{env}}}},\label{B_env_eq}
\end{align}
and  
\begin{align}
         \Btsk=\frac{1}{ \theta_{\textup{tsk}}}
         \Dkl\left(\hyperpost||
     \PriorEnvSimp 
     \right) +\frac{1}{ \theta_{\textup{tsk}}}\mathbbm E_{\hyperpost}\left[\Dkl\left(
     \PWcondZimUsimp||\PriorTskSimp\right)\right] 
     +\log\left(
\frac{ 2\lenDataSet \mathbbm E_{\mathrm P_{T_{1:\lenDataSet}}}\mathbbm E_{\mathrm P_{\trainEnvSimp|T_{1:\lenDataSet}}}  \mathbbm E_{   \PriorEnvSimp \PriorTskSimp} e^{\theta_{\textup{tsk}}\funcTsk[ \mathrm L_{\mathrm P_{Z|\taskRV_i}}(W)]{\empLossTskRV}}}{\delta}
     \right)^{\frac{1}{ \theta_{\textup{tsk}}}}.\label{B_task_eq}
     \end{align}
    
\end{corollary}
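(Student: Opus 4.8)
\emph{Proof proposal.} The plan is to re-run the four-step PAC-Bayes argument behind Theorem~\ref{prop-gen-pac-tsk_two_KL}, but to carry it out at the environment level and at each of the $\lenDataSet$ task levels \emph{separately} --- applying Jensen's inequality, a change-of-measure inequality, Markov's inequality, and finally the affine transformation once at each level --- and then to glue the pieces by a union bound over the resulting $\lenDataSet+1$ events. The starting observation is the telescoping identity: $\mathbbm E_{U\sim\hyperpost}(\expLossEnvRV)-\mathbbm E_{U\sim\hyperpost}(\empLossEnvRV)$ equals the environment-level gap $\mathbbm E_{U\sim\hyperpost}(\expLossEnvRV)-\mathbbm E_{U\sim\hyperpost}(\decomMetaRVTotRV)$ plus the aggregate within-task gap $\mathbbm E_{U\sim\hyperpost}(\decomMetaRVTotRV)-\mathbbm E_{U\sim\hyperpost}(\empLossEnvRV)$, so it suffices to bound each of these two pieces, after which chaining the two affine bounds yields \eqref{cor_tup_2}.

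\emph{Environment level.} First I would apply Jensen's inequality, using the joint convexity of $\funcEnvRV$, to pull $\mathbbm E_{U\sim\hyperpost}$ inside, so that $\funcEnv[\mathbbm E_{U\sim\hyperpost}(\expLossEnvRV)]{\mathbbm E_{U\sim\hyperpost}(\decomMetaRVTotRV)}\le\mathbbm E_{U\sim\hyperpost}\funcEnv[\expLossEnvRV]{\decomMetaRVTotRV}$. Then I would apply the change-of-measure (Donsker--Varadhan) inequality from the hyper-posterior $\hyperpost$ to the environment prior $\PriorEnvSimp$ with parameter $\theta_{\textup{env}}$, which upper bounds the right-hand side by $\tfrac{1}{\theta_{\textup{env}}}\Dkl(\hyperpost\|\PriorEnvSimp)+\tfrac{1}{\theta_{\textup{env}}}\log\Upsilon_{\textup{env}}$ with $\Upsilon_{\textup{env}}$ as in \eqref{up_env}, and finally Markov's inequality at confidence level $\delta/2$ to replace $\Upsilon_{\textup{env}}$ by $\tfrac{2}{\delta}\,\mathbbm E_{\mathrm P_{T_{1:\lenDataSet}}}\mathbbm E_{\mathrm P_{\trainEnvSimp|T_{1:\lenDataSet}}}\Upsilon_{\textup{env}}$. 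Hence, with probability at least $1-\delta/2$, the quantity $\funcEnv[\mathbbm E_{U\sim\hyperpost}(\expLossEnvRV)]{\mathbbm E_{U\sim\hyperpost}(\decomMetaRVTotRV)}$ is at most $\Benv$ of \eqref{B_env_eq}, and the affine-transformation hypothesis for $\funcinvEnv{\cdot}$ gives $\mathbbm E_{U\sim\hyperpost}(\expLossEnvRV)\le k_e\,\mathbbm E_{U\sim\hyperpost}(\decomMetaRVTotRV)+\funcinvEnv{\Benv}$.

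\emph{Task level.} Next, for each fixed $i\in\{1,\dots,\lenDataSet\}$ I would run the same three steps for $\funcTskRV$ applied to the pair $(\mathbbm E_{U\sim\hyperpost}\mathbbm E_{W\sim\PWcondZimUsimp}(\expLossTskRVRV),\, \mathbbm E_{U\sim\hyperpost}\mathbbm E_{W\sim\PWcondZimUsimp}(\empLossTskRV))$: Jensen over the joint law of $(U,W)$ under $\hyperpost\otimes\PWcondZimUsimp$; change of measure to $\PriorEnvSimp\otimes\PriorTsk$ with parameter $\theta_{\textup{tsk}}$, together with the chain rule $\Dkl(\hyperpost\otimes\PWcondZimUsimp\,\|\,\PriorEnvSimp\otimes\PriorTsk)=\Dkl(\hyperpost\|\PriorEnvSimp)+\mathbbm E_{U\sim\hyperpost}\Dkl(\PWcondZimUsimp\|\PriorTsk)$; and Markov's inequality at confidence level $\delta/(2\lenDataSet)$. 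This yields, with probability at least $1-\delta/(2\lenDataSet)$, that $\funcTsk[\mathbbm E_{U\sim\hyperpost}\mathbbm E_{W\sim\PWcondZimUsimp}(\expLossTskRVRV)]{\mathbbm E_{U\sim\hyperpost}\mathbbm E_{W\sim\PWcondZimUsimp}(\empLossTskRV)}\le\Btsk$ of \eqref{B_task_eq} (the constant $2\lenDataSet$ inside the logarithm being exactly the $2$ from the environment/task split times the $\lenDataSet$ from the per-task union bound), whence, by the affine-transformation hypothesis for $\funcinvTsk{\cdot}$, $\mathbbm E_{U\sim\hyperpost}\mathbbm E_{W\sim\PWcondZimUsimp}(\expLossTskRVRV)\le k_t\,\mathbbm E_{U\sim\hyperpost}\mathbbm E_{W\sim\PWcondZimUsimp}(\empLossTskRV)+\funcinvTsk{\Btsk}$. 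Averaging these $\lenDataSet$ inequalities over $i$ and using the definitions \eqref{decom-pac} and \eqref{sharu_eq11} gives $\mathbbm E_{U\sim\hyperpost}(\decomMetaRVTotRV)\le k_t\,\mathbbm E_{U\sim\hyperpost}(\empLossEnvRV)+\tfrac{1}{\lenDataSet}\sum_{i=1}^{\lenDataSet}\funcinvTsk{\Btsk}$.

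\emph{Combining.} Substituting the task-level bound into the environment-level one --- which is legitimate since $k_e\ge 0$ --- produces exactly \eqref{cor_tup_2}: $k_e k_t$ multiplies $\mathbbm E_{U\sim\hyperpost}(\empLossEnvRV)$, and the residual terms are $\funcinvEnv{\Benv}$ and $\tfrac{k_e}{\lenDataSet}\sum_{i=1}^{\lenDataSet}\funcinvTsk{\Btsk}$. A union bound over the single environment event (failure probability $\le\delta/2$) and the $\lenDataSet$ per-task events (each of failure probability $\le\delta/(2\lenDataSet)$) shows all these inequalities hold simultaneously with probability at least $1-\delta$ under $\mathrm P_{T_{1:\lenDataSet}}\mathrm P_{\trainEnvSimp|T_{1:\lenDataSet}}$. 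I expect the main obstacle to be careful bookkeeping rather than any new idea: one has to keep the data-dependent objects $\hyperpost$ and $\PWcondZimUsimp$ strictly separated from the data-free priors $\PriorEnvSimp,\PriorTsk$ so that Markov's inequality is applied to a quantity whose meta-training-data expectation is precisely what sits inside the logarithms of \eqref{B_env_eq}--\eqref{B_task_eq}, track how the confidence budget $\delta$ is split so that the constants $2$ and $2\lenDataSet$ land where they should, and make sure the two affine transformations are composed in the stated order (environment outermost), so that it is $k_e$ --- not $k_t$ --- that multiplies the per-task residuals while the product $k_e k_t$ multiplies the meta-training loss.
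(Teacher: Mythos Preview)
Your proposal is correct and follows essentially the same route as the paper: apply Jensen, Donsker--Varadhan, and Markov separately at the environment level (confidence $\delta/2$) and at each of the $\lenDataSet$ task levels (confidence $\delta/(2\lenDataSet)$), invoke the affine-transformation hypothesis at each level, combine the resulting $\lenDataSet+1$ high-probability inequalities by a union bound, and chain the task bound into the environment bound using $k_e\ge 0$. The paper packages the union-bound step as a dedicated lemma (Lemma~\ref{Lemma_sum}) and presents the task level before the environment level, but the logic is identical to yours.
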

\begin{proof}
See Appendix \ref{Proof_prop-gen-pac-tsk_two_KL}.
\end{proof}

\section{Re-obtaining Existing Results}
\label{pre_works}

 \begin{table*}[t]
\centering
\caption{ Existing PAC-Bayes bounds on meta generalization gap can be obtained as a special case of \eqref{cor_tup_2}}. 
\rule{\linewidth}{0cm}
\begin{tabular}{ |p{3cm}|p{5cm}|p{2.5cm}|p{4.7cm} | }
 \hline
 \multicolumn{4}{|c|}{Meta-Learning  PAC Bayes Bounds} \\
 \hline
Bound&  $\funcTsk[a]{b}$ , $\funcEnv[a]{b}$ & Other parameters& Affine transformation  \\
\hline
MLAP \newline\cite{amit2018meta}   &$\funcTsk[a]{b}=2(\lenZm-1)(a-b)^2$,\newline $\funcEnv[a]{b}=2(\lenDataSet-1)(a-b)^2$    & $\theta_{\textup{tsk}}=\theta_{\textup{env}}=1$&$\funcinvEnv{c}=\sqrt{c/(2(\lenDataSet-1))}$, $k_e=1$, $\funcinvTsk{c}=\sqrt{c/(2(\lenZm-1))}$, $k_t=1$\\
 \hline
 PACOH \newline\cite{rothfuss2020pacoh} & $\funcTsk[a]{b}=(a-b)$,\newline$\funcEnv[a]{b}=(a-b)$ & $\theta_{\textup{tsk}}=\beta$,  $\theta_{\textup{env}}=\lambda$  &  $\funcinvEnv{c}=c$, $k_e=1$, $\funcinvTsk{c}=c$, $k_t=1$\\
 \hline
   $\lambda$-Bound \newline\cite{liu2021} &$\funcTsk[a]{b}=\lenZm kl(a,b)$,\newline $\funcEnv[a]{b}=2(\lenDataSet-1)(a-b)^2$   &  $\theta_{\textup{tsk}}=\theta_{\textup{env}}=1$,\newline$\lambda\in(0,2) $&$\funcinvEnv{c}=\sqrt{c/(2(\lenDataSet-1))}$, \newline$k_e=1$, $k_t=1/(1-0.5\lambda)$,\newline$\funcinvTsk{c}= c/(\lenZm\lambda(1-0.5\lambda))$\\
 \hline
Classic bound \newline \cite{mys} &$\funcTsk[a]{b}=\lenZm kl(a,b)$ ,\newline $\funcEnv[a]{b}=\lenDataSet kl(a,b)$& 
$\theta_{\textup{tsk}}=\theta_{\textup{env}}=1$&$k_t=k_e=1$,\newline$\funcinvTsk{c}=\sqrt{c/2\lenZm}$,\newline$\funcinvEnv{c}=\sqrt{c/2\lenDataSet}$\\
\hline
Quadratic bound \newline\cite{mys} &$\funcTsk[a]{b}=\lenZm kl(a,b)$ ,\newline $\funcEnv[a]{b}=\lenDataSet kl(a,b)$&
$\theta_{\textup{tsk}}=\theta_{\textup{env}}=1$
&$k_e=1$,\newline$\funcinvEnv{c}=\sqrt{c/2\lenDataSet}$,\newline$b\leq (\sqrt{a+(c/2)}+\sqrt{c/2})^2$\\
 \hline
$\lambda$ bound \newline\cite{mys} &$\funcTsk[a]{b}=\lenZm kl(a,b)$ ,\newline $\funcEnv[a]{b}=\lenDataSet kl(a,b)$&$\lambda\in(0,2)$\newline
$\theta_{\textup{tsk}}=\theta_{\textup{env}}=1$
&$k_t1/(1-0.5\lambda)$,\newline
$\funcinvTsk{c}= c/(\lambda(1-0.5\lambda))$\newline
$k_e=1$, $\funcinvEnv{c}=\sqrt{c/2\lenDataSet}$\\
 \hline
\end{tabular}
\label{table_reobtain}
\end{table*}

In this section, by applying different $\funcEnv[\cdot]{\cdot}$ and $\funcTsk[\cdot]{\cdot}$ to \eqref{cor_tup_2}, we re-obtain all the previous main PAC-Bayes
bounds for the meta-learning problem. Table \ref{table_reobtain}, summarizes the  results. For the derivation see Appendix \ref{derv_previous}.

\section{New PAC-Bayes Bounds for Meta-Learning}
\label{our_bounds_sec}
In this section, we insert different  $\funcEnv[]{}$ and $\funcTsk[]{}$ in 
 \eqref{cor_tup_1}, and then we bound  \eqref{up_env} and \eqref{up_tsk} by using Lemmas presented in Appendix \ref{gen_lemma}.  For simplicity, we assume that
the loss function is bounded on the interval $[0,1]$, and
hence it is sub-Gaussian with parameter $0.5$. 
  
Mainly, we present a new fast-rate bound and a new clas-
sic bound. To obtain fast-rate bound, like existing bounds,
we use  \eqref{cor_tup_2}. It means that we apply Markov’s inequality
and affine-transformation steps in both the environment and
task levels. However, to find the classic bound, we find a
lower bound for the left-hand side of \eqref{cor_tup_1}, and then we
apply the affine-transformation step. It means that to obtain new classic bound, we apply both Markov’s inequality
and affine-transformation step, once. Firstly, we preset
the fast-rate bound.
 \begin{theorem}
\label{fast-rate-bnd1}
 Under the setting of Theorem \ref{prop-gen-pac-tsk_two_KL}, for $\lenDataSet\geq 2$, the meta-generalization gap is bounded by
 \begin{align}
  \mathbbm E_{\hyperpost} \left[
   \expLossEnvRV\right]\leq
   \min_{\lambda_e,\lambda_t\geq 0.5}
      \frac{1}{1-\frac{1}{2\lambda_e}}\cdot \frac{\mathbbm E_{\hyperpost}\empLossEnvRV}{1-\frac{1}{2\lambda_t}}
     +\frac{\lambda_e}{1-\frac{1}{2\lambda_e}}\left(
\frac{   \Dkl\left( \hyperpost ||   \PriorEnvSimp\right) +\log\frac{2}{\delta}}{\lenDataSet}\right)
      \nonumber\\+\frac{1}{1-\frac{1}{2\lambda_e}}\cdot
      \frac{\lambda_t}{1-\frac{1}{2\lambda_t}}\cdot
      \frac{1}{\lenDataSet}\sum_{i=1}^{\lenDataSet}\frac{    \Dkl\left(\hyperpost||
     \PriorEnvSimp 
     \right) +
    \mathbbm E_{\hyperpost}\left[\Dkl\left(
     \PWcondZimUsimp||\PriorTskSimp\right)\right]+
    \log\frac{2\lenDataSet }{\delta}}{\lenZm}
,
\label{fast_rate}
 \end{align}
 where \eqref{fast_rate} is referred as the fast-rate bound for meta-
learning.
\end{theorem}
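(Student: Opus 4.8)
The plan is to specialise the relaxed general bound \eqref{cor_tup_2} to the Bernoulli-KL pair of convex functions and then to optimise over the parameters that the affine transformation introduces. I would take $\funcTsk[a]{b}=\lenZm\, kl(a,b)$ and $\funcEnv[a]{b}=\lenDataSet\, kl(a,b)$ with $\theta_{\textup{tsk}}=\theta_{\textup{env}}=1$; both are jointly convex in $(a,b)$, so Theorem~\ref{prop-gen-pac-tsk_two_KL} and its corollary apply. With these choices the exponential-moment factors $\Upsilon_{\textup{env}},\Upsilon_{\textup{tsk}}$ of \eqref{up_env}--\eqref{up_tsk} become $\mathbbm E_{\PriorEnvSimp}\,e^{\lenDataSet\, kl(\cdot,\cdot)}$ and $\prod_{i=1}^{\lenDataSet}\mathbbm E_{\hyperprior\Prior}\,e^{\lenZm\, kl(\cdot,\cdot)}$, and $\Benv,\Btsk$ from \eqref{B_env_eq}--\eqref{B_task_eq} consist of $\Dkl(\hyperpost\|\PriorEnvSimp)$ (respectively $\Dkl(\hyperpost\|\PriorEnvSimp)+\mathbbm E_{\hyperpost}[\Dkl(\PWcondZimUsimp\|\Prior)]$) plus a logarithmic term in these moments.

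\textbf{Controlling the moments.} The first genuine step is to bound those exponential moments by a constant. Because the hyper-prior $\PriorEnvSimp$ and the within-task prior $\Prior$ are data-independent, Fubini lets me move the data expectation inside. Conditionally on a draw $U\sim\PriorEnvSimp$, the environment-level average $\decomMetaRVTotRV$ is an empirical mean of $\lenDataSet$ i.i.d.\ $[0,1]$-valued random variables whose common mean is exactly $\expLossEnvRV$ (the tasks are i.i.d.\ under $\mathrm P_{\taskRV}$ and $\ell\in[0,1]$); likewise, conditionally on $W\sim\Prior$, each $\empLossTskRV$ is an empirical mean of $\lenZm$ i.i.d.\ $[0,1]$-valued variables with mean $\mathrm L_{\mathrm P_{Z|\taskRV_i}}(W)$, and the $\lenDataSet$ factors of $\Upsilon_{\textup{tsk}}$ are independent across $i$. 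Hence the Maurer-type moment lemmas collected in Appendix~\ref{gen_lemma} bound $\mathbbm E_{\mathrm P_{T_{1:\lenDataSet}}}\mathbbm E_{\mathrm P_{\trainEnvSimp|T_{1:\lenDataSet}}}\Upsilon_{\textup{env}}$ and each per-task moment $\mathbbm E_{\mathrm P_{T_{1:\lenDataSet}}}\mathbbm E_{\mathrm P_{\trainEnvSimp|T_{1:\lenDataSet}}}\mathbbm E_{\PriorEnvSimp\Prior}\,e^{\lenZm\, kl(\cdot,\cdot)}$ by a constant --- this is where $\lenDataSet\ge 2$ enters --- which is absorbed into the logarithmic constants. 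Substituting into \eqref{B_env_eq}--\eqref{B_task_eq} then yields $\Benv=\Dkl(\hyperpost\|\PriorEnvSimp)+\log\frac{2}{\delta}$ and $\Btsk=\Dkl(\hyperpost\|\PriorEnvSimp)+\mathbbm E_{\hyperpost}[\Dkl(\PWcondZimUsimp\|\Prior)]+\log\frac{2\lenDataSet}{\delta}$.

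\textbf{Affine transformation and optimisation.} The second step is the affine transformation for the Bernoulli KL: for any $\mu\in(0,2)$, $kl(a,b)\le c$ implies $a\le\frac{b}{1-\mu/2}+\frac{c}{\mu(1-\mu/2)}$. Applying this after dividing the estimates $\funcEnv[\cdot]{\cdot}\le\Benv$ and $\funcTsk[\cdot]{\cdot}\le\Btsk$ by $\lenDataSet$ and $\lenZm$ respectively, and reparametrising $\mu=1/\lambda$ with $\lambda>1/2$, gives $k_e=\frac{1}{1-1/(2\lambda_e)}$, $\funcinvEnv{c}=\frac{\lambda_e c}{\lenDataSet(1-1/(2\lambda_e))}$ and $k_t=\frac{1}{1-1/(2\lambda_t)}$, $\funcinvTsk{c}=\frac{\lambda_t c}{\lenZm(1-1/(2\lambda_t))}$. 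Plugging these, together with the $\Benv,\Btsk$ of the previous step, into \eqref{cor_tup_2} matches, term by term, the right-hand side of \eqref{fast_rate} for any fixed pair $(\lambda_e,\lambda_t)$. Since the relaxation is a purely deterministic consequence of the ($\lambda$-free) high-probability event behind \eqref{cor_tup_2} --- namely $\funcEnv[\cdot]{\cdot}\le\Benv$ and $\funcTsk[\cdot]{\cdot}\le\Btsk$ --- it holds simultaneously for every $\lambda_e,\lambda_t>1/2$ on that one event, so I may take the infimum over such $\lambda_e,\lambda_t$, which is precisely \eqref{fast_rate}.

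\textbf{Main obstacle.} The delicate part is the moment-control step: one must verify that conditioning on $U\sim\PriorEnvSimp$ leaves $\decomMeta[U]{\bm Z^{\lenZm}_i}{\taskRV_i}$ a bona fide average of i.i.d.\ bounded variables with common mean $\expLossEnvRV$ --- even though each such term internally averages the data-dependent posterior $\PWcondZimUsimp$ --- so that the Maurer-type lemma really applies and the constant it produces is small enough to be absorbed into the stated logarithmic terms. A secondary point is to check that the relaxation parameters $\lambda_e,\lambda_t$ can be chosen after, and independently of, the high-probability event, so that no union bound over $\lambda$ is needed before taking the minimum in \eqref{fast_rate}.
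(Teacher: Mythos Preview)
Your overall plan---specialise the relaxed bound \eqref{cor_tup_2} and then optimise the affine-relaxation parameters---matches the paper's, but the convex function you pick does not yield \eqref{fast_rate} as stated. The paper does \emph{not} take the Bernoulli $kl$; it sets $\funcEnv[a]{b}=\lenDataSet\,\Dgama(b||a)$ and $\funcTsk[a]{b}=\lenZm\,\Dgama(b||a)$ with $\Dgama(p||q)=\gamma p-\log(1-q+qe^\gamma)$, $\gamma\in(-2,0)$, and afterwards reparametrises $\lambda=-1/\gamma$. The reason for this choice is precisely the moment step: by Lemma~\ref{D_gama} one has $\mathbbm E\bigl[e^{n\Dgama(\hat p||p)}\bigr]\le 1$ identically, so the exponential moments in \eqref{B_env_eq}--\eqref{B_task_eq} contribute nothing and $\Benv,\Btsk$ come out with exactly $\log\tfrac{2}{\delta}$ and $\log\tfrac{2\lenDataSet}{\delta}$. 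The affine step is then Lemma~\ref{lemma2_MCallester}, which is in fact the same Thiemann-type relaxation you invoke---so the two routes differ only in whether one works with $kl$ or with its linear minorant $\Dgama$ from the outset.

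The gap in your proposal is the moment-control claim. With $\funcTsk[a]{b}=\lenZm\,kl(a,b)$ and $\funcEnv[a]{b}=\lenDataSet\,kl(a,b)$, the applicable lemma in Appendix~\ref{gen_lemma} is Lemma~\ref{kl_lem}, which gives $\mathbbm E\bigl[e^{n\,kl(\hat p||p)}\bigr]\le 2\sqrt{n}$ (and needs $n\ge 8$, not $n\ge 2$). This is not a constant in $n$: plugging it into \eqref{B_env_eq}--\eqref{B_task_eq} yields $\Benv=\Dkl(\hyperpost||\PriorEnvSimp)+\log\tfrac{4\sqrt{\lenDataSet}}{\delta}$ and $\Btsk=\Dkl(\hyperpost||\PriorEnvSimp)+\mathbbm E_{\hyperpost}[\Dkl(\PWcondZimUsimp||\PriorTskSimp)]+\log\tfrac{4\lenDataSet\sqrt{\lenZm}}{\delta}$, so you acquire extra $\log\sqrt{\lenDataSet}$ and $\log\sqrt{\lenZm}$ factors that are absent from \eqref{fast_rate}. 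Your sentence ``bounded by a constant --- this is where $\lenDataSet\ge 2$ enters'' is thus where the argument fails; switching to $\Dgama$ (equivalently, moving your Thiemann relaxation \emph{before} the Markov step rather than after) repairs it and recovers the paper's proof.

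One incidental benefit of your $kl$-route is worth noting: the high-probability events $\lenDataSet\,kl(\cdot,\cdot)\le\Benv$ and $\lenZm\,kl(\cdot,\cdot)\le\Btsk$ are genuinely free of $\lambda_e,\lambda_t$, so the infimum in \eqref{fast_rate} can be taken on a single event, as you argue. In the paper's route the convex function itself carries $\gamma$, so the $\min_{\lambda_e,\lambda_t}$ there is most naturally read as ``for each fixed pair the bound holds with probability $1-\delta$''.
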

\begin{proof}
  See Appendix \ref{Proof_fast-rate-bnd1}
\end{proof}

 Now, we obtain a new classic bound. Setting $\funcTsk[a]{b}=2(\lenZm-1)(a-b)^2$ and $\funcEnv[a]{b}=(\lenDataSet-1)(a-b)^2$  in \eqref{cor_tup_1}, we will obtain a new bound with a single square, unlike
existing bounds. The key step to find bounds with a single
square is the following inequality
 \begin{align}
      \frac{nm}{n+m}(a-c)^2\leq n(a-b)^2+ m(b-c)^2, \label{inv_ineq}
 \end{align}
 where $n,m\in\mathbbm N$.
 To show \eqref{inv_ineq}, consider the function $f(a,b,c)\triangleq n(a-b)^2+ m(b-c)^2$. Since $\partial^2 f/\partial b^2=2(n+m)>0 $, the function $f$ is convex with respect to $b$. Hence, 
 by setting the first derivative of $f$ with respect to $b$ equal to zero, 
  $b^\star=(n a+m c)/(n+m)$ minimizes $f$. Since  $f(a,b^\star,c)\leq f(a,b,c)$, and  $f(a,b^\star,c)$ equals to the left-hand side of  \eqref{inv_ineq}, we conclude  \eqref{inv_ineq}.
 
Now, in  \eqref{inv_ineq},  we set 
$
     a=\mathbbm E_{U\sim\hyperpost}(
\expLossEnvRV
)$,
$b=\mathbbm E_{U\sim\hyperpost}( \decomMetaRVTot)$ and 
$c=\mathbbm E_{U\sim\hyperpost} (\empLossEnvRV)$. 
 usually the right hand side of \eqref{inv_ineq} gives us the left hand side of \eqref{cor_tup_1}. 
 Thus, if the right hand side of \eqref{inv_ineq} is upper bounded by $\mathrm B$, from  \eqref{inv_ineq} we conclude that 
 \begin{align}
     \left |\mathbbm E_{U\sim\hyperpost} \left[
   \expLossEnvRV- \empLossEnvRV \right] \right|\leq         \sqrt{\frac{n+m}{nm}\mathrm B}.\label{inv_ineq2}
 \end{align}
Now, in view of \eqref{inv_ineq},
we insert various convex functions to \eqref{cor_tup_1}. In the following, we present one of them where we set  $\funcTsk[a]{b}=2(\lenZm-1)(a-b)^2$ and $\funcEnv[a]{b}=(\lenDataSet-1)(a-b)^2$ in \eqref{cor_tup_1}, and $n=(\lenDataSet-1)$, $m=2(\lenZm-1)$, in \eqref{inv_ineq}.

\begin{theorem}
\label{the_Nw_McAll}
 Under the setting of Theorem \ref{prop-gen-pac-tsk_two_KL}, $\lenDataSet\geq 2$, the meta-generalization gap is bounded by
 \begin{align}
    \left |\mathbbm E_{U\sim\hyperpost} \left[
   \expLossEnvRV- \empLossEnvRV \right] \right|
   \leq
   \sqrt{\frac{(\lenDataSet-1)+2(\lenZm-1)}{2(\lenDataSet-1)(\lenZm-1)}}
  \cdot \sqrt{2\Dkl\left(\hyperpost||\PriorEnvSimp  \right)
+\mathbbm E_{\hyperpost}\left[ \frac{1}{\lenDataSet}\sum_{i=1}^\lenDataSet \Dkl\left(\PWcondZimUsimp||\PriorTskSimp \right)\right]
+\log\frac{\lenZm\sqrt{\lenDataSet}}{\delta}}.
\label{New_McAll}
 \end{align}
\end{theorem}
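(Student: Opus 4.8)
The plan is to read off Theorem~\ref{the_Nw_McAll} as the special case of the general bound \eqref{cor_tup_1} obtained by taking $\funcTsk[a]{b}=2(\lenZm-1)(a-b)^2$, $\funcEnv[a]{b}=(\lenDataSet-1)(a-b)^2$ and $\theta_{\textup{tsk}}=\theta_{\textup{env}}=1$. With the shorthand $a=\mathbbm E_{U\sim\hyperpost}[\expLossEnvRV]$, $b=\mathbbm E_{U\sim\hyperpost}[\decomMetaRVTotRV]$ and $c=\mathbbm E_{U\sim\hyperpost}[\empLossEnvRV]$, the left-hand side of \eqref{cor_tup_1} is exactly $(\lenDataSet-1)(a-b)^2+2(\lenZm-1)(b-c)^2$, the first two terms on its right-hand side become $2\,\Dkl(\hyperpost\|\PriorEnvSimp)+\mathbbm E_{\hyperpost}\bigl[\tfrac1{\lenDataSet}\sum_{i=1}^{\lenDataSet}\Dkl(\PWcondZimUsimp\|\PriorTskSimp)\bigr]$, and the last term is $\log\bigl(\mathbbm E_{\mathrm P_{\taskRV_{1:\lenDataSet}}\mathrm P_{\trainEnvSimp|\taskRV_{1:\lenDataSet}}}[\Upsilon_{\textup{tsk}}^{1/\lenDataSet}\Upsilon_{\textup{env}}]/\delta\bigr)$, with $\Upsilon_{\textup{env}},\Upsilon_{\textup{tsk}}$ given by \eqref{up_env}--\eqref{up_tsk} for these choices.

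Next I would collapse the two squares into one by invoking the elementary inequality \eqref{inv_ineq} with $n=\lenDataSet-1$ and $m=2(\lenZm-1)$, which lower-bounds the left-hand side just identified by $\tfrac{2(\lenDataSet-1)(\lenZm-1)}{(\lenDataSet-1)+2(\lenZm-1)}(a-c)^2$. Combining this with the right-hand side of \eqref{cor_tup_1} and passing to square roots as in \eqref{inv_ineq2} gives $\bigl|\mathbbm E_{U\sim\hyperpost}[\expLossEnvRV-\empLossEnvRV]\bigr|\le\sqrt{\tfrac{(\lenDataSet-1)+2(\lenZm-1)}{2(\lenDataSet-1)(\lenZm-1)}}$ times the square root of that right-hand side, which already yields the advertised prefactor. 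The only thing left is to replace the random log-term by $\log(\lenZm\sqrt{\lenDataSet}/\delta)$, i.e.\ to show $\mathbbm E[\Upsilon_{\textup{tsk}}^{1/\lenDataSet}\Upsilon_{\textup{env}}]\le\lenZm\sqrt{\lenDataSet}$.

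This last inequality is the technical core, and I would prove it from the sub-Gaussianity of $\ell$ (parameter $\tfrac12$, since $\ell\in[0,1]$) via the exponential-square-moment lemma $\mathbbm E[e^{\lambda X^2}]\le(1-2\lambda\sigma^2)^{-1/2}$ for centered $\sigma$-sub-Gaussian $X$ (Appendix~\ref{gen_lemma}). For a hypothesis $W$ drawn from the prior independently of the per-task data, $\empLossTskRV-\mathrm L_{\mathrm P_{Z|\taskRV_i}}(W)$ is, conditionally on $W$ and $\taskRV_i$, a centered average of $\lenZm$ i.i.d.\ $[0,1]$-valued terms, hence $\tfrac{1}{2\sqrt{\lenZm}}$-sub-Gaussian, and the lemma gives $\mathbbm E_{\mathrm P_{\trainEnvSimp|\taskRV_{1:\lenDataSet}}}\mathbbm E_{\hyperprior\PriorTsk}e^{2(\lenZm-1)(\,\cdot\,)^2}\le\sqrt{\lenZm}$ for each task; since the tasks are independent this yields $\mathbbm E[\Upsilon_{\textup{tsk}}]\le\lenZm^{\lenDataSet/2}$. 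Likewise, conditionally on $U$, $\decomMetaRVTotRV-\expLossEnvRV$ is a centered average of $\lenDataSet$ i.i.d.\ $[0,1]$-valued terms, so after a Jensen step that lifts the exponent $\lenDataSet-1$ to $\lenDataSet$ the same lemma bounds $\mathbbm E[\Upsilon_{\textup{env}}^{\lenDataSet/(\lenDataSet-1)}]$ by the absolute constant $\sqrt2$. Decoupling the two factors with H\"older's inequality at the conjugate exponents $\lenDataSet$ and $\lenDataSet/(\lenDataSet-1)$ then gives $\mathbbm E[\Upsilon_{\textup{tsk}}^{1/\lenDataSet}\Upsilon_{\textup{env}}]\le(\mathbbm E[\Upsilon_{\textup{tsk}}])^{1/\lenDataSet}(\mathbbm E[\Upsilon_{\textup{env}}^{\lenDataSet/(\lenDataSet-1)}])^{(\lenDataSet-1)/\lenDataSet}\le\sqrt{\lenZm}\cdot\sqrt2=\sqrt{2\lenZm}\le\lenZm\sqrt{\lenDataSet}$ for $\lenDataSet,\lenZm\ge2$, which closes the argument.

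The main obstacle is precisely this last paragraph: $\Upsilon_{\textup{tsk}}$ and $\Upsilon_{\textup{env}}$ both depend on the same meta-training sample (through the empirical per-task losses and through the within-task quantities $\decomMetaRVTotRV$, respectively), so their product cannot be factored directly; one must decouple carefully, as in the H\"older split above, and then verify that the resulting numerical constant is dominated by $\lenZm\sqrt{\lenDataSet}$. The exact form of the exponential-square-moment lemma and the bookkeeping of which expectations commute (prior over $\mathcal W$ versus data, tasks versus samples) also deserve attention. By contrast, the specialization of Theorem~\ref{prop-gen-pac-tsk_two_KL}, the single-square step \eqref{inv_ineq}, and the square-root passage \eqref{inv_ineq2} are routine once the convex functions and the constants $n=\lenDataSet-1$, $m=2(\lenZm-1)$ are fixed.
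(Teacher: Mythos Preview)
Your overall architecture is exactly the paper's: specialize \eqref{cor_tup_1} with $\funcEnv[a]{b}=(\lenDataSet-1)(a-b)^2$, $\funcTsk[a]{b}=2(\lenZm-1)(a-b)^2$, $\theta_{\textup{tsk}}=\theta_{\textup{env}}=1$; collapse the two squares via \eqref{inv_ineq} with $n=\lenDataSet-1$, $m=2(\lenZm-1)$; then control $\mathbbm E\bigl[\Upsilon_{\textup{tsk}}^{1/\lenDataSet}\Upsilon_{\textup{env}}\bigr]$.

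The only genuine difference is in this last step. The paper decouples with Cauchy--Schwarz (H\"older at exponents $2,2$): it bounds $\mathbbm E\bigl[(\Upsilon_{\textup{tsk}}^{1/\lenDataSet})^2\bigr]\le\lenZm^2$ via Jensen (concavity of $x^{2/\lenDataSet}$ for $\lenDataSet\ge 2$) together with the lemma $\mathbbm E[e^{\lambda m\Delta^2}]\le(1-2\lambda\sigma^2)^{-1}$, and bounds $\mathbbm E[\Upsilon_{\textup{env}}^2]\le\lenDataSet$ by Jensen on $x^2$ and the same lemma, landing exactly on $\lenZm\sqrt{\lenDataSet}$. You instead take H\"older at the conjugate pair $(\lenDataSet,\lenDataSet/(\lenDataSet-1))$ and use the sharper square-moment lemma $\mathbbm E[e^{\lambda m\Delta^2}]\le(1-2\lambda\sigma^2)^{-1/2}$, obtaining $(\mathbbm E[\Upsilon_{\textup{tsk}}])^{1/\lenDataSet}\le\sqrt{\lenZm}$ and $(\mathbbm E[\Upsilon_{\textup{env}}^{\lenDataSet/(\lenDataSet-1)}])^{(\lenDataSet-1)/\lenDataSet}\le\sqrt{2}$, hence $\sqrt{2\lenZm}$, which you then weaken to $\lenZm\sqrt{\lenDataSet}$. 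Both routes are valid; yours in fact gives a strictly smaller constant inside the logarithm (so you could state $\log(\sqrt{2\lenZm}/\delta)$ in place of $\log(\lenZm\sqrt{\lenDataSet}/\delta)$), at the price of needing the slightly less standard $(1-2\lambda\sigma^2)^{-1/2}$ version of the exponential-square bound. The paper's Cauchy--Schwarz split is more symmetric and recovers the stated constant without slack.
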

 \begin{proof}
 See Appendix \ref{Proof_the_Nw_McAll}.
 \end{proof}
 We recall that \eqref{New_McAll}
 is expressed in terms of a single square.
Thus, compared to existing bounds, minimizing \eqref{New_McAll} is
less complicated and one can enjoy the properties of presented bounds in \cite{rothfuss2020pacoh}. In other words, it
may reduce the problem of meta overfitting.

In this sections, we only presented a member of fast-rate
and a member of classic families. We can apply different  $\funcEnv[a]{b}$ and $\funcTsk[a]{b}$ functions, and obtain new
different bounds. For more new bounds, see Appendix \ref{aux_app_for_new_bnd}.

\section{Meta-Learning Algorithm}
\label{meta-alg}

 \begin{table*}[t]
\centering
\caption{Comparison of different PAC-Bayes bounds on $20$  test tasks. During the meta-training phase, each task is constructed
with $8000$ images, and during the meta-test phase, each task is constructed with $2000$ images. The number  of training task is set as $\lenDataSet=5$. The number of epochs is $100$.  
}
\rule{\linewidth}{0cm}
\centering
\begin{tabular}{ |p{4cm}|p{4cm}|p{4cm}|p{4cm}|p{4cm} | }
 \hline
 \multicolumn{3}{|c|}{Meta-Learning  PAC Bayes Bounds: Test Error $(\%)$} \\
 \hline
Bound& $100$-Swap Shuffled pixels  & Permuted labels\\
 \hline
MLAP \cite{amit2018meta}   & $31.4$ with STD $ 1.99\%$      &$ 7.95$ with STD $ 0.441\%$ \\
 \hline
  kl-Bound \cite{liu2021} &$26.7$ with STD $ 1.14\%$ &     $9.86$ with STD $ 0.915\%$ \\
 \hline
  $\lambda$-Bound \cite{liu2021} &$28.1$ with STD $ 1.28\%$       &$9.8$ with STD $ 0.859\%$  \\
 \hline
 Bound 1 \cite{mys} &$31.8$ with STD $ 1.31\%$     &$8.026$ with STD $ 0.356\%$ \\
  \hline
 Bound 2 \cite{mys} & $32$ with STD $ 1.44\%$    &$11.06$ with STD $ 0.588\%$ \\
  \hline
 Bound 3 \cite{mys} &  $ 28.8$ with STD $ 1.6\%$  &$10.3$ with STD $ 0.784\%$ \\
 \hline
 Our classic bound  \eqref{New_McAll} &$9.39$ with STD $0.454\%$  &$3.1$ with STD $0.279\%$\\
 \hline
  Our fast-rate bound  \eqref{fast_rate} &$25.2$ with STD $1.23\%$  &  $ 9.77$ with STD $ 0.824\%$\\
  \hline
\end{tabular}
\label{table_result}
\squeezeup
\end{table*}
By minimizing the bounds obtained in Section \ref{our_bounds_sec}, we can
develop PAC-Bayes bound-minimization algorithms for
meta-learning with deep neural networks. Here, for \eqref{New_McAll}, and in view of  \cite{amit2018meta} we obtain a meta-learning algorithm. Similar algorithms can be found for
all bounds presented in Section \ref{our_bounds_sec}.
Following previous
work \cite{amit2018meta}, we consider $\mathcal W=\{h_w,w\in\mathbbm R^d\}$
as a set of neural networks with certain parameters. For  $\mathrm N_p, d\in \mathbbm N$, let $\mathcal U\subset \mathbbm R^{\mathrm N_p}$ and $\mathcal W\subset \mathbbm R^d$ for all tasks $\task_i$. Like previous works,    \cite{amit2018meta,mys}, 
we
set the hyper-prior distribution as
\begin{align}
    \hyperprior\triangleq \mathcal N(0,\kprior \mathrm I_{\mathrm N_p}\times\mathrm I_{\mathrm N_p}), \label{gua_hyper_prior}
\end{align}
where  $\kprior>0$ is a predefined constant. We limit the space
of hyper-posteriors as a family of isotropic Gaussian distributions defined by
\begin{align}
    \hyperpost\triangleq \mathcal N(\theta,\kposter \mathrm I_{\mathrm N_p}\times \mathrm I_{\mathrm N_p}),\label{gua_hyper_posterior}
\end{align}
where  $\theta\in \mathbbm R^{\mathrm N_p}$ is the optimization parameter,
and $\kposter>0$ is a predefined constant.

Next, we consider the posterior and prior distributions
over $\mathcal W$. For all tasks $\task_i\in\mathcal T$,  $\mathcal W$ can be seen as a family of functions parameterized by a weight vector  $\bm a^d=[a_1,\dots,a_d]$. 
For a given hyperparameter $u$,  let the weight
vector is denoted by  $\bm a^d$. We define the prior distribution
as factorized Gaussian distributions
\begin{align}
\PriorTsk(\bm a^d|u)=\prod_{k=1}^d \PriorTsk( a_k|u)=
\prod_{k=1}^d
\mathcal N(a_k;\mu_u(k),\sigma_u^2(k)),
    \label{gua_prior}
\end{align}
meaning that, before observing data, the $k$-th   weight denoted by $a_k$,  takes values according to Gaussian distribution with mean $\mu_u(k)$ and variance $\sigma_u^2(k)$.
 After observing
data, for task $\task_i\in \mathcal T$, 
$a_k$  takes values according to Gaussian
distribution with mean $\mu_i(k)$ and variance $\sigma_i^2(k)$. Thus,
the posterior distribution of task $\task_i $ is
\begin{align}
   \PWcondZimU(\bm a^d|\bm z_i^{\lenZm_i},u)=\prod_{k=1}^d\mathcal N(a_k;\mu_i(k),\sigma_i^2(k)). \label{gua_posterior}
\end{align}

From \eqref{gua_hyper_prior} and \eqref{gua_hyper_posterior}, it can be verified that
\begin{align}
    \Dkl\left(
    \hyperpost|| \hyperprior
    \right)=\frac{||\theta||^2_2+\kposter}{2\kprior}+\log\frac{\kprior}{\kposter}-\frac{1}{2}.\label{2023.06.17_17.12}
\end{align}
Similarly, from \eqref{gua_prior} and \eqref{gua_posterior}, for task $\task_i$, we find that
\begin{align}
    \Dkl\left(
   \PWcondZimU|| \PriorTsk
    \right)=\frac{1}{2}\sum_{k=1}^{d}\Bigg(\log\frac{\sigma_u^2(k)}{\sigma_i^2(k)}
    +\log\frac{\sigma_i^2(k)+\left(\mu_i(k)-\mu_u(k)\right)^2}{\sigma_u^2(k)}\Bigg).\label{2023.06.17_23.17}
\end{align}
Inserting \eqref{2023.06.17_17.12} and \eqref{2023.06.17_23.17} into \eqref{New_McAll}, it remains to select the
parameters of posterior distribution $\PWcondZimU$
minimizing  \eqref{New_McAll}. Since the square root function is strictly increasing, an equivalent optimization problem is the minimization of the
objective function inside the square of \eqref{New_McAll}. 

Following the optimization technique described in  \cite{amit2018meta}, approximating the expectation  $\mathbbm E_{U\sim\mathcal N(\theta,\kposter \mathrm I_{\mathrm N_p}\times \mathrm I_{\mathrm N_p})}$ by averaging several
Monte-Carlo samples of $U$, the optimal posterior distribution can be obtained by evaluating the gradient of
 \eqref{New_McAll}
with respect to $(\bm \mu_i, \bm \sigma_i^2) $ as described in Section 4.4 of \cite{amit2018meta}.

We recall that, like  \cite{rothfuss2020pacoh}, the minimization problem of \eqref{New_McAll} is equivalent to the simpler problem
than the optimization of existing classic bounds. In fact, it
suffices to minimize an objective function, which is linear
with respect to KL-divergences. This leads to Gibbs posteriors, and might be the reason why the obtained algorithm
reduces the meta-overfitting problem. 

\section{Numerical Results}
\label{num_res}

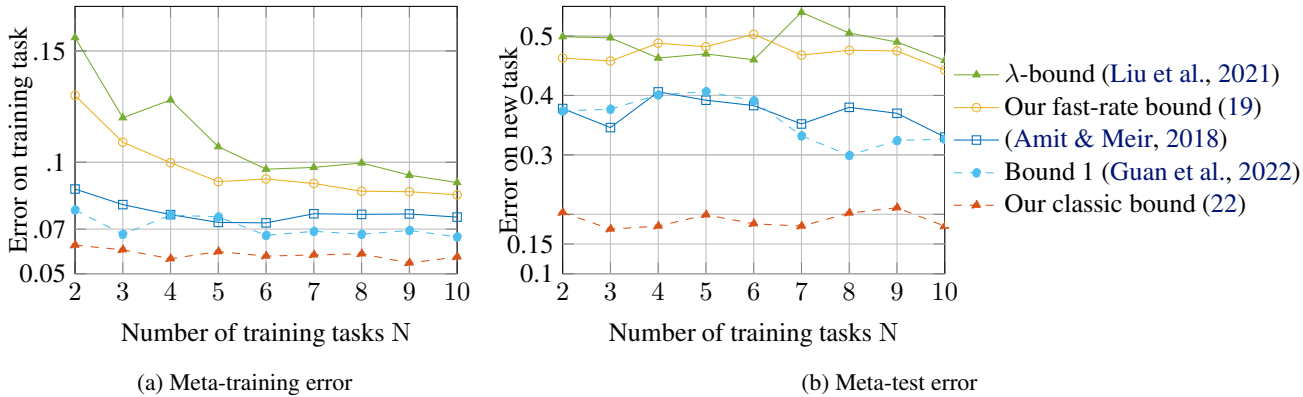
\begin{figure*}[!t]%
\centering
\begin{subfigure}[t]{.38\textwidth}
%
%
\definecolor{mycolor1}{rgb}{0.00000,0.44700,0.74100}%
\definecolor{mycolor2}{rgb}{0.85000,0.32500,0.09800}%
\definecolor{mycolor3}{rgb}{0.92900,0.69400,0.12500}%
\definecolor{mycolor4}{rgb}{0.49400,0.18400,0.55600}%
\definecolor{mycolor5}{rgb}{0.46600,0.67400,0.18800}%
\definecolor{mycolor6}{rgb}{0.30100,0.74500,0.93300}%
\definecolor{mycolor7}{rgb}{0.63500,0.07800,0.18400}%
\begin{tikzpicture}

\begin{axis}[%
width=2in,
height=1.4in,
at={(0.758in,0.481in)},
scale only axis,
xmin=2,
xmax=10,
ymin=0.05,
ymax=0.17,
xlabel=Number of training tasks $\lenDataSet$,
ylabel= Error on training task ,
 y label style={at={(0.1,0.5)}},
 yticklabels=\empty,
extra y ticks={0.05,.07, .1,.15},
extra y tick labels={ 0.05,.07, .1,.15},
yticklabels={,,},
grid=both,
xtick distance=1,
ytick distance=.05,
axis background/.style={fill=white},
legend style={at={(1,0.5)}, anchor= west, legend cell align=left, align=left, draw=none, legend columns=1}
]


\addplot [color=mycolor5,mark=triangle*,mark options={scale=.8}]
  table[row sep=crcr]{%
2	0.156\\
3	0.12\\
4	0.128\\
5	0.107\\
6	0.0969\\
7	0.0977\\
8	0.0997\\
9	0.0942\\
10	0.0909\\
};

\addplot [color=mycolor3,mark=o,mark options={scale=.8}]
  table[row sep=crcr]{%
2	0.13\\
3	0.109\\
4	0.0998\\
5	0.0913\\
6	0.0925\\
7	0.0905\\
8	0.087\\
9	0.0868\\
10	0.0854\\
};

\addplot [color=mycolor1,mark=square,mark options={scale=.8}]
  table[row sep=crcr]{%
2	 0.0880 \\
3	0.0810\\
4	 0.0765\\
5	0.0730 \\
6	0.0728\\
7	 0.0769\\
8	0.0766\\
9	0.0768\\
10	 0.0754\\
};

\addplot [color=mycolor6,dashed,mark=*,mark options={scale=.8}]
  table[row sep=crcr]{%
2	0.0786\\
3	0.0677\\
4	0.0763\\
5	0.0755\\
6	0.0672\\
7	0.069\\
8	0.0677\\
9	0.0694\\
10	0.0665\\
};

\addplot [color=mycolor2,dashed,mark=triangle*]
  table[row sep=crcr]{%
2	0.0628\\
3	0.0607\\
4	0.0567\\
5	0.0599\\
6	0.0579\\
7	0.0584\\
8	0.0589\\
9	0.0548\\
10	0.0575\\
};


\end{axis}
\end{tikzpicture}%
\caption{
Meta-training error} 
\label{pr_lab2}
\end{subfigure}\hfil
\begin{subfigure}[t]{.62\textwidth}
%
%
\definecolor{mycolor1}{rgb}{0.00000,0.44700,0.74100}%
\definecolor{mycolor2}{rgb}{0.85000,0.32500,0.09800}%
\definecolor{mycolor3}{rgb}{0.92900,0.69400,0.12500}%
\definecolor{mycolor4}{rgb}{0.49400,0.18400,0.55600}%
\definecolor{mycolor5}{rgb}{0.46600,0.67400,0.18800}%
\definecolor{mycolor6}{rgb}{0.30100,0.74500,0.93300}%
\definecolor{mycolor7}{rgb}{0.63500,0.07800,0.18400}%

\begin{tikzpicture}

\begin{axis}[%
width=2in,
height=1.4in,
at={(0.758in,0.481in)},
scale only axis,
grid=both,
xtick distance=1,
ytick distance=.1,
xmin=2,
xmax=10,
ymin=0.1,
ymax=0.55,
xlabel=Number of training tasks $\lenDataSet$,
ylabel= Error on new task ,
 yticklabels=\empty,
extra y ticks={0.1 ,.15,.3,.4,.5},
extra y tick labels={0.1 ,0.15,0.3,0.4,0.5},
yticklabels={,,},
 y label style={at={(.1,0.5)}},
axis background/.style={fill=white},
legend style={at={(1,0.5)}, anchor= west, legend cell align=left, align=left, draw=none, legend columns=1}
]

\addplot [color=mycolor5,mark=triangle*,mark options={scale=.8}]
  table[row sep=crcr]{%
2	0.499\\
3	0.497\\
4	0.463\\
5	0.47\\
6	0.46\\
7	0.54\\
8	0.505\\
9	0.49\\
10	0.459\\
};
\addlegendentry{$\lambda$-bound \cite{liu2021}}

\addplot [color=mycolor3,mark=o,mark options={scale=.8}]
  table[row sep=crcr]{%
2	0.463\\
3	0.458\\
4	0.488\\
5	0.482\\
6	0.503\\
7	0.468\\
8	0.476\\
9	0.475\\
10	0.443\\
};
\addlegendentry{Our fast-rate bound \eqref{fast_rate}}


 \addplot [color=mycolor1,mark=square,mark options={scale=.8}]
  table[row sep=crcr]{%
2	0.378\\
3	0.346\\
4	0.406\\
5	0.392\\
6	0.383\\
7	0.352\\
8	0.38\\
9	0.37\\
10	0.33\\
};
\addlegendentry{\cite{amit2018meta}}
 
\addplot [color=mycolor6,dashed,mark=*,mark options={scale=.8}]
  table[row sep=crcr]{%
2	0.3739  \\
3	0.3769\\
4	0.401\\
5	0.4065\\
6	0.3912\\
7	0.332\\
8	0.2991\\
9	 0.3241\\
10	 0.3265\\
};
\addlegendentry{Bound 1 \cite{mys}}
 \addplot [color=mycolor2,dashed,mark=triangle*]
  table[row sep=crcr]{%
2	0.203\\
3	0.175\\
4	0.18\\
5	0.199\\
6	0.184\\
7	0.18\\
8	 0.202\\
9	0.211\\
10	0.18\\
};
\addlegendentry{Our classic bound \eqref{New_McAll}}


\end{axis}
\end{tikzpicture}
\caption{
Meta-test error} 
\label{pr_lab}
\end{subfigure}
 \caption{The average training and test errors versus the number of training-tasks. The number of training examples for each
task is  $600$ images, and during the meta-test phase, each task is constructed with  $100$ images. The number of epochs is $100$.   }
\label{figabc}
\squeezeup
\end{figure*}

 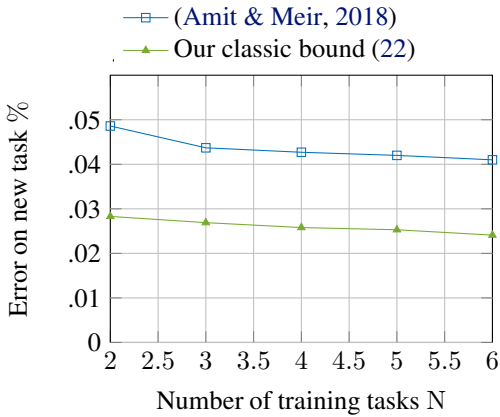
\begin{figure}[!t]
\centering
%
%
\definecolor{mycolor1}{rgb}{0.00000,0.44700,0.74100}%
\definecolor{mycolor2}{rgb}{0.85000,0.32500,0.09800}%
\definecolor{mycolor3}{rgb}{0.92900,0.69400,0.12500}%
\definecolor{mycolor4}{rgb}{0.49400,0.18400,0.55600}%
\definecolor{mycolor5}{rgb}{0.46600,0.67400,0.18800}%
\definecolor{mycolor6}{rgb}{0.30100,0.74500,0.93300}%
\definecolor{mycolor7}{rgb}{0.63500,0.07800,0.18400}%
\begin{tikzpicture}

\begin{axis}[%
width=2in,
height=1.4in,
at={(0.758in,0.481in)},
scale only axis,
xmin=2,
xmax=6,
ymin=0,
ymax=0.06,
yticklabels=\empty,
grid=both,
xtick distance=.5,
ytick distance=.5,
extra y ticks={0,0.01,.02, .03,.04,.05},
extra y tick labels={0,.01,.02, .03,.04,.05},
xlabel=Number of training tasks $\lenDataSet$,
ylabel= Error on new task $\%$,
axis background/.style={fill=white},
legend style={at={(.01,1.01)}, anchor=south west, legend cell align=left, align=left, draw=none, legend columns=1}
]
\addplot [color=mycolor1,mark=square,mark options={scale=.8}]
  table[row sep=crcr]{%
2	0.0486\\
3	0.0437\\
4	 0.0427\\
5	 0.042\\
6	0.041\\
};
\addlegendentry{\cite{amit2018meta} }

\addplot [color=mycolor5,mark=triangle*,mark options={scale=.8}]
  table[row sep=crcr]{%
2	0.0283\\
3	 0.0269 \\
4	0.0258\\
5	 0.0253\\
6	0.0241\\
};
\addlegendentry{Our classic bound \eqref{New_McAll}}

\end{axis}
\end{tikzpicture}%
\caption{
 The average test error of learning a new task
versus $\lenDataSet$.
 The number of training example of each task   $8000$. The number of epochs is $400$. } 
\label{pr_lab3}
\squeezeup
\end{figure}

Using the same experiment given by Section 5 of  \cite{amit2018meta} and also \cite{liu2021,mys}, we
compare our bounds with previous works. We reproduce
the experimental results of our method by directly running
the online code\footnote{ https://github.com/ron-amit/meta-learning-adjusting-priors2
} from \cite{amit2018meta}, and run our
algorithm by replacing others’ bounds with our bounds.

In image classification, the data samples $z=(x,y)$, consist of a an image, $x$ and a label $y$. We consider an experiment
based on augmentations of the MNIST dataset. We study
two experiments, namely permuted labels and permuted
pixels. For permuted labels, each task is created by a random permutation of image labels. For permuted pixels each task is created by a permutation of image pixels. The
pixel permutations are achieved by $100$ location swaps  to ensure
the task relatedness.

The network architecture used for the permuted-label
 experiment is a small CNN with two convolutions layers, a linear hidden layer and a linear output layer  \cite{amit2018meta}.
With  a 
learning rate of $10^{-3}$, we use the
hyper-prior, prior, hyper-posterior and posterior distributions given by \eqref{gua_hyper_prior}, \eqref{gua_prior}, \eqref{gua_hyper_posterior} and \eqref{gua_posterior}, respectively. 
We set $\kprior=100$, $\kposter=0.001$, and $\delta=0.1$. For each task $\tau_i$, and $k=1,\dots,d$, the posterior parameter $\log (\sigma_i^2(k))$ initialized by $\mathcal N(-10,0.01)$,
$\mu_i(k)$
is initialized randomly with the
Glorot method  \cite{Glorot2010}. Then, for different
bounds, by using backpropagation, we evaluate the gradient of the bound with respect to  $\bm \mu_i=(\mu_i(1),\dots,\mu_i(d))$. Then, we set $\mu_i(k)$ and $\sigma_i(k)$ as the means and variance
of $k$-th weight.
 The parameters 
$\mu_u(k)$ and $\sigma_u(k)$ are  similar in
structure, and the parameter $\theta$ is the vector containing
the weights of $\mathrm N$ tasks \cite{amit2018meta}.

Table \ref{table_result} shows the comparison of different PAC-Bayes
bounds for both permuted pixels and labels experiments.
The performance of our classic bounds is significantly  better than the existing bounds. Our fast-rate bound achieves
competitive performance on novel tasks. For permuted
labels, Figure \ref{pr_lab2} 
compares the average training error, and
Figure \ref{pr_lab} shows the test error of learning a new task for
different bounds. As shown in Figure \ref{pr_lab2}, the training error
of our classic bound  \eqref{New_McAll}  is comparable with other bounds.
However, in Figure \ref{pr_lab}, for new tasks, the performance of
our bound is much better than other bounds. Figure  \ref{pr_lab3}  compares the test error when the larger number of training examples is available. Again, our classic bound has
better performance.

\subsection{Conclusion}
\label{dis}
In this paper, for meta-learning setup, we have derived
a general PAC-Bayes bound which can recover existing known bounds and proposes new bounds. Based on our
extended PAC-Bayes bound, we have obtained a bound
from the fast-rate family and also a bound from the classic family.
The fast-rate bound yields to competitive experimental
results on novel tasks with respect to existing methods.
Unlike existing bound, to obtain the classic bound, we
used only one Markov’s inequality and by lower bounding the sum of environment-level and task-level convex
functions, we end up with a new classic bound. Practical
examples show that the new obtained classic bound reduces the meta overfitting problem. The main property
of the new classic bound is that it is expressed in terms of
one square. Thus, minimizing the new PAC-Bayes bound
leads to a simpler optimization problem, i.e., minimizing
an objective function which is linear with respect to KL-
divergences of posterior and prior distributions. We guess
that due to this property, the new proposed bound has
better performance on the meta-test set.

Potentially, our general PAC-Bayes bound holds for both
bounded and unbounded loss functions, as well as data-
dependent or data-free prior distributions. Here, we only
focused on data-free priors and bounded loss functions.
Generalizing to other scenarios is left to future work.

\section*{Acknowledgements}
The author is indebted to Prof. Giuseppe Durisi for his intellectual
leadership, many helpful suggestions and invaluable criticism.

This work has been funded by the European Union’s Horizon 2020
research and innovation programme under the Marie Sklodowska-Curie grant
agreement No. 893082.

\nocite{langley00}

\bibliography{example_paper}

\begin{thebibliography}{40}
\providecommand{\natexlab}[1]{#1}
\providecommand{\url}[1]{\texttt{#1}}
\expandafter\ifx\csname urlstyle\endcsname\relax
  \providecommand{\doi}[1]{doi: #1}\else
  \providecommand{\doi}{doi: \begingroup \urlstyle{rm}\Url}\fi

\bibitem[Alquier(2008)]{Alquier2008}
Alquier, P.
\newblock {PAC}-{Bayesian} bounds for randomized empirical risk minimizers.
\newblock \emph{Mathematical Methods of Statistics}, 3:\penalty0 279--304,
  Sept. 2008.

\bibitem[Alquier(2021)]{Alquier2021}
Alquier, P.
\newblock User-friendly introduction to {PAC}-{Bayes} bounds.
\newblock arXiv:2110.11216, 2021.
\newblock URL \url{https://arxiv.org/abs/2110.11216}.

\bibitem[Alquier et~al.(2016)Alquier, Ridgway, Chopin, and Teh]{Alquier2016}
Alquier, P., Ridgway, J., Chopin, N., and Teh, Y.~W.
\newblock On the properties of variational approximations of {G}ibbs
  posteriors.
\newblock \emph{Journal of Machine Learning Research}, 17:\penalty0 1--41, Dec.
  2016.

\bibitem[Amit \& Meir(2018)Amit and Meir]{amit2018meta}
Amit, R. and Meir, R.
\newblock Meta-learning by adjusting priors based on extended {PAC}-{Bayes}
  theory.
\newblock In \emph{Proc. Int. Conf. on Machine Learning (ICML)}, Stockholm,
  Sweden, July 2018.

\bibitem[Balcan et~al.(2019)Balcan, Khodak, and Talwalkar]{paper2}
Balcan, M.-F., Khodak, M., and Talwalkar, A.
\newblock Provable guarantees for gradient-based meta-learning.
\newblock In \emph{Proc. Int. Conf. Machine Learning (ICML)}, Long Beach, CA,
  USA, June 2019.

\bibitem[Baxter(2000)]{baxter}
Baxter, J.
\newblock A model of inductive bias learning.
\newblock \emph{Journal of Artif. Intell. Research}, 12:\penalty0 149--198,
  Mar. 2000.

\bibitem[Bu et~al.(2019)Bu, Zou, and Veeravalli]{Bu}
Bu, Y., Zou, S., and Veeravalli, V.~V.
\newblock Tightening {mutual} {information} {based} {bounds} on
  {generalization} {error}.
\newblock In \emph{Proc. IEEE Int. Symp. Inf. Theory (ISIT)}, Paris, France,
  July 2019.

\bibitem[Catoni(2007)]{catoni2007}
Catoni, O.
\newblock \emph{{PAC}-{Bayesian} Supervised Classification: The Thermodynamics
  of Statistical Learning}, volume~56.
\newblock IMS Lecture Notes Monogr.Ser., 2007.

\bibitem[Denevi et~al.(2018)Denevi, Ciliberto, Stamos, and Pontil]{paper4}
Denevi, G., Ciliberto, C., Stamos, D., and Pontil, M.
\newblock Learning to learn around a common mean.
\newblock In \emph{Proc. Conf. Neural Inf. Process. Syst. (NeurIPS)}, Montreal,
  Canada, Dec. 2018.

\bibitem[Denevi et~al.(2019)Denevi, Ciliberto, Grazzi, and Pontil]{paper1}
Denevi, G., Ciliberto, C., Grazzi, R., and Pontil, M.
\newblock Learning-to-learn stochastic gradient descent with biased
  regularization.
\newblock In \emph{Proc. Int. Conf. on Machine Learning (ICML)}, Long Beach,
  CA, USA, June 2019.

\bibitem[Dziugaite1 et~al.(2021)Dziugaite1, Hsu, Gharbieh, and
  Roy]{dziugaite20-10b}
Dziugaite1, G., Hsu, K., Gharbieh, W., and Roy, D.~M.
\newblock On the role of data in {PAC}-{Bayes} bounds.
\newblock In \emph{Proc. Artif. Intell. Statist. (AISTATS)}, Apr. 2021.

\bibitem[Finn et~al.(2017)Finn, Abbeel, and Levine]{finn2017model}
Finn, C., Abbeel, P., and Levine, S.
\newblock Model-agnostic meta-learning for fast adaptation of deep networks.
\newblock In \emph{Proc. Int. Conf. Machine Learning (ICML)}, Sydney,
  Australia, Aug. 2017.

\bibitem[Glorot \& Bengio(2010)Glorot and Bengio]{Glorot2010}
Glorot, X. and Bengio, Y.
\newblock Understanding the difficulty of training deep feedforward neural
  networks.
\newblock In \emph{Proc. Artif. Intell. Statist. (AISTATS)}, Chia Laguna
  Resort, Sardinia, Italy, May 2010.

\bibitem[Guan et~al.(2022)Guan, Lu, and Liu]{mys}
Guan, J., Lu, Z., and Liu, Y.
\newblock Improved generalization risk bounds for meta-learning with
  {PAC}-{B}ayes-kl analysis.
\newblock In \emph{The Int. Conf. on Learning Representations (ICLR )}, Apr.
  2022.

\bibitem[Guedj(2019)]{Guedj2019ALearning}
Guedj, B.
\newblock {A Primer on {PAC}-{Bayesian} Learning}.
\newblock arXiv: 1901.05353, 2019.
\newblock URL \url{http://arxiv.org/abs/1901.05353}.

\bibitem[Guedj \& Pujol(2019)Guedj and Pujol]{Guedj2019}
Guedj, B. and Pujol, L.
\newblock {Still no free lunches: the price to pay for tighter {PAC}-{Bayes}
  bounds}.
\newblock arXiv:1910.04460, 2019.
\newblock URL \url{http://arxiv.org/abs/1910.04460}.

\bibitem[Jose \& Simeone(2021)Jose and Simeone]{sharu}
Jose, S.~T. and Simeone, O.
\newblock Information-theoretic generalization bounds for meta-learning and
  applications.
\newblock \emph{Entropy}, 23\penalty0 (1), Jan. 2021.

\bibitem[Khodak et~al.(2019)Khodak, Balcan, and Talwalkar]{paper5}
Khodak, M., Balcan, M.-F.~F., and Talwalkar, A.~S.
\newblock Adaptive gradient-based meta-learning methods.
\newblock In \emph{Proc. Conf. Neural Inf. Process. Syst. (NeurIPS)},
  Vancouver, Canada, Dec. 2019.

\bibitem[Konobeev et~al.(2020)Konobeev, Kuzborskij, and Szepesv\'ari]{paper3}
Konobeev, M., Kuzborskij, I., and Szepesv\'ari, C.
\newblock {On Optimality of Meta-Learning in Fixed-Design Regression with
  Weighted Biased Regularization}.
\newblock arXiv:2011.00344, 2020.
\newblock URL \url{https://arxiv.org/abs/2011.00344}.

\bibitem[Li et~al.(2017)Li, Zhou, Chen, and Li]{li2017meta}
Li, Z., Zhou, F., Chen, F., and Li, H.
\newblock {M}eta-{SGD}: Learning to learn quickly for few-shot learning.
\newblock arXiv:1707.09835, 2017.
\newblock URL \url{http://arxiv.org/abs/1707.09835}.

\bibitem[Liu et~al.(2021)Liu, Lu, Yan, and Zhang]{liu2021}
Liu, T., Lu, J., Yan, Z., and Zhang, G.
\newblock {PAC}-{Bayes} bounds for meta-learning with data-dependent prior.
\newblock arXiv:2102.03748, 2021.
\newblock URL \url{https://arxiv.org/abs/2102.03748}.

\bibitem[Maurer(2004)]{Maurer}
Maurer, A.
\newblock A note on the {PAC}-{Bayesian} theorem.
\newblock arXiv:0411099, 2004.
\newblock URL \url{https://arxiv.org/abs/0411099}.

\bibitem[McAllester(2013)]{McAllester2013}
McAllester, D.
\newblock {PAC}-{Bayesian} tutorial with a dropout bound.
\newblock arXiv:1307.2118, 2013.
\newblock URL \url{https://arxiv.org/abs/1307.2118.pdf}.

\bibitem[McAllester(1999)]{McAllaster1999}
McAllester, D.~A.
\newblock {PAC}-{Bayesian} model averaging.
\newblock In \emph{Proc. Conf. on Learning Theory (COLT)}, New York, NY, USA,
  July 1999.

\bibitem[Mcallester(2003)]{Mcallester2003}
Mcallester, D.~A.
\newblock {PAC}-{Bayesian} stochastic model selection.
\newblock In \emph{Machine Learning}, volume~51, pp.\  5--21, Apr. 2003.

\bibitem[Mitchell(1997)]{mitchel}
Mitchell, T.~M.
\newblock \emph{Machine Learning}.
\newblock McGraw-Hill, New York, 1997.
\newblock ISBN 978-0-07-042807-2.

\bibitem[Negrea et~al.(2019)Negrea, Haghifam, Dziugaite, Khisti, and
  Roy]{Negrea2019Information-TheoreticEstimates}
Negrea, J., Haghifam, M., Dziugaite, G.~K., Khisti, A., and Roy, D.~M.
\newblock {Information-theoretic generalization bounds for SGLD via
  data-dependent estimates}.
\newblock In \emph{Proc. Conf. Neural Inf. Process. Syst. (NeurIPS)},
  Vancouver, Canada, Dec. 2019.

\bibitem[Ohnishi \& Honorio(2021)Ohnishi and Honorio]{Ohnishi2020}
Ohnishi, Y. and Honorio, J.
\newblock Novel change of measure inequalities with applications to
  {PAC}-{Bayesian} bounds and monte carlo estimation.
\newblock In \emph{Proc. Artif. Intell. Statist. (AISTATS)}, Apr. 2021.

\bibitem[Pentina \& Lampert(2014)Pentina and Lampert]{pentina14}
Pentina, A. and Lampert, C.
\newblock A {PAC}-{Bayesian} bound for lifelong learning.
\newblock In \emph{Proc. Int. Conf. on Machine Learning (ICML)}, Beijing,
  China, June 2014.

\bibitem[Ravi \& Larochelle(2017)Ravi and Larochelle]{Ravi2017}
Ravi, S. and Larochelle, H.
\newblock Optimization as a model for few-shot learning.
\newblock In \emph{Proc. Int. Conf. Learning Representations, (ICLR)}, Toulon,
  France, Apr. 2017.

\bibitem[Rezazadeh et~al.(2021)Rezazadeh, Sharu, Durisi, and
  Simeone]{arezou2021}
Rezazadeh, A., Sharu, S.~T., Durisi, G., and Simeone, O.
\newblock Conditional mutual information-based generalization bound for meta
  learning.
\newblock In \emph{Proc. IEEE Int. Symp. Inform. Theory (ISIT)}, Melbourne,
  Australia, July 2021.

\bibitem[Rivasplata et~al.(2020)Rivasplata, Kuzborskij, and
  Szepesv\'{a}ri]{omar}
Rivasplata, O., Kuzborskij, I., and Szepesv\'{a}ri, C.
\newblock {PAC}-{Bayes} analysis beyond the usual bounds.
\newblock arXiv:2006.13057, 2020.
\newblock URL \url{https://arxiv.org/abs/2006.13057}.

\bibitem[Rothfuss et~al.(2021)Rothfuss, Fortuin, and Krause]{rothfuss2020pacoh}
Rothfuss, J., Fortuin, V., and Krause, A.
\newblock {PACOH}: Bayes-optimal meta-learning with {PAC}-guarantees.
\newblock In \emph{Proc. Int. Conf. on Machine Learning (ICML)}, Vienna,
  Austria, Feb. 2021.

\bibitem[Russo \& Zou(2016)Russo and Zou]{Russo2016ControllingTheory}
Russo, D. and Zou, J.
\newblock Controlling {bias} in {adaptive} {data} {analysis} {using}
  {information} {theory}.
\newblock In \emph{Proc. Artif. Intell. Statist. (AISTATS)}, Cadiz, Spain, May
  2016.

\bibitem[Seeger(2002)]{Seeger2002}
Seeger, M.
\newblock {PAC}-{Bayesian} generalisation error bounds for gaussian process
  classification.
\newblock \emph{Journal of Machine Learning Research}, 3:\penalty0 233--269,
  Oct. 2002.

\bibitem[Thiemann et~al.(2017)Thiemann, Igel, Wintenberger, and
  Seldin]{Thiemann2017}
Thiemann, N., Igel, C., Wintenberger, O., and Seldin, Y.
\newblock A strongly quasiconvex pac-bayesian bound.
\newblock In Hanneke, S. and Reyzin, L. (eds.), \emph{Proc. of the Int. Conf.
  on Algorithmic Learning Theory (ALT)}, pp.\  466--492, Oct. 2017.

\bibitem[Thrun \& Pratt(1998)Thrun and Pratt]{thrun1998learning}
Thrun, S. and Pratt, L.
\newblock \emph{Learning to {L}earn: {I}ntroduction and {O}verview}, pp.\
  3--17.
\newblock Springer, Boston, MA, 1998.
\newblock ISBN 978-1-4615-5529-2.

\bibitem[Wainwright(2019)]{Wainwright}
Wainwright, M.
\newblock \emph{High-Dimensional Statistics: A Non-Asymptotic Viewpoint}.
\newblock Cambridge Univ. Press, Cambridge, U.K, 2019.
\newblock ISBN 9781108498029.

\bibitem[Xu \& Raginsky(2017)Xu and Raginsky]{Xu2017}
Xu, A. and Raginsky, M.
\newblock Information-theoretic analysis of generalization capability of
  learning algorithms.
\newblock In \emph{Proc. Conf. Neural Inf. Process. Syst. (NeurIPS)}, Long
  Beach, CA, {USA}, Dec. 2017.

\bibitem[Zoph et~al.(2018)Zoph, Vasudevan, Shlens, and Le]{Zoph2018}
Zoph, B., Vasudevan, V., Shlens, J., and Le, Q.~V.
\newblock Learning transferable architectures for scalable image recognition.
\newblock In \emph{Proc. Conf. on Computer Vision and Pattern Recognition
  (CVPR)}, Salt Lake City, Utah, USA, June 2018.
\newblock \doi{10.1109/CVPR.2018.00907}.

\end{thebibliography}
\bibliographystyle{icml2022}

\newpage
\appendix
\onecolumn

\section{Proof of Theorem \ref{prop-gen-pac-tsk_two_KL}}
\label{Proof_prop-gen-pac-tsk_two_KL}
To bound the meta-generalization gap, we bound the generalization gap at task and environment levels, separately. At the
task level, for the task $\task_i$, the base-learner uses a prior and the samples $\trainTskSimp$  to output a distribution over hypotheses.
Here, we consider the prior over hypothesis   $(\hyperprior,\PriorTsk)$ as a joint distribution of one hyper-prior  $\hyperprior$ and
the prior $\PriorTsk$ depends
on  the hyper-prior.  Note that the posterior over the hypothesis can be any distribution, particularly a tuple   $(\hyperpost,  \PWcondZimUsimp)$ where firstly the hyperparameter $U$ is sampled from the hyper-posterior $\hyperpost$, and then the model parameter $W$ is sampled from $\PWcondZimUsimp$. Considering this approach, for any
  $\theta_{\textup{tsk}}\geq 0$, we have
 \begin{align}
 \theta_{\textup{tsk}}&\funcTsk[\mathbbm E_{\hyperpost}\mathbbm E_{\PWcondZimUsimp}\left( \mathrm L_{\mathrm P_{Z|\task_i}}(W) \right)]{ \mathbbm E_{\hyperpost}\mathbbm E_{\PWcondZimUsimp}\left( \empLossTskRV\right)}  \nonumber\\
&
\leq
 \theta_{\textup{tsk}}\mathbbm E_{\hyperpost}\mathbbm E_{\PWcondZimUsimp}\left[\funcTsk[\mathrm L_{\mathrm P_{Z|\task_i}}(W) ]{ \empLossTskRV}\right]\label{pr_tsk_c}
\\ 
  & \leq \Dkl\left(\hyperpost\PWcondZimUsimp||
     \PriorEnvSimp \PriorTskSimp
     \right)  +\log\left(
     \mathbbm E_{   \PriorEnvSimp \PriorTskSimp} e^{\theta_{\textup{tsk}}\funcTsk[ \expLossTskRV]{\empLossTskRV}}
     \right)
     \label{2021.03.10_15.13}\\
     &=  \Dkl\left(\hyperpost||
     \PriorEnvSimp 
     \right) +\mathbbm E_{\hyperpost}\left[\Dkl\left(
     \PWcondZimUsimp||\PriorTskSimp\right)\right]
     +\log\left(
     \mathbbm E_{   \PriorEnvSimp \PriorTskSimp} e^{\theta_{\textup{tsk}}\funcTsk[ \expLossTskRV]{\empLossTskRV}}
     \right), \label{pr_tsk_d}
     \end{align}
     where 
     $   \decomMeta[u]{\bm Z^{\lenZm}_i}{\task_i}$ 
     and 
     $\empLossTsksimple[w]{}$ are defined in \eqref{decom-pac}
        \eqref{emp_loss_task}, respectively.
 Since $\funcTskRV(\cdot)$ is convex, in  \eqref{pr_tsk_c} we applied Jensen's inequality, and  \eqref{2021.03.10_15.13} follows from the Donsker-Varadhan
theorem  \eqref{Donsker}. Finally, \eqref{pr_tsk_d}
follows from the definition of the KL-divergence.

Next, we average both sides of \eqref{pr_tsk_d} over $\lenDataSet$ tasks. Recalling that 
 $\funcTsk[a]{b}$ is convex in both $a$ and $b$, we have $\funcTsk[\frac{1}{\lenDataSet}\sum_i^{\lenDataSet} a_i]{\frac{1}{\lenDataSet}\sum_i^{\lenDataSet} b_i}\leq \frac{1}{\lenDataSet}\sum_i^{\lenDataSet}\funcTsk[a_i]{b_i}$. By applying this fact, in view of \eqref{decom-pac_tot} and \eqref{sharu_eq11},  using  $\log\prod_i a_i=\sum_i \log a_i$,  we  find that 
      \begin{align}
\theta_{\textup{tsk}}\funcTsk[\mathbbm E_{U\sim\hyperpost}\left( \decomMetaRVTot\right)]{\mathbbm E_{U\sim\hyperpost} \empLossEnvRV}
      \leq
          \Dkl\left(\hyperpost||
     \PriorEnvSimp
     \right)+ \frac{1}{\lenDataSet} \mathbbm E_{\hyperpost}\left(\sum_{i=1}^{\lenDataSet}\Dkl\left(\PWcondZimUsimp||
     \PriorTsk
     \right)\right)
          \nonumber\\
     +\frac{1}{\lenDataSet} \log\underbrace{\left(
     \prod_{i=1}^{\lenDataSet}\mathbbm E_{ \JointPriorTskSimp} e^{ \theta_{\textup{tsk}} \funcTsk[
     \expLossTskRV]{
     \empLossTskRV}}
     \right)}_{\Upsilon_{\textup{tsk}}}.\label{2021.03.10_15.14}
 \end{align}

Similarly, at the environment level, by setting hyper-prior and hyper-posterior as  $\hyperprior$ and $\hyperpost$, respectively,  using Jensen's inequality, and applying the  Donsker-Varadhan
theorem \eqref{Donsker}, for $\theta_{\textup{env}}\geq 0$
we have
\begin{align}
\theta_{\textup{env}}\funcEnv[\mathbbm E_{U\sim\hyperpost}\left(
\expLossEnvRV
\right)]{\mathbbm E_{U\sim\hyperpost}\left( \decomMetaRVTot\right)}\hspace{15em}\nonumber\\
\leq
\Dkl\left(\hyperpost||
     \PriorEnvSimp
     \right)+\log\underbrace{\left(\mathbbm E_{\PriorEnvSimp}e^{\theta_{\textup{env}}\funcEnv[
\expLossEnvRV
]{\decomMetaRVTot}}\right)}_{\Upsilon_{\textup{env}}}.\label{2021.03.10_15.15}
\end{align}

Now, dividing both sides of
\eqref{2021.03.10_15.14} (resp. \eqref{2021.03.10_15.15}) by $\theta_{\textup{tsk}}$ (resp. $\theta_{\textup{env}}$), 
summing up both sides of the obtained inequalities, and
using the fact that  $\log(a)+\log(b)=\log(a.b)$, we finally obtain
\begin{align}
&\funcEnv[\mathbbm E_{U\sim\hyperpost}\left(
\expLossEnvRV
\right)]{\mathbbm E_{U\sim\hyperpost}\left( \decomMetaRVTot\right)}
+\funcTsk[\mathbbm E_{U\sim\hyperpost}\left( \decomMetaRVTot\right)]{\mathbbm E_{U\sim\hyperpost} \empLossEnvRV}\nonumber\\
&\hspace{6em}\leq \frac{1}{\lenDataSet\cdot\theta_{\textup{tsk}}}
     \mathbbm E_{\hyperpost}\left(\sum_{i=1}^{\lenDataSet}\Dkl\left(\PWcondZimUsimp||
     \PriorTsk
     \right)\right)
+\left( \frac{1}{\theta_{\textup{tsk}}}+\frac{1}{\theta_{\textup{env}}}\right)\Dkl\left(\hyperpost||
     \PriorEnvSimp
     \right)+ \log\left(\Upsilon_{\textup{tsk}}^{\frac{1}{\lenDataSet\cdot\theta_{\textup{tsk}}}}\cdot
     \Upsilon_{\textup{env}}^{\frac{1}{\theta_{\textup{env}}}}
\right).\label{both_tsk_env}
\end{align}
Finally, by applying the Markov’s inequality, i.e., $\mathbbm P[\Upsilon\geq  \mathbbm E[\Upsilon]/\delta]\leq \delta$  to the  $\Upsilon_{\textup{tsk}}^{\frac{1}{\theta_{\lenDataSet\cdot\textup{tsk}}}}\cdot
     \Upsilon_{\textup{env}}^{\frac{1}{\theta_{\textup{env}}}}$ term, from \eqref{both_tsk_env}, we conclude that with probability at least $1-\delta$
\begin{align}
  \funcEnv[\mathbbm E_{U\sim\hyperpost}\left(
\expLossEnvRV
\right)]{\mathbbm E_{U\sim\hyperpost}\left( \decomMetaRVTotRV\right)}
+\funcTsk[\mathbbm E_{U\sim\hyperpost}\left( \decomMetaRVTotRV\right)]{\mathbbm E_{U\sim\hyperpost} \empLossEnvRV}\nonumber\\  
\leq
  \frac{1}{\lenDataSet\cdot\theta_{\textup{tsk}}}
     \mathbbm E_{\hyperpost}\left(\sum_{i=1}^{\lenDataSet}\Dkl\left(\PWcondZimUsimp||
     \PriorTsk
     \right)\right)
  + \left( \frac{1}{\theta_{\textup{tsk}}}+\frac{1}{\theta_{\textup{env}}}\right)\Dkl\left(\hyperpost||
     \PriorEnvSimp
     \right)+\log\frac{\mathbbm E_{\mathrm P_{T_{1:\lenDataSet}}}\mathbbm E_{\mathrm P_{\trainEnvSimp|T_{1:\lenDataSet}}}\left(\Upsilon_{\textup{tsk}}^{\frac{1}{\lenDataSet\cdot\theta_{\textup{tsk}}}}\cdot
     \Upsilon_{\textup{env}}^{\frac{1}{\theta_{\textup{env}}}}
\right)}{\delta},
\end{align}
which proves \eqref{cor_tup_1}.

Next, to prove \eqref{cor_tup_2},   we apply Markov’s inequality to both \eqref{pr_tsk_d} and \eqref{2021.03.10_15.15}. From \eqref{pr_tsk_d}, we find that with probability at least $1-\delta_i$ under distribution $\mathrm P_{T_{1:\lenDataSet}} \mathrm P_{\trainEnvSimp|T_{1:\lenDataSet}}$, we have
 \begin{align}
 \theta_{\textup{tsk}}&\funcTsk[\mathbbm E_{\hyperpost}\mathbbm E_{\PWcondZimUsimp}\left( \mathrm L_{\mathrm P_{Z|\taskRV_i}}(W)  \right)]{ \mathbbm E_{\hyperpost}\mathbbm E_{\PWcondZimUsimp}\left( \empLossTskRV\right)}  \nonumber\\
     &\leq  \Dkl\left(\hyperpost||
     \PriorEnvSimp 
     \right) +\mathbbm E_{\hyperpost}\left[\Dkl\left(
     \PWcondZimUsimp||\PriorTskSimp\right)\right]
     +\log\left(
\frac{  \mathbbm E_{\mathrm P_{T_{1:\lenDataSet}}}\mathbbm E_{\mathrm P_{\trainEnvSimp|T_{1:\lenDataSet}}}  \mathbbm E_{   \PriorEnvSimp \PriorTskSimp} e^{\theta_{\textup{tsk}}\funcTsk[ \expLossTskRVRV]{\empLossTskRV}}}{\delta_i}
     \right). \label{pr_tsk_e2}
     \end{align}
Recalling   that  from $\funcTsk[a]{b}\leq c_{\textup{tsk}}$ , we can conclude $a\leq k_t\cdot b+ \funcinvTsk{c_{\textup{tsk}}}$, from \eqref{pr_tsk_e2}, by dividing both sides of $a\leq k_t\cdot b+ \funcinvTsk{c_{\textup{tsk}}}$ by $\lenDataSet$, with probability at least $1-\delta_i$,
we have
 \begin{align}
&\frac{1}{\lenDataSet}\mathbbm E_{\hyperpost}\mathbbm E_{\PWcondZimUsimp}\left( \mathrm L_{\mathrm P_{Z|\taskRV_i}}(W)  \right)\leq \frac{k_t}{\lenDataSet} \mathbbm E_{\hyperpost}\mathbbm E_{\PWcondZimUsimp}\left( \empLossTskRV\right)+\frac{1}{\lenDataSet}\funcinvTsk{\mathrm B_t}
, \label{pr_tsk_e}
     \end{align}
     where
     \begin{align}
         \mathrm B_t=\frac{1}{ \theta_{\textup{tsk}}}
         \Dkl\left(\hyperpost||
     \PriorEnvSimp 
     \right) +\frac{1}{ \theta_{\textup{tsk}}}\mathbbm E_{\hyperpost}\left[\Dkl\left(
     \PWcondZimUsimp||\PriorTskSimp\right)\right]
     +\frac{1}{ \theta_{\textup{tsk}}}\log\left(
\frac{  \mathbbm E_{\mathrm P_{T_{1:\lenDataSet}}}\mathbbm E_{\mathrm P_{\trainEnvSimp|T_{1:\lenDataSet}}}  \mathbbm E_{   \PriorEnvSimp \PriorTskSimp} e^{\theta_{\textup{tsk}}\funcTsk[ \expLossTskRVRV]{\empLossTskRV}}}{\delta_i}
     \right).
     \end{align}
Here,
in Lemma \ref{Lemma_sum}, we set  $f_i$ as the left hand side of \eqref{pr_tsk_e} and  $a_i$ as the right hand side of \eqref{pr_tsk_e}. Thus, from 
Lemma \ref{Lemma_sum},
we conclude that with probability at least $1-\sum_i\delta_i$,
 \begin{align}
\mathbbm E_{U\sim\hyperpost}\left[\frac{1}{\lenDataSet}\sum_{i=1}^\lenDataSet\mathbbm E_{W\sim\PWcondZimUsimp}\left( \mathrm L_{\mathrm P_{Z|\taskRV_i}}(W)  \right)\right]
\leq \frac{k_t}{\lenDataSet}\sum_{i=1}^\lenDataSet \mathbbm E_{\hyperpost}\mathbbm E_{\PWcondZimUsimp}\left( \empLossTskRV\right)+\frac{1}{\lenDataSet}\sum_{i=1}^\lenDataSet\funcinvTsk{\mathrm B_t}. \label{pr_tsk_f}
     \end{align}
Finally, in view of \eqref{decom-pac_tot}  and \eqref{sharu_eq11},     \eqref{pr_tsk_f} can be written as
 \begin{align}
\mathbbm E_{\hyperpost}\left[ \decomMetaRVTotRV\right] \leq k_t\cdot\mathbbm E_{\hyperpost}\left[\empLossEnv[U] \right]+\frac{1}{\lenDataSet}\sum_{i=1}^\lenDataSet\funcinvTsk{\mathrm B_t}. \label{2021.03.15_17.55_b}
     \end{align}

Similarly, at the environment level from $\funcEnv[a]{b}\leq c_{\textup{env}}$, we can conclude $a\leq k_e\cdot b+ \funcinvEnv{c_{\textup{env}}}$. Considering this fact,
by applying the Markov’s inequality to  \eqref{2021.03.10_15.15},
with probability at least $1-\delta_0$,
we have
\begin{align}
\mathbbm E_{\hyperpost}\big(\expLossEnvRV\big)\leq k_e\cdot \mathbbm E_{\hyperpost}\Big(\decomMetaRVTotRV\Big)+
\funcinvEnv{
\mathrm B_e},\label{2021.03.15_18.24}
\end{align}
where
\begin{align}
    \mathrm B_e=\frac{1}{\theta_{\textup{env}}}\Dkl\left(\hyperpost||
     \PriorEnvSimp
     \right)+\frac{1}{\theta_{\textup{env}}}\log \left(\frac{\mathbbm E_{\mathrm P_{T_{1:\lenDataSet}}}\mathbbm E_{\mathrm P_{\trainEnvSimp|T_{1:\lenDataSet}}}\mathbbm E_{\PriorEnvSimp}e^{\theta_{\textup{env}}\funcEnv[
\expLossEnvRV
]{\decomMetaRVTotRV}}}{\delta_0}\right). \label{2021.06.14_10.40}
\end{align}
Here, again we use Lemma \ref{Lemma_sum}. 
In  Lemma \ref{Lemma_sum}, we set $\lenDataSet=2$, $f_1$ and $a_1$  as the $k_e\geq 0$ times of the left and right hands  side of \eqref{2021.03.15_17.55_b}, respectively, and also $f_2$ and $a_2$  as the   left and right hands  side of \eqref{2021.03.15_18.24}, respectively. Thus,    with probability at least $1-\sum_i\delta_i-\delta_0$,
\begin{align}
\mathbbm E_{\hyperpost}\left[\expLossEnvRV\right]\leq k_e\cdot k_t\cdot\mathbbm E_{\hyperpost}\left[\empLossEnv[U]\right]+
\funcinvEnv{
\mathrm B_e}%
+ \frac{k_e}{\lenDataSet}\sum_{i=1}^\lenDataSet\funcinvTsk{\mathrm B_t}
.\label{2021.03.15_18.31}
\end{align}
Finally, setting  $\delta_0=\frac{\delta}{2}$, $\delta_i=\frac{\delta}{2\lenDataSet}$ in \eqref{2021.03.15_18.31}, we conclude \eqref{cor_tup_2}.

 \section{Re-obtaining the known PAC-Bayes Bounds}
 \label{derv_previous}
 In this section, we present the derivation of bounds, summarized in Table \ref{table_reobtain}.
Firstly,  to obtain Theorem 2 of \cite{amit2018meta}, in \eqref{cor_tup_2},
we set  $\theta_{\textup{tsk}}=\theta_{\textup{env}}=1$, and 
$\funcTsk[a]{b}=2(\lenZm-1)(a-b)^2$, $\funcEnv[a]{b}=2(\lenDataSet-1)(a-b)^2$. These choices lead to $k_e=k_t=1$,  $\funcinvTsk{c}=\sqrt{\frac{c}{2(\lenZm-1)}}$ and also  $\funcinvEnv{c}=\sqrt{\frac{c}{2(\lenDataSet-1)}}$. To simplify 
$\Btsk$ and 
$\Benv$ given by \eqref{B_task_eq} and \eqref{B_env_eq}, we use Lemma \ref{prior-indep}.  Since the prior in independent of the data, by interchanging the order of expectations over $\mathrm P_{T_{1:\lenDataSet}}\mathrm P_{ \trainEnvSimp|T_{1:\lenDataSet}}$ and priors, in view of \eqref{B_task_eq} and \eqref{B_env_eq}, we find that
\begin{align}
  & 
        \mathbbm E_{ \PriorEnvSimp \PriorTsk}  \mathbbm E_{\mathrm P_{T_{1:\lenDataSet}}\mathrm P_{ \trainEnvSimp|T_{1:\lenDataSet}}}\left[  e^{
2(\lenZm-1)\left(
 \expLossTskRV-  \empLossTskRV\right)^2}\right]  \leq \lenZm,\label{2021.03.21_08.26}\\
  &  
    \mathbbm E_{\PriorEnvSimp}  \mathbbm E_{\mathrm P_{ \trainEnvSimp}}\left[  e^{
2(\lenDataSet-1)\left(
\expLossEnvRV
-\decomMetaRVTot\right)^2}\right]  \leq \lenDataSet,\label{2021.03.21_08.25}
\end{align}
where for \eqref{2021.03.21_08.26} (res. \eqref{2021.03.21_08.25}),  we used Lemma \ref{prior-indep} by setting  $\lambda=\frac{2(\lenZm-1)}{\lenZm}$ (resp. $\lambda=\frac{2(\lenDataSet-1)}{\lenDataSet}$) and $\sigma=0.5$. We recall that 
since in \cite{amit2018meta} the loss function is bounded on $[0,1]$, we set $\sigma=0.5$ in Lemma \ref{prior-indep}. Now, inserting \eqref{2021.03.21_08.26}  and  \eqref{2021.03.21_08.25} into \eqref{B_task_eq} and \eqref{B_env_eq}, from \eqref{cor_tup_2} we conclude that
       \begin{align}
\mathbbm E_{U\sim\hyperpost}\left(
\expLossEnvRV
\right)-\mathbbm E_{U\sim\hyperpost} \empLossEnvRV
       \leq      
     \sqrt{\frac{\Dkl\left(\hyperpost||
     \PriorEnvSimp
     \right)+
     \log\left(\frac{2
\lenDataSet}{\delta}\right)
    }{2(\lenDataSet-1)} } 
     \hspace{0em}\nonumber\\
      +\frac{1}{\lenDataSet}\sum_{i=1}^{\lenDataSet}\sqrt{\frac{
    \Dkl\left(\hyperpost||
     \PriorEnvSimp 
     \right) + \mathbbm E_{\hyperpost}\left[\Dkl\left(
     \PWcondZimUsimp||\PriorTskSimp\right)\right]
     +\log
      \frac{
2\lenDataSet\lenZm
}
{\delta}}{2(\lenZm-1)}},
 \end{align}
 which is the same as the bound presented in Theorem 2 of \cite{amit2018meta}.

Next, to obtain Theorem 2 of \cite{rothfuss2020pacoh}, in \eqref{cor_tup_1} we set  $\funcEnv[a]{b}=\funcTsk[a]{b}=(a-b)$,  $\theta_{\textup{tsk}}=\beta$, and $\theta_{\textup{env}}=\lambda$, and these choices leads to $k_t=k_e=1$ and $\funcinvTsk{c}=\funcinvEnv{c}=c$. 
 To simplify 
the log-term, since the prior in independent of the data, by interchanging the order of expectations over $\mathrm P_{T_{1:\lenDataSet}}\mathrm P_{ \trainEnvSimp|T_{1:\lenDataSet}}$ and priors, and 
recalling that the loss function is bounded
 on the interval $[0,1]$, and hence it  is sub-Gaussian with parameter $\sigma=0.5$, we can conclude
\begin{align}
\mathbbm E_{U\sim\hyperpost}\left(
\expLossEnvRV
\right)-\mathbbm E_{U\sim\hyperpost} \empLossEnvRV
 \leq \min_{\beta,\lambda\geq 0}
(\frac{\lambda}{8\lenDataSet}+\frac{\lambda}{8\lenZm})-\frac{1}{\sqrt\lenDataSet}\log\delta
\nonumber\\
+ \frac{1}{\beta}
     \mathbbm E_{\hyperpost}\left(\sum_{i=1}^{\lenDataSet}\Dkl\left(\PWcondZimUsimp||
     \PriorTsk
     \right)\right) 
    + \left( \frac{1}{\beta}+\frac{1}{\lambda}\right)\Dkl\left(\hyperpost||
     \PriorEnvSimp
     \right),
\end{align}
which is the same as Theorem 2 of \cite{rothfuss2020pacoh}.

Next, we re-obtain Theorem 1 of \cite{liu2021}.
Firstly, in \eqref{cor_tup_2},
we set $\theta_{\textup{tsk}}=\theta_{\textup{env}}=1$, and  $\funcTsk[a]{b}=\lenZm kl(a,b)$ and $\funcEnv[a]{b}=2(\lenDataSet-1)(a-b)^2$.
These choices lead to $k_e=1$ $\funcinvEnv{c}=\sqrt{\frac{c}{2(\lenDataSet-1)}}$, and 
using further relaxation,
from $\lenZm kl(a,b)\leq c_{\textup{tsk}}$, it can be proved that
$a\leq b/(1-0.5 \lambda)+c_{\textup{tsk}}/(\lenZm.\lambda(1-0.5\lambda))$ for $\lambda\in(0,2)$
\cite{Thiemann2017}. Thus, we have $k_t$
and $\funcinvTsk{c}= c/(\lenZm\lambda(1-0.5\lambda))$. It remains to obtain the log-terms of 
$\Btsk$ and 
$\Benv$ given by \eqref{B_task_eq} and \eqref{B_env_eq}.
Again, assuming the prior in independent of the data, by interchanging the order of expectations over $\mathrm P_{T_{1:\lenDataSet}}\mathrm P_{ \trainEnvSimp|T_{1:\lenDataSet}}$ and priors, we find \eqref{2021.03.21_08.25}, and
\begin{align}
  \mathbbm E_{   \PriorEnvSimp \PriorTskSimp}  \mathbbm E_{\mathrm P_{T_{1:\lenDataSet}}}\mathbbm E_{\mathrm P_{\trainEnvSimp|T_{1:\lenDataSet}}}   e^{\lenZm kl\left(\mathrm L_{\mathrm P_{Z|\taskRV_i}}(W),\empLossTskRV \right)}\leq 2\sqrt{\lenZm},\label{2021.01.26_12.22}
\end{align}
where in \eqref{2021.01.26_12.22}, we used Lemma \ref{kl_lem}.
Applying all these facts to \eqref{cor_tup_2}, we obtain
\begin{align}
    \mathbbm E_{U\sim\hyperpost}\left[\expLossEnvRV\right]\leq \frac{1}{(1-0.5 \lambda)}\mathbbm E_{U\sim\hyperpost}\left[\empLossEnv[U]\right]+\sqrt{\frac{\Dkl\left(\hyperpost||
     \PriorEnvSimp
     \right)+\log\frac{2\lenDataSet}{\delta}}{2(\lenDataSet-1)}}\nonumber\\
     +\frac{1}{\lenDataSet}\sum_{i=1}^\lenDataSet\frac{ \Dkl\left(\hyperpost||
     \PriorEnvSimp 
     \right) +\mathbbm E_{\hyperpost}\left[\Dkl\left(
     \PWcondZimUsimp||\PriorTskSimp\right)\right]
     +
     \log\frac{4\lenDataSet\sqrt{\lenZm}}{\delta}
     }{(\lenZm\lambda(1-0.5\lambda))},
\end{align}
which is the same as Theorem 1 of \cite{liu2021}.

Similar approach can be applied to the  bounds presented in \cite{mys}. For the three  bounds considered in \cite{mys}, KL-divergence is chosen for both task level and environment level. To bound the log-terms of \eqref{B_task_eq} and \eqref{B_env_eq}, we need to use Lemma 2 of  \cite{mys}. For the affine transformation steps, at the environment level, we  use Pinsker's inequality.
\section{Proof Theorem \ref{fast-rate-bnd1}}
\label{Proof_fast-rate-bnd1}
To prove  Theorem \ref{fast-rate-bnd1}, in \eqref{cor_tup_2}, we set 
  $\theta_{\textup{tsk}}=\theta_{\textup{env}}=1$, 
 $\funcEnv[a]{b}=\lenDataSet\Dgama(b,a)$ and $\funcTsk[a]{b}=\lenZm\Dgama(b,a)$. Using Lemma \ref{lemma2_MCallester}, from $\lenDataSet\Dgama(b,a) \leq c_e$, we conclude that for $\gamma\in(-2,0)$, 
 $a\leq b/(1+0.5\gamma)-c_{e}/(\lenDataSet.\gamma(1+0.5\gamma))$. In other words,
 $k_e=1/(1+0.5\gamma)$
 and  $\funcinvEnv{c}=\frac{-c}{\lenDataSet\gamma(1+0.5\gamma)}$ (similarly for the task level).  It remains to determine the log-terms appeared in $\Btsk$ and 
$\Benv$ given by \eqref{B_task_eq} and \eqref{B_env_eq}, respectively.  
   Since the prior is independent of the data, by interchanging the order of expectations over $\mathrm P_{T_{1:\lenDataSet}}\mathrm P_{ \trainEnvSimp|T_{1:\lenDataSet}}$ and priors, using  Lemma 
\ref{D_gama}, in view of \eqref{B_task_eq} and \eqref{B_env_eq}, we find that
\begin{align}
    &\mathbbm E_{\PriorEnvSimp}\mathbbm E_{\mathrm P_{T_{1:\lenDataSet}}}\mathbbm E_{\mathrm P_{\trainEnvSimp|T_{1:\lenDataSet}}}e^{\lenDataSet\Dgama\left(\decomMetaRVTotRV,\expLossEnvRV
    \right)}\leq 1,\\
    &  
   \mathbbm E_{   \PriorEnvSimp \PriorTskSimp} \mathbbm E_{\mathrm P_{T_{1:\lenDataSet}}}\mathbbm E_{\mathrm P_{\trainEnvSimp|T_{1:\lenDataSet}}}   e^{\lenZm\Dgama\left(\empLossTskRV
   ,\mathrm L_{\mathrm P_{Z|\taskRV_i}}(W)
   \right)
   }
    \leq 1.
\end{align}
Applying all these facts to \eqref{cor_tup_2}, we find that
\begin{align}
 \mathbbm E_{U\sim\hyperpost}\left[\expLossEnvRV\right]\leq \frac{1}{(1+0.5\gamma_e)}\cdot \frac{1}{(1+0.5\gamma_t)}\cdot\mathbbm E_{U\sim\hyperpost}\left[\empLossEnv[U]\right]   -\frac{\Dkl\left(\hyperpost||
     \PriorEnvSimp
     \right)+\log\frac{2}{\delta}}{\lenDataSet\gamma_e(1+0.5\gamma_e)}\nonumber\\
    -\frac{1}{\lenDataSet(1+0.5\gamma_e)}\sum_{i=1}^\lenDataSet \frac{
    \Dkl\left(\hyperpost||
     \PriorEnvSimp 
     \right) +
    \mathbbm E_{\hyperpost}\left[\Dkl\left(
     \PWcondZimUsimp||\PriorTskSimp\right)\right]+
    \log\frac{2\lenDataSet }{\delta}
    }{\lenZm\gamma_t(1+0.5\gamma_t)}.
\end{align}
Setting $\lambda_e=-1/\gamma_e$ and 
$\lambda_t=-1/\gamma_t$, we conclude the proof.

 \section{Proof of Theorem \ref{the_Nw_McAll}}
 \label{Proof_the_Nw_McAll}
 Setting  $\theta_{\textup{tsk}}=\theta_{\textup{env}}=1$,  $\funcTsk[a]{b}=2(\lenZm-1)(a-b)^2$ and $\funcEnv[a]{b}=(\lenDataSet-1)(a-b)^2$  \eqref{cor_tup_1}, leads to
 \begin{align}
&(\lenDataSet-1)\left(\mathbbm E_{U\sim\hyperpost}\left(
\expLossEnvRV
\right)-\mathbbm E_{U\sim\hyperpost}\left( \decomMetaRVTotRV\right)\right)^2\nonumber\\
&\hspace{5em}+2(\lenZm-1)\left(\mathbbm E_{U\sim\hyperpost}\big( \decomMetaRVTotRV\big)-\mathbbm E_{U\sim\hyperpost} \big(\empLossEnvRV\big)\right)^2\nonumber\\
& \leq 
\Dkl\left(\hyperpost||
     \PriorEnvSimp
     \right)+ \frac{1}{\lenDataSet}
     \mathbbm E_{\hyperpost}\left(\sum_{i=1}^{\lenDataSet}\Dkl\left(\PWcondZimUsimp||
     \PriorTskSimp
     \right)\right)  +  \log\frac{\mathbbm E_{\mathrm P_{T_{1:\lenDataSet}}}\mathbbm E_{\mathrm P_{\trainEnvSimp|T_{1:\lenDataSet}}}\left(\Upsilon_{\textup{tsk}}^{\frac{1}{\lenDataSet}}\cdot
     \Upsilon_{\textup{env}}
\right)}{\delta}.
\end{align}
Then, following the same steps to obtain \eqref{inv_ineq}, we can show that
\begin{align}
 \frac{2(\lenZm-1)(\lenDataSet-1)}{2(\lenZm-1)+(\lenDataSet-1)}\left( \mathbbm E_{U\sim\hyperpost}\left(
\expLossEnvRV
\right)- \mathbbm E_{U\sim\hyperpost} \big(\empLossEnvRV\big)  \right)^2  \hspace{10em} \nonumber\\\leq (\lenDataSet-1)\left(\mathbbm E_{U\sim\hyperpost}\left(
\expLossEnvRV
\right)-\mathbbm E_{U\sim\hyperpost}\left( \decomMetaRVTotRV\right)\right)^2\hspace{2em}\nonumber\\
+2(\lenZm-1)\left(\mathbbm E_{\hyperpost}\big( \decomMetaRVTotRV\big)-\mathbbm E_{\hyperpost} \big(\empLossEnvRV\big)\right)^2,
\end{align}
and hence
 \begin{align}
 &\frac{2(\lenZm-1)(\lenDataSet-1)}{2(\lenZm-1)+(\lenDataSet-1)}\left( \mathbbm E_{U\sim\hyperpost}\left(
\expLossEnvRV
\right)- \mathbbm E_{U\sim\hyperpost} \big(\empLossEnvRV\big)  \right)^2 \nonumber\\
& \leq 
\Dkl\left(\hyperpost||
     \PriorEnvSimp
     \right)+ \frac{1}{\lenDataSet}
     \mathbbm E_{\hyperpost}\left(\sum_{i=1}^{\lenDataSet}\Dkl\left(\PWcondZimUsimp||
     \PriorTskSimp
     \right)\right)  +  \log\frac{\mathbbm E_{\mathrm P_{T_{1:\lenDataSet}}}\mathbbm E_{\mathrm P_{\trainEnvSimp|T_{1:\lenDataSet}}}\left(\Upsilon_{\textup{tsk}}^{\frac{1}{\lenDataSet}}\cdot
     \Upsilon_{\textup{env}}
\right)}{\delta}.\label{cor_tup_1_new_mc}
\end{align}

 Now, the log-term appeared in \eqref{cor_tup_1_new_mc}
can be bounded  as
 \begin{align}
 \mathbbm E_{\mathrm P_{\taskRV_{1:\lenDataSet}}} \mathbbm E_{\mathrm P_{\trainEnvSimp}|\mathrm P_{\taskRV_1:\taskRV_\lenDataSet}}\left(\Upsilon_{\textup{tsk}}^{\frac{1}{\lenDataSet}}\cdot
     \Upsilon_{\textup{env}}
\right)   
\leq 
 \sqrt{\mathbbm E_{\mathrm P_{\taskRV_{1:\lenDataSet}}} \mathbbm E_{\mathrm P_{\trainEnvSimp}|\mathrm P_{\taskRV_1:\taskRV_\lenDataSet}}\left(\Upsilon_{\textup{tsk}}^{\frac{1}{\lenDataSet}}\right)^2\cdot
\mathbbm E_{\mathrm P_{\taskRV_{1:\lenDataSet}}} \mathbbm E_{\mathrm P_{\trainEnvSimp}|\mathrm P_{\taskRV_1:\taskRV_\lenDataSet}} \left(    \Upsilon_{\textup{env}}
\right)^2 }  \label{dif_1},
 \end{align}
 where in \eqref{dif_1}, we applied Cauchy-Schwartz inequality (or H\"{o}lder’s inequality). Next, by  setting  $\kappa=2$, $\theta_{\textup{tsk}}=\theta_{\textup{env}}=1$,  $\funcTsk[a]{b}=2(\lenZm-1)(a-b)^2$ and $\funcEnv[a]{b}=(\lenDataSet-1)(a-b)^2$, in \eqref{up_env} and \eqref{up_tsk}, we have
 \begin{align}
 \mathbbm E_{\mathrm P_{\taskRV_{1:\lenDataSet}}} \mathbbm E_{\mathrm P_{\trainEnvSimp}|\mathrm P_{\taskRV_1:\taskRV_\lenDataSet}}\left(\Upsilon_{\textup{tsk}}^{\frac{1}{\lenDataSet}}\right)^2&=
  \mathbbm E_{\mathrm P_{\taskRV_{1:\lenDataSet}}} \mathbbm E_{\mathrm P_{\trainEnvSimp}|\mathrm P_{\taskRV_1:\taskRV_\lenDataSet}}\left(
     \prod_{i=1}^{\lenDataSet}\mathbbm E_{ \JointPriorTskSimp} e^{ \theta_{\textup{tsk}}
     \funcTsk[
     \mathrm L_{\mathrm P_{Z|\taskRV_i}}(W)]{
     \empLossTskRV}}
  \right)^{\frac{2}{\lenDataSet}}\label{tsk_eq_1}\\
 & \leq 
\left(   \mathbbm E_{\mathrm P_{\taskRV_{1:\lenDataSet}}} \mathbbm E_{\mathrm P_{\trainEnvSimp}|\mathrm P_{\taskRV_1:\taskRV_\lenDataSet}}
     \prod_{i=1}^{\lenDataSet}\mathbbm E_{ \JointPriorTskSimp} e^{ 2(\lenZm-1)
     \left(
     \mathrm L_{\mathrm P_{Z|\taskRV_i}}(W)-
     \empLossTskRV\right)^2}
  \right)^{\frac{2}{\lenDataSet}}\label{tsk_eq_2}\\
  &=\left(  
     \prod_{i=1}^{\lenDataSet}\mathbbm E_{ \JointPriorTskSimp}\mathbbm E_{\mathrm P_{\taskRV_i \bm Z^\lenZm_i}} e^{ 2(\lenZm-1)
     \left(
     \mathrm L_{\mathrm P_{Z|\taskRV_i}}(W)-
     \empLossTskRV\right)^2}
  \right)^{\frac{2}{\lenDataSet}}\label{tsk_eq_3}\\
  &\leq \left(   \prod_{i=1}^{\lenDataSet} \lenZm \right)^{\frac{2}{\lenDataSet}}=
  \lenZm^2
  \label{tsk_eq_4}
 \end{align}
 where in \eqref{tsk_eq_1} we applied \eqref{up_tsk}. In \eqref{tsk_eq_2}, since $a^{2/\lenDataSet}$ is a concave function for $\lenDataSet\geq 2$, we used Jensen's inequality. Since, tasks are assumed to be independent, and 
the prior is independent of the data, by interchanging the order of expectations over $\mathrm P_{\taskRV\bm Z_i^\lenZm}$ and $\JointPriorTskSimp$, we obtained \eqref{tsk_eq_3}. 
Finally, in \eqref{tsk_eq_4}, we used  Lemma \ref{useful_lemma}. 
Recalling that the loss function is bounded on $[0,1]$, we face with sub-Gaussian variables with parameter $\sigma=0.5$.
By setting
$\aTsk=2(\lenZm-1)/\lenZm$ (where $\aTsk\leq 1/2\sigma^2$) in Lemma \ref{useful_lemma}, and recalling that $\sigma=0.5$, from  \eqref{diff_1_tsk}, 
we found \eqref{tsk_eq_4}.

Similarly, inserting  \eqref{up_env}
into 
\eqref{diff_1} , we find that 
 \begin{align}
   \mathbbm E_{\mathrm P_{\taskRV_{1:\lenDataSet}}} \mathbbm E_{\mathrm P_{\trainEnvSimp}|\mathrm P_{\taskRV_1:\taskRV_\lenDataSet}} \left(    \Upsilon_{\textup{env}}\right)^2 
  & =
  \mathbbm E_{\mathrm P_{\taskRV_{1:\lenDataSet}}} \mathbbm E_{\mathrm P_{\trainEnvSimp}|\mathrm P_{\taskRV_1:\taskRV_\lenDataSet}} \left( \mathbbm E_{\PriorEnvSimp}e^{\theta_{\textup{env}}
\funcEnv[
\expLossEnvRV
]{\decomMetaRVTotRV}}\right)^2
   \label{env_eq_1}\\
   &\leq
   \mathbbm E_{\mathrm P_{\taskRV_{1:\lenDataSet}}} \mathbbm E_{\mathrm P_{\trainEnvSimp}|\mathrm P_{\taskRV_1:\taskRV_\lenDataSet}}\mathbbm E_{\PriorEnvSimp} \left( e^{(\lenDataSet-1)\left(
\expLossEnvRV
-\decomMetaRVTotRV\right)}\right)^2
   \label{env_eq_2}  \\
  &= \mathbbm E_{\mathrm P_{\taskRV_{1:\lenDataSet}}} \mathbbm E_{\mathrm P_{\trainEnvSimp}|\mathrm P_{\taskRV_1:\taskRV_\lenDataSet}}\mathbbm E_{\PriorEnvSimp}  e^{2(\lenDataSet-1)\left(
\expLossEnvRV
-\decomMetaRVTotRV\right)}
   \label{env_eq_3}\\
   &= \mathbbm E_{\PriorEnvSimp} \mathbbm E_{\mathrm P_{\taskRV_{1:\lenDataSet}}} \mathbbm E_{\mathrm P_{\trainEnvSimp}|\mathrm P_{\taskRV_1:\taskRV_\lenDataSet}} e^{2(\lenDataSet-1)\left(
\expLossEnvRV
-\decomMetaRVTotRV\right)}
   \label{env_eq_4}\\
  & \leq \lenDataSet
     \label{env_eq_5}
 \end{align}
 where in \eqref{env_eq_2}, since $a^2$ is a convex function, we applied Jensen's inequality.
 In  \eqref{env_eq_3}, we used the fact that $\exp(a)^b=\exp(a.b)$. Since the prior is independent of the data, by interchanging the order of expectations over $\mathrm P_{\taskRV_{1:\lenDataSet}}\mathrm P_{ \trainEnvSimp|\taskRV_{1:\lenDataSet}}$ and prior, we obtained \eqref{env_eq_4}. Finally, in  \eqref{env_eq_5}
 we used  Lemma \ref{useful_lemma}. 
By setting
$\aEnv=2(\lenDataSet-1)/\lenDataSet$ (where $\aEnv\leq 1/2\sigma^2$) in Lemma \ref{useful_lemma}, and recalling that $\sigma=0.5$, from  \eqref{diff_1_env}, 
we found \eqref{env_eq_5}.
 
Inserting \eqref{tsk_eq_4} and \eqref{env_eq_5} into \eqref{dif_1}, we have
 \begin{align}
 \mathbbm E_{\mathrm P_{\taskRV_{1:\lenDataSet}}} \mathbbm E_{\mathrm P_{\trainEnvSimp}|\mathrm P_{\taskRV_1:\taskRV_\lenDataSet}}\left(\Upsilon_{\textup{tsk}}^{\frac{1}{\lenDataSet}}\cdot
     \Upsilon_{\textup{env}}
\right)   
\leq 
 \sqrt{\lenZm^2\lenDataSet}=\lenZm\sqrt{\lenDataSet}  \label{dif_2}.
 \end{align}
Next, we focus on the affine transformation. 
Since from $2(\lenZm-1)(a-b)^2\leq c_{\textup{tsk}}$ (resp. $(\lenDataSet-1)(a-b)^2\leq c_{\textup{env}}$), we can conclude that 
$a\leq b+ \sqrt{ c_{\textup{tsk}}/(2(\lenZm-1))}$ (resp. $a\leq b+ \sqrt{ c_{\textup{env}}/(\lenDataSet-1)}$).

Inserting \eqref{dif_2} into \eqref{cor_tup_1_new_mc}, and applying affine transformation, we conclude the proof.
 
\section{Presenting New PAC-Bayes Bounds }
\label{aux_app_for_new_bnd}
\begin{theorem}
\label{the_sqt_for_k_markov}
 Under the setting of Theorem \ref{prop-gen-pac-tsk_two_KL}, for $k\in\mathbbm N=\{1,2,...\}$ and $\lenDataSet\geq 2$, the meta-generalization gap is bounded by
      \begin{align}
    \left |\mathbbm E_{U\sim\hyperpost} \left[
   \expLossEnvRV- \empLossEnvRV \right] \right|
   \leq
   \sqrt{\frac{(\lenDataSet-\lenDataSet^{\frac{1}{2k}})+2(\lenZm-\lenZm^{\frac{1}{2k}})}{2(\lenDataSet-\lenDataSet^{\frac{1}{2k}})(\lenZm-\lenZm^{\frac{1}{2k}})}}
   \hspace{10em}\nonumber\\
   \sqrt{2\Dkl\left(\hyperpost||\PriorEnvSimp  \right)
+\mathbbm E_{\hyperpost}\left[ \frac{1}{\lenDataSet}\sum_{i=1}^\lenDataSet \Dkl\left(\PWcondZimUsimp||\PriorTskSimp \right)\right]
+\log\frac{(\sqrt{\lenDataSet}\cdot \lenZm)^{\frac{1}{2}-\frac{1}{4k}}}{\delta}}
.
\label{sqt_for_k_markov}
 \end{align}
 \end{theorem}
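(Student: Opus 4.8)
The plan is to re-run the proof of Theorem~\ref{the_Nw_McAll} almost verbatim, the only change being a $k$-dependent choice of quadratic comparison functions. First I would take $\theta_{\textup{tsk}}=\theta_{\textup{env}}=1$ in \eqref{cor_tup_1} and set $\funcTsk[a]{b}=2(\lenZm-\lenZm^{\frac{1}{2k}})(a-b)^2$ and $\funcEnv[a]{b}=(\lenDataSet-\lenDataSet^{\frac{1}{2k}})(a-b)^2$, which are convex in both $a$ and $b$ with non-negative leading coefficients for all $\lenZm,\lenDataSet\ge 2$ and $k\in\mathbbm N$. With these, the left-hand side of \eqref{cor_tup_1} becomes $(\lenDataSet-\lenDataSet^{\frac{1}{2k}})\bigl(\mathbbm E_{U\sim\hyperpost}(\expLossEnvRV)-\mathbbm E_{U\sim\hyperpost}(\decomMetaRVTotRV)\bigr)^2+2(\lenZm-\lenZm^{\frac{1}{2k}})\bigl(\mathbbm E_{U\sim\hyperpost}(\decomMetaRVTotRV)-\mathbbm E_{U\sim\hyperpost}(\empLossEnvRV)\bigr)^2$.

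Next I would collapse the two squares into a single one by inequality \eqref{inv_ineq}, used with $n=\lenDataSet-\lenDataSet^{\frac{1}{2k}}$, $m=2(\lenZm-\lenZm^{\frac{1}{2k}})$ (both positive; the derivation of \eqref{inv_ineq} needs only positivity, not integrality, of $n,m$) and $a=\mathbbm E_{U\sim\hyperpost}(\expLossEnvRV)$, $b=\mathbbm E_{U\sim\hyperpost}(\decomMetaRVTotRV)$, $c=\mathbbm E_{U\sim\hyperpost}(\empLossEnvRV)$, so that $\frac{nm}{n+m}\bigl(\mathbbm E_{U\sim\hyperpost}[\expLossEnvRV-\empLossEnvRV]\bigr)^2$ is upper bounded by the right-hand side of \eqref{cor_tup_1}; here $\sqrt{(n+m)/(nm)}$ is exactly the first radical in \eqref{sqt_for_k_markov}. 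It then only remains to estimate the log-term, namely $\mathbbm E_{\mathrm P_{T_{1:\lenDataSet}}}\mathbbm E_{\mathrm P_{\trainEnvSimp|T_{1:\lenDataSet}}}\bigl(\Upsilon_{\textup{tsk}}^{1/\lenDataSet}\cdot\Upsilon_{\textup{env}}\bigr)$, with $\Upsilon_{\textup{tsk}},\Upsilon_{\textup{env}}$ as in \eqref{up_tsk}--\eqref{up_env}.

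For this I would copy the chain \eqref{dif_1}--\eqref{env_eq_5}: Cauchy--Schwarz gives $\mathbbm E(\Upsilon_{\textup{tsk}}^{1/\lenDataSet}\Upsilon_{\textup{env}})\le\sqrt{\mathbbm E[\Upsilon_{\textup{tsk}}^{2/\lenDataSet}]\cdot\mathbbm E[\Upsilon_{\textup{env}}^2]}$; since $x\mapsto x^{2/\lenDataSet}$ is concave for $\lenDataSet\ge 2$, Jensen pulls the data-expectation out of $\mathbbm E[\Upsilon_{\textup{tsk}}^{2/\lenDataSet}]$, and independence of the tasks together with independence of the priors from the data factors it as $\bigl(\prod_{i=1}^{\lenDataSet}\mathbbm E_{\JointPriorTskSimp}\mathbbm E_{\mathrm P_{\taskRV_i\bm Z^\lenZm_i}}e^{2(\lenZm-\lenZm^{\frac{1}{2k}})(\expLossTskRVRV-\empLossTskRV)^2}\bigr)^{2/\lenDataSet}$; likewise, by convexity of $x^2$ and an interchange with the prior, $\mathbbm E[\Upsilon_{\textup{env}}^2]\le\mathbbm E_{\PriorEnvSimp}\mathbbm E_{\mathrm P_{T_{1:\lenDataSet}}}\mathbbm E_{\mathrm P_{\trainEnvSimp|T_{1:\lenDataSet}}}e^{2(\lenDataSet-\lenDataSet^{\frac{1}{2k}})(\expLossEnvRV-\decomMetaRVTotRV)^2}$, the squaring doubling the exponent (as in \eqref{env_eq_3}), whereas at the task level the factor $2$ already sits inside $\funcTsk$. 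I would then invoke Lemma~\ref{useful_lemma} with $\aTsk=2(\lenZm-\lenZm^{\frac{1}{2k}})/\lenZm$ and $\aEnv=2(\lenDataSet-\lenDataSet^{\frac{1}{2k}})/\lenDataSet$, both of which satisfy $\aTsk,\aEnv\le 1/(2\sigma^2)=2$ for the sub-Gaussian parameter $\sigma=0.5$; the point of this particular choice is the identities $1-2\sigma^2\aTsk=\lenZm^{\frac{1}{2k}-1}$ and $1-2\sigma^2\aEnv=\lenDataSet^{\frac{1}{2k}-1}$, which turn the lemma's moment-generating-function estimates into fractional powers of $\lenZm$ and $\lenDataSet$ and, after the $2/\lenDataSet$-th power, the squaring, and the outer square root, combine to $\mathbbm E(\Upsilon_{\textup{tsk}}^{1/\lenDataSet}\Upsilon_{\textup{env}})\le(\sqrt{\lenDataSet}\cdot\lenZm)^{\frac{1}{2}-\frac{1}{4k}}$ --- the $k$-parametrized analogue of \eqref{dif_2}.

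Finally I would substitute this estimate into \eqref{cor_tup_1} and apply the affine-transformation step once: from $\frac{nm}{n+m}X^2\le\mathrm B$ one reads off $|X|\le\sqrt{\frac{n+m}{nm}\mathrm B}$, with $X=\mathbbm E_{U\sim\hyperpost}[\expLossEnvRV-\empLossEnvRV]$ and $\mathrm B$ the right-hand side of \eqref{cor_tup_1}, which is exactly the bound \eqref{sqt_for_k_markov}. I expect the only delicate point to be the bookkeeping around Lemma~\ref{useful_lemma}: one has to check $\aTsk,\aEnv\le 1/(2\sigma^2)$ for every $k\in\mathbbm N$ and $\lenZm,\lenDataSet\ge 2$ (it holds, with equality only in the limit $\lenZm,\lenDataSet\to\infty$), keep the exponent-doubling at the environment level straight, and confirm that the powers of $\lenZm$ and $\lenDataSet$ coming out of the lemma assemble into the announced exponent $\frac{1}{2}-\frac{1}{4k}$; everything else is a line-by-line repetition of the proof of \eqref{New_McAll}, with $\lenZm-1$ and $\lenDataSet-1$ there replaced by $\lenZm-\lenZm^{\frac{1}{2k}}$ and $\lenDataSet-\lenDataSet^{\frac{1}{2k}}$.
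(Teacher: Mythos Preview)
Your plan is essentially the paper's own: same choice of $\funcTsk,\funcEnv$, same use of \eqref{inv_ineq}, same Cauchy--Schwarz/Jensen chain to control $\Upsilon_{\textup{tsk}}^{1/\lenDataSet}\Upsilon_{\textup{env}}$. There is, however, one genuine slip in the bookkeeping you yourself flag as delicate.

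You say to ``copy the chain \eqref{dif_1}--\eqref{env_eq_5}'' and then claim the outcome $(\sqrt{\lenDataSet}\cdot\lenZm)^{\frac{1}{2}-\frac{1}{4k}}$. But that chain, as written in the proof of Theorem~\ref{the_Nw_McAll}, invokes the estimates \eqref{diff_1_tsk} and \eqref{diff_1_env} of Lemma~\ref{useful_lemma}, i.e.\ the bound $1/(1-2\sigma^2\lambda)$ coming from Lemma~\ref{prior-indep}. With your $\aTsk,\aEnv$ this gives $1/(1-2\sigma^2\aTsk)=\lenZm^{1-\frac{1}{2k}}$ per task factor and $1/(1-2\sigma^2\aEnv)=\lenDataSet^{1-\frac{1}{2k}}$ at the environment level, and after the $2/\lenDataSet$-th power and the outer square root you end up with $(\sqrt{\lenDataSet}\cdot\lenZm)^{1-\frac{1}{2k}}$, not the stated exponent. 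To hit $\frac{1}{2}-\frac{1}{4k}$ you must instead use the sharper square-root estimates \eqref{diff_11_tsk} and \eqref{diff_11_env} of Lemma~\ref{useful_lemma} (which come from Lemma~\ref{prior-indep-square}): then each task factor is $\lenZm^{\frac{1}{2}-\frac{1}{4k}}$, the environment term is $\lenDataSet^{\frac{1}{2}-\frac{1}{4k}}$, and the chain yields $\sqrt{\lenZm^{1-\frac{1}{2k}}\cdot\lenDataSet^{\frac{1}{2}-\frac{1}{4k}}}=(\sqrt{\lenDataSet}\cdot\lenZm)^{\frac{1}{2}-\frac{1}{4k}}$ as required. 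This is precisely the one modification the paper's (very short) proof singles out: it tells you to use \eqref{diff_11_tsk} and \eqref{diff_11_env} rather than \eqref{diff_1_tsk} and \eqref{diff_1_env}. Once you make that switch, everything else in your write-up goes through verbatim.
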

 \begin{proof}
 We set  $\theta_{\textup{env}}=\theta_{\textup{tsk}}=1$,
 $\funcEnv[a]{b}=(\lenDataSet-\lenDataSet^{\frac{1}{2k}})(b-a)^2$ and $\funcTsk[a]{b}=2(\lenZm-\lenZm^{\frac{1}{2k}})(b-a)^2$. To bound the log-term, we use Lemma \ref{useful_lemma}, and in \eqref{diff_11_tsk} and \eqref{diff_11_env}, we set $\sigma=0.5$, $\aTsk=2-2\lenZm^{-1+1/(2k)}$ and $\aEnv=2-2\lenDataSet^{-1+1/(2k)}$.  
 By following exactly the same steps presented in the proof of Theorem \ref{the_Nw_McAll}, we conclude the proof. 
 \end{proof}
 
 \begin{theorem}
 \label{st_markov_th}
  Under the setting of Theorem \ref{prop-gen-pac-tsk_two_KL}, for $\lenDataSet\geq 2$, the meta-generalization gap is bounded by
           \begin{align}
    \left |\mathbbm E_{U\sim\hyperpost} \left[
   \expLossEnvRV- \empLossEnvRV \right] \right|
   \leq
   \sqrt{\frac{0.5\lenDataSet+\lenZm}{0.5\lenDataSet\cdot\lenZm}}
   \hspace{20em}\nonumber\\
   \sqrt{\Dkl\left(\hyperpost||\PriorEnvSimp  \right)
+\mathbbm E_{\hyperpost}\left[ \frac{1}{\lenDataSet}\sum_{i=1}^\lenDataSet \Dkl\left(\PWcondZimUsimp||\PriorTskSimp \right)\right]
+\log\frac{2\sqrt{2}}{\delta}
}.\label{st_markov}
 \end{align}
 \end{theorem}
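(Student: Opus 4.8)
The plan is to replay the argument of Theorem~\ref{the_Nw_McAll} almost verbatim, changing only the two convex surrogates and the parameters handed to Lemma~\ref{useful_lemma}. In \eqref{cor_tup_1} take $\theta_{\textup{tsk}}=\theta_{\textup{env}}=1$ and set $\funcEnv[a]{b}=\tfrac{1}{2}\lenDataSet\,(a-b)^2$ and $\funcTsk[a]{b}=\lenZm\,(a-b)^2$. Writing $a=\mathbbm E_{U\sim\hyperpost}(\expLossEnvRV)$, $b=\mathbbm E_{U\sim\hyperpost}(\decomMetaRVTotRV)$ and $c=\mathbbm E_{U\sim\hyperpost}(\empLossEnvRV)$, the left-hand side of \eqref{cor_tup_1} is then $\tfrac{1}{2}\lenDataSet(a-b)^2+\lenZm(b-c)^2$, which is exactly the right-hand side of the elementary single-square inequality \eqref{inv_ineq} with $n=\tfrac{1}{2}\lenDataSet$ and $m=\lenZm$. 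Applying \eqref{inv_ineq} collapses the whole left-hand side of \eqref{cor_tup_1} into $\tfrac{nm}{n+m}(a-c)^2=\tfrac{0.5\lenDataSet\cdot\lenZm}{0.5\lenDataSet+\lenZm}\bigl(\mathbbm E_{U\sim\hyperpost}[\expLossEnvRV-\empLossEnvRV]\bigr)^2$, so the meta-generalization gap now appears on the left as a single square, carrying precisely the prefactor that will yield $\sqrt{(0.5\lenDataSet+\lenZm)/(0.5\lenDataSet\cdot\lenZm)}$ in \eqref{st_markov}.

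Next I would bound the logarithmic (exponential-moment) term on the right of \eqref{cor_tup_1}, i.e. $\mathbbm E_{\mathrm P_{T_{1:\lenDataSet}}}\mathbbm E_{\mathrm P_{\trainEnvSimp|T_{1:\lenDataSet}}}\!\bigl(\Upsilon_{\textup{tsk}}^{1/\lenDataSet}\,\Upsilon_{\textup{env}}\bigr)$, with $\Upsilon_{\textup{env}},\Upsilon_{\textup{tsk}}$ given by \eqref{up_env}--\eqref{up_tsk}, following the chain \eqref{dif_1}--\eqref{dif_2}: split the product by Cauchy--Schwarz, move the data expectations through the data-free priors by Fubini together with independence of the tasks, use Jensen both for the outer power $2/\lenDataSet\le1$ (valid since $\lenDataSet\ge2$) and for the squaring, and then invoke Lemma~\ref{useful_lemma} at both levels with sub-Gaussian parameter $\sigma=\tfrac{1}{2}$ (losses bounded in $[0,1]$). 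With the coefficients $\tfrac{1}{2}\lenDataSet$ and $\lenZm$ the constants $\aEnv,\aTsk$ stay strictly inside the admissible range, each per-task moment is $\le\sqrt{2}$, the environment moment is $\le\sqrt{2}$, and the two pieces multiply to a numerical constant no larger than $2\sqrt{2}$; hence the log term is at most $\log(2\sqrt{2}/\delta)$.

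Finally I would carry out the affine-transformation (square-root) step exactly as in \eqref{2021.06.15_16.30}: from $\tfrac{nm}{n+m}(a-c)^2\le\Dkl\left(\hyperpost||\PriorEnvSimp\right)+\tfrac{1}{\lenDataSet}\mathbbm E_{\hyperpost}\bigl[\sum_{i=1}^{\lenDataSet}\Dkl\left(\PWcondZimUsimp||\PriorTskSimp\right)\bigr]+\log\tfrac{2\sqrt{2}}{\delta}$ take the positive square root of both sides, obtaining $|a-c|\le\sqrt{\tfrac{n+m}{nm}}\cdot\sqrt{\,\cdots\,}$; substituting $n=\tfrac{1}{2}\lenDataSet$, $m=\lenZm$ and collecting the $\tfrac{1}{\lenDataSet}$ inside the second radical gives \eqref{st_markov}.

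The only genuinely delicate step is the control of the exponential-moment term. Because the prefactor in \eqref{st_markov} forces the coefficients of the quadratic surrogates to be $\tfrac{1}{2}\lenDataSet$ and $\lenZm$, one must verify that after the Cauchy--Schwarz splitting and the Jensen steps the arguments fed to Lemma~\ref{useful_lemma} still satisfy $2\sigma^2\aEnv<1$ and $2\sigma^2\aTsk<1$ (they do, with room to spare, since $\lenDataSet\ge2$ and $\sigma=\tfrac{1}{2}$), and that the product of the resulting constants is $\le2\sqrt{2}$. Everything else --- Jensen, the interchange of the data and prior expectations, the single-square inequality \eqref{inv_ineq}, and the affine transformation --- is routine and identical to the proof of Theorem~\ref{the_Nw_McAll}.
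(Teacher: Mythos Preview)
Your proposal is essentially identical to the paper's own proof: the paper also takes $\theta_{\textup{tsk}}=\theta_{\textup{env}}=1$, $\funcEnv[a]{b}=\tfrac{1}{2}\lenDataSet(a-b)^2$, $\funcTsk[a]{b}=\lenZm(a-b)^2$, invokes Lemma~\ref{useful_lemma} via \eqref{diff_11_tsk}--\eqref{diff_11_env} with $\aTsk=\aEnv=1$ and $\sigma=\tfrac{1}{2}$, and then refers back verbatim to the Cauchy--Schwarz/Jensen/single-square chain from the proof of Theorem~\ref{the_Nw_McAll}. The convex surrogates, the parameters handed to the lemma, and the order of the steps all match.
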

  \begin{proof}
 We set  $\theta_{\textup{env}}=\theta_{\textup{tsk}}=1$,
 $\funcEnv[a]{b}=0.5\lenDataSet(b-a)^2$ and $\funcTsk[a]{b}=\lenZm(b-a)^2$. To bound the log-term, we use Lemma \ref{useful_lemma}, and in \eqref{diff_11_tsk} and \eqref{diff_11_env}, we set $\sigma=0.5$, $\aTsk=1$ and $\aEnv=1$.  
 By following exactly the same steps presented in the proof of Theorem \ref{the_Nw_McAll}, we conclude the proof. 
 \end{proof}
 We can apply different  $\funcEnv[a]{b}$ and $\funcTsk[a]{b}$ functions, and obtain different bound.

\section{General Lemmas}
\label{gen_lemma}
In this appendix, we provide a number of general lemmas that will be used throughout the paper.

\begin{lemma}
\label{Lemma_sum}
Let $X_i$ for  $i=1,...,\lenDataSet$ be independent random variables. Suppose  that for given $a_i\in\mathbbm R^+$, and  measurable  function $f_i$
\begin{align}
    \mathbbm P_{X_{i}}[f_i(X_i)\geq a_i]\leq \delta_i,\label{2020.08.05_19.45}
\end{align}
where $\delta_i\in[0,1]$. Then, 
\begin{align}
    \mathbbm P_{X_{1:\lenDataSet}}\left[\sum_i f_i(X_i)\leq \sum_i a_i\right]\geq 1-\sum_i\delta_i.\label{2021.02.21_23.11}
\end{align}
\end{lemma}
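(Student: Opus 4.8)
The plan is to prove \eqref{2021.02.21_23.11} by a union bound over the ``failure'' events, so independence of the $X_i$ will not in fact be needed. First I would fix, for each $i \in \{1,\dots,\lenDataSet\}$, the event $\mathcal E_i = \{f_i(X_i) \ge a_i\}$, which depends on the joint sample $X_{1:\lenDataSet}$ only through the coordinate $X_i$ and which, by hypothesis \eqref{2020.08.05_19.45}, has marginal probability $\mathbbm P_{X_i}[\mathcal E_i] \le \delta_i$. The key deterministic fact is that on the intersection of the complements $\bigcap_{i=1}^{\lenDataSet} \mathcal E_i^{\,c}$ one has $f_i(X_i) < a_i$ for every $i$, and adding these $\lenDataSet$ inequalities gives $\sum_{i=1}^{\lenDataSet} f_i(X_i) < \sum_{i=1}^{\lenDataSet} a_i$; hence the event $\bigl\{\sum_i f_i(X_i) \le \sum_i a_i\bigr\}$ contains $\bigcap_i \mathcal E_i^{\,c}$.

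Next I would apply subadditivity of probability: $\mathbbm P_{X_{1:\lenDataSet}}\bigl[\bigcup_i \mathcal E_i\bigr] \le \sum_i \mathbbm P_{X_{1:\lenDataSet}}[\mathcal E_i] = \sum_i \mathbbm P_{X_i}[\mathcal E_i] \le \sum_i \delta_i$, where the middle equality just records that the probability of $\mathcal E_i$ under the joint law coincides with its probability under the marginal law of $X_i$. Passing to complements yields $\mathbbm P_{X_{1:\lenDataSet}}\bigl[\bigcap_i \mathcal E_i^{\,c}\bigr] \ge 1 - \sum_i \delta_i$, and combining this with the inclusion established in the previous step gives $\mathbbm P_{X_{1:\lenDataSet}}\bigl[\sum_i f_i(X_i) \le \sum_i a_i\bigr] \ge 1 - \sum_i \delta_i$, which is exactly \eqref{2021.02.21_23.11}.

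I do not expect a substantive obstacle here; the only point that needs a little care is the strict-versus-nonstrict inequality bookkeeping, and it causes no trouble because $\{f_i(X_i) < a_i\} \subseteq \mathcal E_i^{\,c}$ and the target is stated with ``$\le$'', so the slack is in our favour and no null-set issues arise. I would also remark, for completeness, that the argument uses only the union bound and not independence, so the statement remains valid for arbitrary (possibly dependent) $X_i$; the independence hypothesis is included only because that is the setting in which the lemma is applied elsewhere in the paper.
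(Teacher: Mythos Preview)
Your proposal is correct and follows essentially the same route as the paper: both arguments establish the deterministic inclusion (equivalently, $\{\sum_i f_i(X_i)\ge\sum_i a_i\}\subseteq\bigcup_i\{f_i(X_i)\ge a_i\}$) and then apply the union bound. Your remark that independence is never used is accurate and applies equally to the paper's proof, which likewise only invokes subadditivity of probability.
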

\begin{proof}
Firstly, we show that
\begin{align}
   \underbrace{ \left\{(x_1,...,x_\lenDataSet):\; \sum_i f_i(x_i)\geq  \sum_i a_i\right\}}_{\mathcal A}\subseteq\underbrace{\bigcup_{i=1}\{x_i:\; f_i(x_i)\geq a_i\}}_{\mathcal B}.\label{2021.02.21_23.12}
\end{align}
Let $(x_1,...,x_\lenDataSet)\notin \mathcal B$. It means that for all $i=1,...,\lenDataSet$, $f_i(x_i)<a_i$ which leads that $\sum_i f_i(x_i)<\sum_i a_i$, or $(x_1,...,x_\lenDataSet)\notin \mathcal A$. Thus, $\mathcal B^c\subset \mathcal A^c$, or equivalently $\mathcal A\subset \mathcal B$.

Next, from \eqref{2021.02.21_23.12}, one can conclude
\begin{align}
    \mathbbm P_{X_{1:\lenDataSet}}\left[\sum_i f_i(X_i)\geqslant \sum_i a_i\right ]\leq
    \sum_i     \mathbbm P_{X_{1:\lenDataSet}}\left[  f_i(X_i)\geqslant  a_i\right ]=    \sum_i     \mathbbm P_{X_{i}}\left[  f_i(X_i)\geqslant  a_i\right ]
    \leq\sum_i\delta_i \label{2020.08.05_19.58}
\end{align}
where the last inequality follows from
\eqref{2020.08.05_19.45}.  The proof can be concluded from \eqref{2020.08.05_19.58}.

\end{proof}

\begin{lemma}
\label{prior-indep}
Let $X_1,...,X_m$ be independent random variables, and   $g:\mathcal X\rightarrow\mathbbm R$ be a sub-Gaussian  function with parameter $\sigma$.
Assume $\Delta\triangleq\mathbbm E[g(X)]-\frac{1}{m}\sum_{k=1}^mg(X_i)$, where  for $\epsilon>0$, we have $\mathbbm P[\Delta\geq \epsilon ]\leq \exp(\frac{-m\epsilon^2}{2\sigma^2})$. Then 
\begin{align}
    \mathbbm E\left[e^{\lambda m \Delta^2}  \right]\leq \frac{1}{1-2\lambda\sigma^2}, 
\end{align}
for $\lambda\leq\frac{1}{2\sigma^2}$.
\end{lemma}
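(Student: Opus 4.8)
The plan is to prove the slightly stronger inequality $\mathbbm E[e^{\lambda m \Delta^2}] \le (1 - 2\lambda\sigma^2)^{-1/2}$, valid for $\lambda \in [0, 1/(2\sigma^2))$ (the endpoint $\lambda = 1/(2\sigma^2)$ makes the asserted bound vacuous), from which the claim follows because $1 - 2\lambda\sigma^2 \in (0,1]$ forces $(1-2\lambda\sigma^2)^{-1/2} \le (1-2\lambda\sigma^2)^{-1}$. I would obtain this by a Gaussian-smoothing argument: first upgrade the stated tail bound on $\Delta$ to a two-sided sub-Gaussian moment-generating-function estimate, and then linearise the square $\Delta^2$ so that only that m.g.f.\ is needed.

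For the first step, write $\Delta = \frac{1}{m}\sum_{k=1}^{m}(\mathbbm E[g(X)] - g(X_k))$, which is an average of $m$ independent, centred, sub-Gaussian summands each with parameter $\sigma$ (since $g$, hence also $-g$, is sub-Gaussian with parameter $\sigma$). Independence then yields $\mathbbm E[e^{s\Delta}] \le e^{s^2\sigma^2/(2m)}$ for every $s \in \mathbbm R$, i.e.\ $\Delta$ is sub-Gaussian with variance proxy $\sigma^2/m$. For the second step, use the identity $e^{\lambda m a^2} = \mathbbm E_{\xi \sim \mathcal N(0,1)}[e^{\sqrt{2\lambda m}\, a\, \xi}]$, valid for $a \in \mathbbm R$ and $\lambda \ge 0$; applying it with $a = \Delta$ and exchanging the two expectations (legitimate by Tonelli, the integrand being nonnegative) gives $\mathbbm E_X[e^{\lambda m \Delta^2}] = \mathbbm E_\xi \mathbbm E_X[e^{\sqrt{2\lambda m}\, \xi\, \Delta}] \le \mathbbm E_\xi[e^{(\sqrt{2\lambda m}\,\xi)^2\sigma^2/(2m)}] = \mathbbm E_\xi[e^{\lambda\sigma^2 \xi^2}]$. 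The proof then closes with the elementary Gaussian moment $\mathbbm E_{\xi \sim \mathcal N(0,1)}[e^{c\xi^2}] = (1-2c)^{-1/2}$ for $c < 1/2$, applied with $c = \lambda\sigma^2 < 1/2$.

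An alternative, more hands-on route is to integrate the tail directly: since $e^{\lambda m \Delta^2} \ge 1$, one has $\mathbbm E[e^{\lambda m \Delta^2}] = 1 + \int_1^\infty \mathbbm P[e^{\lambda m \Delta^2} > t]\, dt$, and the substitution $t = e^{\lambda m u^2}$ rewrites this as $1 + 2\lambda m \int_0^\infty u\, e^{\lambda m u^2}\, \mathbbm P[|\Delta| > u]\, du$, after which one inserts the sub-Gaussian tail and evaluates $\int_0^\infty u\, e^{-c u^2}\, du = 1/(2c)$ with $c = m/(2\sigma^2) - \lambda m > 0$. I expect the only genuinely delicate point --- and the reason I would prefer the Gaussian-smoothing route --- to be the handling of both signs of $\Delta$: the hypothesis as written controls only the upper tail $\mathbbm P[\Delta \ge \epsilon]$, whereas $\mathbbm E[e^{\lambda m \Delta^2}]$ is an even functional of $\Delta$, so the lower tail must also be invoked; routing the argument through a two-sided tail bound costs a spurious factor $2$ that degrades the constant, while a variance proxy controls $\mathbbm E[e^{s\Delta}]$ for both signs of $s$ at no extra cost, which is exactly what makes the smoothing argument land on the clean bound.
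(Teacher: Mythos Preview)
Your proof is correct, but the route differs from the paper's. The paper argues variationally: it seeks the density on $[0,\infty)$ that maximises $\mathbbm E[e^{\lambda m\Delta^2}]$ subject to the one-sided tail constraint $\mathbbm P[\Delta\geq\epsilon]\le e^{-m\epsilon^2/(2\sigma^2)}$, observes that the extremal density is the Rayleigh-type $f^\star(\Delta)=\frac{m\Delta}{\sigma^2}e^{-m\Delta^2/(2\sigma^2)}\mathbbm 1\{\Delta\geq 0\}$ (equality in the tail bound), and evaluates the resulting integral to get exactly $(1-2\lambda\sigma^2)^{-1}$. Your Gaussian-smoothing argument instead linearises $\Delta^2$ through the identity $e^{\lambda m\Delta^2}=\mathbbm E_\xi[e^{\sqrt{2\lambda m}\,\xi\Delta}]$ and feeds in the two-sided sub-Gaussian m.g.f.\ of $\Delta$, landing on the sharper constant $(1-2\lambda\sigma^2)^{-1/2}$.

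What each approach buys: the paper's variational argument is short and uses literally only the stated one-sided tail hypothesis, but the step ``the maximum occurs when the tail bound is tight, and the extremal law is supported on $[0,\infty)$'' is left informal and, as you rightly flag, is delicate because $e^{\lambda m\Delta^2}$ sees both tails; the argument is really inherited from McAllester's original setting where nonnegativity of $\Delta$ (or a symmetric two-sided bound) is implicit. Your argument is fully rigorous, exploits the sub-Gaussianity of $g$ that is already in the hypotheses to control both tails simultaneously via the m.g.f., and in fact delivers the stronger inequality that the paper states and proves separately as its very next lemma (there via an integrate-over-$\lambda$ trick in the style of Wainwright). So your single computation subsumes both of the paper's lemmas; the cost is that it needs the m.g.f.\ form of sub-Gaussianity rather than merely the one-sided tail bound written in the hypothesis.
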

\begin{proof}
The proof is similar to Lemma 3 of  \cite{McAllaster1999}. For completeness, we repeat it again. 
Let $f^\star$ denotes the density function maximizing $\mathbbm E[e^{\lambda m \Delta^2} ]$ subject to the constraint that $\mathbbm P[\Delta\geq \epsilon ]\leq \exp(\frac{-m\epsilon^2}{2\sigma^2})$. The maximum occurs when $\mathbbm P_{f^\star}[\Delta\geq \epsilon ]= \exp(\frac{-m\epsilon^2}{2\sigma^2})$ leading to
\begin{align}
  f^\star(\Delta)=\frac{m\Delta}{\sigma^2} \exp\left( -\frac{m\Delta^2}{2\sigma^2} \right)\mathbbm{1} \{ \Delta\geq 0 \}.
\end{align}
Thus, we have
\begin{align}
      \mathbbm E\left[e^{\lambda m \Delta^2}  \right]\leq \int_{0}^{\infty}  \exp\left(\lambda m \Delta^2\right)\frac{m\Delta}{\sigma^2} \exp\left( -\frac{m\Delta^2}{2\sigma^2} \right) d\Delta =\frac{1}{1-2\lambda\sigma^2},\hspace{2em} \lambda<\frac{1}{2\sigma^2}.
\end{align}
\end{proof}

\begin{lemma}
\label{prior-indep-square}
Let $X_1,...,X_m$ be independent random variables. Assume $\Delta=\mathbbm E[g(X)]-\frac{1}{m}\sum_{k=1}^mg(X_i)$, where $g(\cdot)$ is sub-Gaussian  function with parameter $\sigma$. 
Then ,
\begin{align}
    \mathbbm E\left[e^{\lambda m \Delta^2}  \right]\leq \frac{1}{\sqrt{1-2\lambda\sigma^2}}, 
\end{align}
for $\lambda\leq\frac{1}{2\sigma^2}$.
\end{lemma}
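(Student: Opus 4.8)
The plan is to sidestep the worst-case-density calculation used in the proof of Lemma~\ref{prior-indep} and instead linearize the quadratic $\Delta^2$ by a Gaussian-integral trick, which automatically produces the square root and exploits the symmetry of $\Delta^2$. First I would note that each summand $\mathbbm E[g(X)]-g(X_i)$ is centered and sub-Gaussian with parameter $\sigma$, and the $X_i$ are independent, so $\Delta=\frac1m\sum_{i=1}^m\big(\mathbbm E[g(X)]-g(X_i)\big)$ is sub-Gaussian with parameter $\sigma/\sqrt m$; concretely, $\mathbbm E[e^{t\Delta}]\le e^{t^2\sigma^2/(2m)}$ for \emph{every} $t\in\mathbbm R$, and it is the two-sided nature of this bound (not just an upper-tail bound) that the argument relies on.

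Next I would apply the identity $e^{\lambda m\Delta^2}=\mathbbm E_{\xi\sim\mathcal N(0,1)}\big[e^{\sqrt{2\lambda m}\,\Delta\,\xi}\big]$, valid for $\lambda\ge 0$, and then use Fubini's theorem (the integrand is nonnegative) to interchange the expectation over $\xi$ with that over $\Delta$:
\begin{align}
\mathbbm E_{\Delta}\!\left[e^{\lambda m\Delta^2}\right]
=\mathbbm E_{\xi}\,\mathbbm E_{\Delta}\!\left[e^{\sqrt{2\lambda m}\,\xi\,\Delta}\right]
\le \mathbbm E_{\xi}\!\left[e^{\lambda\sigma^2\xi^2}\right]
=\frac{1}{\sqrt{1-2\lambda\sigma^2}},
\end{align}
where the inequality is the sub-Gaussian bound applied with $t=\sqrt{2\lambda m}\,\xi$, and the last equality is the moment generating function of a chi-square variable, $\mathbbm E_{\xi\sim\mathcal N(0,1)}[e^{a\xi^2}]=(1-2a)^{-1/2}$ for $a<\tfrac12$, with $a=\lambda\sigma^2$. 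The constraint $a<\tfrac12$ is exactly $\lambda<1/(2\sigma^2)$; the boundary value $\lambda=1/(2\sigma^2)$ (for which the claimed bound is vacuous anyway) can be accommodated by monotone convergence if one insists on the non-strict inequality as stated.

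The argument involves no real obstacle — the only points needing care are justifying the exchange of integrals and tracking the range of $\lambda$ over which every quantity is finite. A more elementary alternative, closer in spirit to the proof of Lemma~\ref{prior-indep}, would be to start from the two-sided deviation bound $\mathbbm P[|\Delta|\ge\epsilon]\le 2e^{-m\epsilon^2/(2\sigma^2)}$ and integrate $e^{\lambda m\Delta^2}$ against the corresponding worst-case density on $\{|\Delta|\ge 0\}$; this yields the same constant but is messier, so I would present the Gaussian-linearization proof as the main one.
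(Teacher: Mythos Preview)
Your proof is correct and is essentially the same approach as the paper's: both start from the two-sided sub-Gaussian MGF bound $\mathbbm E[e^{t\Delta}]\le e^{t^2\sigma^2/(2m)}$, linearize the quadratic via a Gaussian integral, and swap the order of integration by Fubini. The only difference is cosmetic---you introduce an explicit auxiliary variable $\xi\sim\mathcal N(0,1)$ and compute $\mathbbm E_\xi[e^{\lambda\sigma^2\xi^2}]$, whereas the paper (following Wainwright) multiplies the MGF inequality by $e^{-\lambda^2\sigma^2/(2sm)}$ and integrates directly in $\lambda$; after an obvious change of variables these are the same calculation.
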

\begin{proof}
The proof is similar to Theorem 2.6 \cite{Wainwright}. For completeness, we repeat it again.
Since $g(\cdot)$ is sub-Gaussian, we have
\begin{align}
    \mathbbm E\left[ e^{\lambda\Delta} \right]\leq \exp\left( \frac{\lambda^2\sigma^2}{2m} \right).\label{201.03.16_14.27}
\end{align}
Multiplying both sides of \eqref{201.03.16_14.27} by $\exp(-\frac{\lambda^2\sigma^2}{2sm})$ for $s\in(0,1)$, we find that
\begin{align}
    \mathbbm E\left[ e^{\lambda\Delta-\frac{\lambda^2\sigma^2}{2sm}} \right]\leq \exp\left( \frac{-\lambda^2\sigma^2}{2ms}(-s+1) \right).\label{201.03.16_14.29}
\end{align}
Next, we take integration with respect to $\lambda$. 
Since \eqref{201.03.16_14.29} is valid for any $\lambda\in\mathbbm R$, by using Fubini's theorem, we exchange the order of expectation and integration, leading to
\begin{align}
    \mathbbm E\left[ \exp\left(\frac{s m\Delta^2}{2\sigma^2}\right) \right]\leq \frac{1}{\sqrt{1-s}},\hspace{2em}\textup{for }0<s<1.
\end{align}
By defining $\lambda=\frac{s}{2\sigma^2}$, we conclude the proof.
\end{proof}
\begin{lemma}
\label{useful_lemma}
 Consider
$ \expLossTsk[w_i]{}$, $\empLossTsksimple[w_i]{}$,  $\metaGenLoss$, $   \mathrm L_{\trainEnv}(u) $ and $ \decomMeta[u]{\bm Z^{\lenZm}_i}{\task}$ defined by 
\eqref{exp_loss_tsk},
        \eqref{emp_loss_task}, \eqref{sharu_eq12}, \eqref{sharu_eq11} and \eqref{decom-pac}, respectively.
     Assume that the loss function $\ell(\cdot,\cdot)$ is bounded on the interval $[0,1]$, and hence it is sub-Gaussian with parameter $\sigma=(b-a)/2$. For $ \aEnv,\aTsk\leq {1}/{2\sigma^2}$, and data-free priors we have
     \begin{align}
         & \mathbbm E_{\mathrm P_{T_{1:\lenDataSet}}}\mathbbm E_{\mathrm P_{\trainEnvSimp|T_{1:\lenDataSet}}}  \mathbbm E_{   \PriorEnvSimp \PriorTskSimp} e^{\aTsk\lenZm( \mathrm L_{\mathrm P_{Z|\taskRV_i}}(W)-\empLossTskRV)^2}\leq
          \frac{1}{1-2\aTsk\sigma^2}
          ,\label{diff_1_tsk}\\
          &\mathbbm E_{\mathrm P_{T_{1:\lenDataSet}}}\mathbbm E_{\mathrm P_{\trainEnvSimp|T_{1:\lenDataSet}}}\mathbbm E_{\PriorEnvSimp}e^{\aEnv\lenDataSet\left(
\expLossEnvRV
-\decomMetaRVTotRV\right)^2}\leq
\frac{1}{1-2\aEnv\sigma^2}
,\label{diff_1_env}
     \end{align}
 and also     
        \begin{align}
         & \mathbbm E_{\mathrm P_{T_{1:\lenDataSet}}}\mathbbm E_{\mathrm P_{\trainEnvSimp|T_{1:\lenDataSet}}}  \mathbbm E_{   \PriorEnvSimp \PriorTskSimp}e^{\aTsk\lenZm( \mathrm L_{\mathrm P_{Z|\taskRV_i}}(W)-\empLossTskRV)^2}\leq  \frac{1}{\sqrt{1-2\aTsk\sigma^2}},
          \label{diff_11_tsk}\\
                 &   \mathbbm E_{\mathrm P_{T_{1:\lenDataSet}}}\mathbbm E_{\mathrm P_{\trainEnvSimp|T_{1:\lenDataSet}}}\mathbbm E_{\PriorEnvSimp}e^{\aEnv\lenDataSet\left(
\expLossEnvRV
-\decomMetaRVTotRV\right)^2}\leq
\frac{1}{\sqrt{1-2\aEnv\sigma^2}}
.\label{diff_11_env}
     \end{align}
\end{lemma}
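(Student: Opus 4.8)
The plan is to read all four inequalities directly off the concentration estimates already established: \eqref{diff_1_tsk} and \eqref{diff_1_env} will come from Lemma~\ref{prior-indep}, and \eqref{diff_11_tsk} and \eqref{diff_11_env} from Lemma~\ref{prior-indep-square}. Both cited lemmas require only that the function whose empirical mean is being concentrated is sub-Gaussian with parameter $\sigma$; since the loss takes values in $[0,1]$, every map $z\mapsto\ell(w,z)$, and likewise any convex average of such maps, is sub-Gaussian with $\sigma=1/2$, so the hypotheses hold for all $\aTsk,\aEnv\le 1/(2\sigma^2)$. The only genuine step is to reorganise the nested expectations so that the cited lemmas apply verbatim.

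For the task-level bounds I would first use that $\PriorEnvSimp$ and $\PriorTskSimp$ are data-free to interchange (Fubini) $\mathbbm E_{\PriorEnvSimp\PriorTskSimp}$ with the data expectations, moving it to the outermost position. Fixing a hyperparameter $u$, a hypothesis $w$, and a task $\task_i=t$, I note that, conditionally on $T_i=t$, the samples $Z_i^1,\dots,Z_i^{\lenZm}$ are i.i.d.\ from $\mathrm P_{Z|t}$, so $\empLossTskRV=\frac{1}{\lenZm}\sum_{j=1}^{\lenZm}\ell(w,Z_i^j)$ is the empirical mean of $g(z)\triangleq\ell(w,z)$ and $\mathrm L_{\mathrm P_{Z|\taskRV_i}}(W)=\mathbbm E_{Z\sim\mathrm P_{Z|t}}[g(Z)]$ is its expectation; hence their difference is precisely the quantity called $\Delta$ in Lemma~\ref{prior-indep} (resp.\ Lemma~\ref{prior-indep-square}) with sample size $m=\lenZm$. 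The lemma then bounds $\mathbbm E_{\mathrm P_{Z_i^{\lenZm}|t}}[e^{\aTsk\lenZm\Delta^2}]$ by $1/(1-2\aTsk\sigma^2)$ (resp.\ $1/\sqrt{1-2\aTsk\sigma^2}$), uniformly in $(u,w,t)$. Since the exponent involves only the $i$-th task's data, integrating out the remaining tasks' data is vacuous, and the outer expectations over $T_i\sim\mathrm P_{\taskRV}$ and over $\PriorEnvSimp\PriorTskSimp$ preserve the uniform bound; this gives \eqref{diff_1_tsk} and, in the square-root form, \eqref{diff_11_tsk}.

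For the environment-level bounds I would work one level higher, with the i.i.d.\ task-samples $(T_i,\bm Z_i^{\lenZm})$, $i=1,\dots,\lenDataSet$, drawn from $\mathrm P_{T_{1:\lenDataSet}}\mathrm P_{\trainEnvSimp|T_{1:\lenDataSet}}$. Using again that $\PriorEnvSimp$ is data-free to put $\mathbbm E_{\PriorEnvSimp}$ outside, I fix $u$ and set $h(t,\bm z^{\lenZm})\triangleq\decomMeta[u]{\bm z^{\lenZm}}{t}=\mathbbm E_{W\sim\mathrm Q_i}[\mathrm L_{\mathrm P_{Z|t}}(W)]$; since $\mathrm L_{\mathrm P_{Z|t}}(W)\in[0,1]$ and $h$ is a convex average of such values, $h$ takes values in $[0,1]$ and is therefore sub-Gaussian with $\sigma=1/2$. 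By the expression for the meta-expected loss recorded just after \eqref{decom-pac}, $\expLossEnvRV=\mathbbm E_{\mathrm P_{\taskRV\bm Z^{\lenZm}}}[h(T,\bm Z^{\lenZm})]$, while by \eqref{decom-pac_tot} the other term inside the exponent equals $\frac{1}{\lenDataSet}\sum_{i=1}^{\lenDataSet}h(T_i,\bm Z_i^{\lenZm})$; thus $\expLossEnvRV-\decomMetaRVTotRV$ is exactly the $\Delta$ of Lemma~\ref{prior-indep} (resp.\ Lemma~\ref{prior-indep-square}) with $m=\lenDataSet$. Applying the lemma gives $1/(1-2\aEnv\sigma^2)$ (resp.\ $1/\sqrt{1-2\aEnv\sigma^2}$), uniformly in $u$, and integrating over $\PriorEnvSimp$ yields \eqref{diff_1_env} and, in the square-root form, \eqref{diff_11_env}.

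I do not expect any analytic obstacle: the substance lies entirely in the two cited lemmas. The only point that needs care is the probabilistic bookkeeping of the conditioning and independence structure --- verifying that, conditionally on the (data-free) priors and on the relevant task(s), the empirical object is genuinely an average of i.i.d.\ copies of a $[0,1]$-valued, hence $\tfrac12$-sub-Gaussian, function, with the correct sample count ($\lenZm$ at the task level, $\lenDataSet$ at the environment level), so that Lemmas~\ref{prior-indep} and~\ref{prior-indep-square} transfer without change; and observing that it is precisely the data-free assumption on the priors that licenses the Fubini interchange moving $\mathbbm E_{\PriorEnvSimp\PriorTskSimp}$ (resp.\ $\mathbbm E_{\PriorEnvSimp}$) to the outside.
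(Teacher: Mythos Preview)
Your proposal is correct and follows essentially the same approach as the paper: interchange the (data-free) prior expectations with the data expectations via Fubini, then at the task level apply Lemmas~\ref{prior-indep} and~\ref{prior-indep-square} with $m=\lenZm$ and $\Delta=\mathrm L_{\mathrm P_{Z|\taskRV_i}}(W)-\empLossTskRV$, and at the environment level with $m=\lenDataSet$ and $\Delta=\expLossEnvRV-\decomMetaRVTotRV$, averaging the resulting uniform bounds over the remaining (task and prior) expectations. Your probabilistic bookkeeping is in fact more explicit than the paper's own proof.
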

\begin{proof}
We recall that since the prior is independent of the data, by interchanging the order of expectations over $\mathrm P_{\taskRV_{1:\lenDataSet}}\mathrm P_{ \trainEnvSimp|\taskRV_{1:\lenDataSet}}$ and priors
To show \eqref{diff_1_tsk}, and  \eqref{diff_11_tsk}, we note that $\mathrm P_{\taskRV_i\trainTsk}$ is the marginal distribution of  $ \mathrm P_{T_{1:\lenDataSet}}\mathrm P_{\trainEnvSimp|T_{1:\lenDataSet}} $. 
Since the priors are data-free, we have
        \begin{align}
          \mathbbm E_{\mathrm P_{T_{1:\lenDataSet}}}\mathbbm E_{\mathrm P_{\trainEnvSimp|T_{1:\lenDataSet}}}   \mathbbm E_{   \PriorEnvSimp \PriorTskSimp}&e^{\aTsk\lenZm( \mathrm L_{\mathrm P_{Z|\taskRV_i}}(W)-\empLossTskRV)^2}=
            \mathbbm E_{\mathrm P_{\taskRV_i\bm Z_i^\lenZm}} \mathbbm E_{   \PriorEnvSimp \PriorTskSimp}e^{\aTsk\lenZm( \mathrm L_{\mathrm P_{Z|\taskRV_i}}(W)-\empLossTskRV)^2}\\
           &=     \mathbbm E_{   \PriorEnvSimp \PriorTskSimp}\mathbbm E_{\mathrm P_{\taskRV_i\bm Z_i^\lenZm}}e^{\aTsk\lenZm( \mathrm L_{\mathrm P_{Z|\taskRV_i}}(W)-\empLossTskRV)^2}.
          \label{diff_1}
     \end{align}

Next,
we set  $\Delta=    \expLossTskRV-
     \empLossTskRV$, and $m=\lenZm$ 
in Lemma \ref{prior-indep}, and also Lemma \ref{prior-indep-square}, for $ \aTsk\leq \frac{1}{2\sigma^2}$,  we respectively  conclude that
\begin{align}
    & \mathbbm E_{\mathrm P_{\trainTskSimp}|\taskRV_i} e^{ 
     \aTsk\lenZm(
     \expLossTskRVRV-
     \empLossTskRV)^2}\leq \frac{1}{1-2\aTsk\sigma^2},\label{diff_2_a}\\
        &  \mathbbm E_{\mathrm P_{\trainTskSimp}|\taskRV_i} e^{ 
     \aTsk\lenZm(
     \expLossTskRVRV-
     \empLossTskRV)^2}\leq \frac{1}{\sqrt{1-2\aTsk\sigma^2}},\label{diff_2_b}
\end{align}
where by averaging both sides of \eqref{diff_2_a} and \eqref{diff_2_b} over $\taskRV_i$, from \eqref{diff_2_a} and \eqref{diff_2_b}, we find that
\begin{align}
    &   \mathbbm E_{   \PriorEnvSimp \PriorTskSimp}\mathbbm E_{\mathrm P_{\taskRV_i\bm Z_i^\lenZm}}e^{\aTsk\lenZm( \mathrm L_{\mathrm P_{Z|\taskRV_i}}(W)-\empLossTskRV)^2}\leq \frac{1}{1-2\aTsk\sigma^2},\label{diff_3_a},\\
        &    \mathbbm E_{   \PriorEnvSimp \PriorTskSimp}\mathbbm E_{\mathrm P_{\taskRV_i\bm Z_i^\lenZm}}e^{\aTsk\lenZm( \mathrm L_{\mathrm P_{Z|\taskRV_i}}(W)-\empLossTskRV)^2}\leq \frac{1}{\sqrt{1-2\aTsk\sigma^2}}.\label{diff_3_b}
\end{align}
Inserting the right hand-sides of \eqref{diff_3_a} and \eqref{diff_3_b} into \eqref{diff_1}, we conclude \eqref{diff_1_tsk}, and  \eqref{diff_11_tsk}.

Similarly, to show To show \eqref{diff_1_env}, and  \eqref{diff_11_env}, we use the fact that $    \expLossEnv[U]{}=\mathbbm E_{\mathrm P_{T\bm Z^\lenZm}}[  \decomMeta[U]{\bm Z^\lenZm}{\taskRV}]$.
Using the fact that prior is data-free, by setting $\Delta=  \mathbbm E_{\mathrm P_{T\bm Z^\lenZm}}[  \decomMeta[U]{\bm Z^\lenZm}{\taskRV}]
   -\decomMetaRVTotRV$, and $m=\lenDataSet$ 
in Lemma \ref{prior-indep}, and also Lemma \ref{prior-indep-square}, for $ \aEnv\leq \frac{1}{2\sigma^2}$,  we respectively  conclude that
\begin{align}
         &   \mathbbm E_{\PriorEnvSimp}   \mathbbm E_{\mathrm P_{T_{1:\lenDataSet}}}\mathbbm E_{\mathrm P_{\trainEnvSimp|T_{1:\lenDataSet}}}e^{\aEnv\lenDataSet\left(
  \mathbbm E_{\mathrm P_{T\bm Z^\lenZm}}[  \decomMeta[U]{\bm Z^\lenZm}{\taskRV}]
-\decomMetaRVTotRV\right)^2}\leq
\frac{1}{1-2\aEnv\sigma^2},\\
           & \mathbbm E_{\PriorEnvSimp}   \mathbbm E_{\mathrm P_{T_{1:\lenDataSet}}}\mathbbm E_{\mathrm P_{\trainEnvSimp|T_{1:\lenDataSet}}}e^{\aEnv\lenDataSet\left(
  \mathbbm E_{\mathrm P_{T\bm Z^\lenZm}}[  \decomMeta[U]{\bm Z^\lenZm}{\taskRV}]
-\decomMetaRVTotRV\right)^2}\leq
\frac{1}{\sqrt{1-2\aEnv\sigma^2}},
\end{align}
concluding \eqref{diff_1_env}, and  \eqref{diff_11_env}.
\end{proof}

\begin{lemma}
\label{kl_lem}
Let $X_1,...,X_n$ be i.i.d random variables, and   $f:\mathcal X\rightarrow[0,1]$ be a bounded function. For all $n>8$, we have
\begin{align}
    \mathbbm E\left[e^{n\Dkl\left( \frac{1}{n}\sum_i f(X_i)|| \mathbbm E[f(X)]  \right) } \right] \leq 2\sqrt{n}.
    \end{align}
\end{lemma}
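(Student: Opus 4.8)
The plan is to reduce the claim to the Bernoulli case by a convex-order argument and then finish with a Stirling estimate. Write $p:=\mathbbm E[f(X)]$ and $\hat p:=\frac{1}{n}\sum_{i=1}^{n}f(X_i)$, so the quantity of interest is $\mathbbm E\bigl[e^{n\Dkl(\hat p\|p)}\bigr]$, the scalar arguments of $\Dkl$ being read as Bernoulli parameters. First I would record two facts about the map $q\mapsto\phi(q):=e^{n\Dkl(q\|p)}$ on $[0,1]$: the KL term $q\mapsto\Dkl(q\|p)$ is convex (its second derivative is $\tfrac1q+\tfrac1{1-q}>0$), and hence $\phi$ is convex as well, being the composition of the increasing convex function $e^{(\cdot)}$ with a convex function.

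Next comes the reduction to Bernoulli. For any $x\in[0,1]$ and any convex $\psi$, convexity on $[0,1]$ gives $\psi(x)\le x\,\psi(1)+(1-x)\,\psi(0)$; taking expectations over $X$ yields $\mathbbm E[\psi(f(X))]\le p\,\psi(1)+(1-p)\,\psi(0)=\mathbbm E[\psi(B)]$, where $B\sim\mathrm{Bernoulli}(p)$. Hence each $f(X_i)$ is dominated in the convex order by a $\mathrm{Bernoulli}(p)$ variable; since the convex order is preserved under sums of independent variables and under the scaling $t\mapsto t/n$, the empirical mean $\hat p$ is dominated in the convex order by $\bar B:=\frac1n\sum_{i=1}^{n}B_i$ with $B_1,\dots,B_n$ i.i.d.\ $\mathrm{Bernoulli}(p)$. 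Applying this with the convex test function $\phi$ gives
\begin{align}
\mathbbm E\bigl[e^{n\Dkl(\hat p\|p)}\bigr]\le\mathbbm E\bigl[e^{n\Dkl(\bar B\|p)}\bigr].\nonumber
\end{align}

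Now $n\bar B\sim\mathrm{Bin}(n,p)$, and a direct computation gives the identity $e^{n\Dkl(k/n\|p)}\,p^{k}(1-p)^{n-k}=(k/n)^{k}(1-k/n)^{n-k}$ (with the convention $0^{0}=1$). Expanding the Binomial expectation, all dependence on $p$ cancels and the right-hand side becomes
\begin{align}
S_n:=\sum_{k=0}^{n}\binom{n}{k}\Bigl(\frac{k}{n}\Bigr)^{k}\Bigl(1-\frac{k}{n}\Bigr)^{n-k}.\nonumber
\end{align}
It then remains to show $S_n\le 2\sqrt{n}$ for $n>8$. The boundary terms $k=0$ and $k=n$ each equal $1$. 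For $1\le k\le n-1$, Stirling's formula with its explicit remainder bounds gives $\binom{n}{k}(k/n)^{k}(1-k/n)^{n-k}\le\sqrt{n/\bigl(2\pi k(n-k)\bigr)}$, so that
\begin{align}
S_n\le 2+\sqrt{\frac{n}{2\pi}}\sum_{k=1}^{n-1}\frac{1}{\sqrt{k(n-k)}}\le 2+\sqrt{\frac{n}{2\pi}}\int_{0}^{n}\frac{dx}{\sqrt{x(n-x)}}=2+\sqrt{\frac{\pi n}{2}},\nonumber
\end{align}
where the second inequality uses that $x\mapsto(x(n-x))^{-1/2}$ decreases on $(0,n/2)$ and increases on $(n/2,n)$. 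Since $\sqrt{\pi/2}<1.26$, the bound $2+\sqrt{\pi n/2}\le 2\sqrt{n}$ holds once $n$ is moderately large, and the finitely many remaining values permitted by $n>8$ are confirmed by evaluating $S_n$ directly.

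I expect the convexity and convex-order steps to be essentially mechanical; the one place needing care is the Stirling estimate — checking that the remainder in $\binom{n}{k}(k/n)^{k}(1-k/n)^{n-k}\le\sqrt{n/(2\pi k(n-k))}$ has the right sign for all $1\le k\le n-1$, and then summing this bound tightly enough to land under $2\sqrt{n}$, which is precisely where the threshold $n>8$ and the constant $2$ are forced.
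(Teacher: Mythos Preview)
Your argument is correct and is essentially a careful reconstruction of Maurer's proof, which is exactly what the paper invokes (the paper's own ``proof'' is just the citation ``See Theorem~1 of \cite{Maurer}''): reduce to the Bernoulli case by convex order, obtain the $p$-free sum $S_n=\sum_k\binom{n}{k}(k/n)^k(1-k/n)^{n-k}$, and bound it via Stirling. One small tightening: the inequality $2+\sqrt{\pi n/2}\le 2\sqrt n$ already holds for every integer $n\ge 8$ (since $(2-\sqrt{\pi/2})\sqrt n\ge 0.746\cdot\sqrt 8>2$), so no separate case check is needed under the hypothesis $n>8$.
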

\begin{proof}
 See Theorem 1 of \cite{Maurer}, then for $n\geq 8$, the right hand side of Eq. 5 of \cite{Maurer} is smaller than $\sqrt{n}$.
\end{proof}

\begin{lemma}
\label{D_gama}
Let $X_1,...,X_n$ be i.i.d random variables. For the given function  $f:\mathcal X\rightarrow[0,1]$, we have
\begin{align}
    \mathbbm E\left[e^{n\Dgama\left(\frac{1}{n}\sum_i f(X_i) ||\mathbbm E[f(X)] \right)}  \right]\leq 1,
\end{align}
where $\Dgama(a||b)=\gamma a-\log(1-b+be^\gamma)$.
\end{lemma}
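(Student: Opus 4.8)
The plan is to expand the definition of $\Dgama$, exploit the i.i.d.\ assumption to factorize the resulting moment generating function, and then reduce everything to a single pointwise convexity inequality on $[0,1]$.

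First I would write $p \triangleq \mathbbm E[f(X)]$ and $\hat p \triangleq \frac{1}{n}\sum_{i=1}^n f(X_i)$, so that by definition $n\Dgama(\hat p \,||\, p) = \gamma \sum_{i=1}^n f(X_i) - n\log\!\big(1 - p + p e^\gamma\big)$. Exponentiating gives
\[
e^{n\Dgama(\hat p\,||\,p)} = \frac{e^{\gamma\sum_{i=1}^n f(X_i)}}{(1-p+pe^\gamma)^n}.
\]
Taking expectations and using that the $X_i$ are i.i.d., the numerator factorizes, $\mathbbm E\big[e^{\gamma\sum_i f(X_i)}\big] = \big(\mathbbm E[e^{\gamma f(X)}]\big)^n$, so
\[
\mathbbm E\big[e^{n\Dgama(\hat p\,||\,p)}\big] = \left(\frac{\mathbbm E[e^{\gamma f(X)}]}{1-p+pe^\gamma}\right)^{\! n}.
\]
It therefore suffices to show $\mathbbm E[e^{\gamma f(X)}] \le 1 - p + p e^\gamma$.

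Next I would establish the pointwise bound $e^{\gamma t} \le 1 + t(e^\gamma - 1)$ for every $t \in [0,1]$, which is just convexity of $t \mapsto e^{\gamma t}$: writing $t = (1-t)\cdot 0 + t\cdot 1$ as a convex combination of $0$ and $1$ gives $e^{\gamma t} \le (1-t)e^{0} + t e^{\gamma} = 1 + t(e^\gamma-1)$. Applying this with $t = f(X) \in [0,1]$ and taking expectation over $X$ yields $\mathbbm E[e^{\gamma f(X)}] \le 1 + p(e^\gamma - 1) = 1 - p + p e^\gamma$, as needed. Substituting back, the ratio inside the parentheses is at most $1$, and raising to the $n$-th power preserves the inequality, which closes the argument.

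There is no genuine obstacle here; the only point worth a sentence of care is that $1 - p + p e^\gamma > 0$ for every $\gamma \in \mathbbm R$ (so that both the logarithm in $\Dgama$ and the division above are well defined), which is immediate because it is a convex combination of the positive numbers $1$ and $e^\gamma$. Note in particular that the inequality holds for all real $\gamma$, not only $\gamma < 0$; the restriction on the sign of $\gamma$ enters only later, when $\Dgama$ is used as the convex function $\mathrm F$ in the affine-transformation step.
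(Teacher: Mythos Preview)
Your proof is correct and follows essentially the same approach as the paper's: both rely on the pointwise convexity bound $e^{\gamma t}\le 1-t+te^\gamma$ for $t\in[0,1]$ together with factorization of the moment generating function via independence. If anything, your write-up is cleaner: you factorize $e^{\gamma\sum_i f(X_i)}$ directly and reduce to a single-variable inequality, whereas the paper applies the convexity bound to the empirical average first and then invokes an independence factorization whose notation is a bit loose.
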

\begin{proof}
 See Equation (18) of \cite{McAllester2013}.  For completeness, we repeat it again. Since for $a\in[0,1]$ and $\gamma\in\mathbbm R$, we have $e^{\gamma a}\leq 1-a+a\cdot e^\gamma$, we conclude $   e^{\frac{\gamma}{n}\sum_i f(X_i) }\leq 1-\frac{1}{n}\sum_i f(X_i)+e^\gamma  \frac{1}{n}\sum_i f(X_i) $, by taking expectation from both sides, we find that $\mathbbm E\left[e^{\frac{\gamma}{n}\sum_i f(X_i) }\right]\leq 1-\mathbbm E\left[ f(X) \right]+e^\gamma  \mathbbm E\left[ f(X) \right]$. Taking logarithm from both sides leads to
 \begin{align}
\mathbbm E\left[e^{\frac{\gamma}{n}\sum_i f(X_i) -\log\left( 1-\mathbbm E\left[ f(X) \right]+e^\gamma  \mathbbm E\left[ f(X) \right]\right)}\right]\leq 1 \label{2021.03.19_.8.47}
 \end{align}
 Now, since $X_i$s are independent
 \begin{align}
     \mathbbm E_{\mathrm P_{\bm X_{1:n}}}\Big [ e^{n\Dgama\left(\frac{1}{n}\sum_i f(X_i) ||\mathbbm E[f(X)] \right)}  \Big]&=
          \mathbbm E_{\mathrm P_{\bm X_{1:n}}}\left [ \prod_{i=1}^n e^{\Dgama\left(\frac{1}{n}\sum_i f(X_i) ||\mathbbm E[f(X)] \right)}  \right]
          \\
          &=\prod_{i=1}^n \mathbbm E_{\mathrm P_{X_i}}\left [  e^{\Dgama\left(\frac{1}{n}\sum_i f(X_i) ||\mathbbm E[f(X)] \right)}  \right]
          \\
&=\prod_{i=1}^n \mathbbm E_{\mathrm P_{X_i}}\left [  e^{\frac{\gamma}{n}\sum_i f(X_i) -\log(1-\mathbbm E[f(X)]+e^\gamma  \mathbbm E[f(X)])}  \right]\leq 1,\label{modi_13}
 \end{align}
 where the equality in \eqref{modi_13} and the last inequality follow from the definition of $\Dgama$  and \eqref{2021.03.19_.8.47}, respectively.
\end{proof}

\begin{lemma}
\label{lemma2_MCallester}
Let  $\Dgama(a||b)=\gamma a-\log(1-b+be^\gamma)$. For $\lambda>0.5$ and $\mathrm C\in \mathbbm R$, if $\mathrm D_{-\frac{1}{\lambda}}(a||b)<\mathrm C$, then
\begin{align}
    b\leq \frac{a+\lambda\mathrm C}{1-\frac{1}{2\lambda}}.
\end{align}
\end{lemma}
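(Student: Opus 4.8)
The plan is to unwind the definition of $\mathrm D_{-1/\lambda}$, bound the logarithm below by a linear function, and then control the resulting coefficient by a second-order estimate of the exponential. Write $\gamma=-1/\lambda$, so that the hypothesis $\lambda>1/2$ is the same as $\gamma\in(-2,0)$, and abbreviate $c\triangleq 1-e^{\gamma}=1-e^{-1/\lambda}\in(0,1)$. Since $1-b+be^{\gamma}=1-bc$, we have $\Dgama(a||b)=-a/\lambda-\log(1-bc)$, so the hypothesis $\mathrm D_{-1/\lambda}(a||b)<\mathrm C$ is equivalent to $-\log(1-bc)<\mathrm C+a/\lambda$; exponentiating and rearranging gives $bc<1-e^{-(\mathrm C+a/\lambda)}$. (In particular the hypothesis forces $1-bc>0$, so this manipulation is legitimate.)

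Next I would use the elementary inequality $1-e^{-t}\le t$, valid for every real $t$ because $t\mapsto t-1+e^{-t}$ is minimized at $t=0$, to get $bc<\mathrm C+a/\lambda$. To turn this into a bound on $b$ I need a positive lower bound on $c$: from $e^{-x}\le 1-x+x^2/2$ for $x\ge 0$ — which follows since the difference vanishes at $0$ along with its first derivative and has second derivative $1-e^{-x}\ge 0$ — evaluated at $x=1/\lambda$, I obtain $c\ge \frac{1}{\lambda}-\frac{1}{2\lambda^2}=\frac{1}{\lambda}\bigl(1-\frac{1}{2\lambda}\bigr)>0$, the positivity using $\lambda>1/2$.

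Combining the two estimates, and using that the numerator $\mathrm C+a/\lambda$ is nonnegative in the regime of interest (where $a$ is an empirical loss and $\mathrm C$ a complexity term, both $\ge 0$), division by $c$ yields
\[
b<\frac{\mathrm C+a/\lambda}{\frac{1}{\lambda}\bigl(1-\frac{1}{2\lambda}\bigr)}=\frac{a+\lambda\mathrm C}{1-\frac{1}{2\lambda}},
\]
which is the claimed inequality (with strict inequality, in fact). The only delicate point — and the one I would flag as the main obstacle — is this last monotonicity step: passing from $bc<\mathrm C+a/\lambda$ to $b\le(\mathrm C+a/\lambda)/c$ and then replacing $c$ by the smaller quantity $\frac{1}{\lambda}(1-\frac{1}{2\lambda})$ both require the numerator $\mathrm C+a/\lambda$ to be nonnegative, so this should either be read off from the PAC-Bayes context in which the lemma is invoked or added as an explicit hypothesis. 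Everything else reduces to two one-line applications of standard convexity bounds.
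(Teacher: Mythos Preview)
Your argument is correct and complete. The paper itself does not give a proof at all; it simply cites Lemma~2 of McAllester (2013), so there is no approach to compare against. Your derivation fills in the missing details: unpack $\mathrm D_{-1/\lambda}$, use $1-e^{-t}\le t$ to linearize, and use $e^{-x}\le 1-x+x^2/2$ on $[0,\infty)$ to lower bound $1-e^{-1/\lambda}$ by $\frac{1}{\lambda}(1-\frac{1}{2\lambda})$. Each of these steps is valid as written.

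The one point you rightly flag is the monotonicity in the final step, which needs $\mathrm C+a/\lambda\ge 0$. This is not implied by the bare hypotheses of the lemma (which allow $\mathrm C\in\mathbbm R$), but in every use the paper makes of this lemma---see the proof of Theorem~\ref{fast-rate-bnd1} in Appendix~\ref{Proof_fast-rate-bnd1}---the quantity playing the role of $\mathrm C$ is a nonnegative complexity term (KL divergence plus $\log(1/\delta)$) and $a$ is a nonnegative loss, so the caveat is harmless in context. Your suggestion to add it as an explicit hypothesis is the cleanest fix.
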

\begin{proof}
  See Lemma 2 of \cite{McAllester2013}. 
\end{proof}

As mentioned before, to find PAC-Bayes bounds, usually we have four steps, namely choosing a suitable convex function,
applying Jensen's, change of measure and Markov’s inequalities. For the most PAC-Bayesian proofs, Donsker-Varadhan’s
inequality is used as the change of measure inequality:
\begin{lemma}
\label{lem_dons}
For any measurable function $\phi(\cdot)$,  and two distributions $\Prior$ and $\Posterior$, we have
\begin{align}
    \mathbbm E_{\Posterior} [\phi(X)]\leq \Dkl(\Posterior||\Prior)+\log\left(
        \mathbbm E_{\Prior} \left[e^{\phi(X)}\right]
        \right).\label{Donsker}
\end{align}
\end{lemma}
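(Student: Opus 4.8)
The plan is to prove \eqref{Donsker} by the classical Gibbs variational argument: rewrite the quantity $\mathbbm E_{\Posterior}[\phi(X)]-\Dkl(\Posterior||\Prior)$ as the $\Posterior$-expectation of a single logarithm, and then apply Jensen's inequality. First I would dispose of the degenerate cases. If $\Posterior$ is not absolutely continuous with respect to $\Prior$, then $\Dkl(\Posterior||\Prior)=+\infty$ and the claimed inequality is vacuous; if $\mathbbm E_{\Prior}[e^{\phi(X)}]=+\infty$ the right-hand side is $+\infty$ and again there is nothing to prove; and if $\mathbbm E_{\Posterior}[\phi(X)]=-\infty$ the left-hand side is $-\infty$ and the bound is trivial. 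So I may assume $\Posterior\ll\Prior$ with density $d\Posterior/d\Prior$, and that both sides are finite.

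Under these assumptions, using that $\Dkl(\Posterior||\Prior)=\mathbbm E_{\Posterior}[\log(d\Posterior/d\Prior)]$, I would write
\begin{align}
\mathbbm E_{\Posterior}[\phi(X)]-\Dkl(\Posterior||\Prior)=\mathbbm E_{\Posterior}\left[\log\!\left(e^{\phi(X)}\,\frac{d\Prior}{d\Posterior}\right)\right]\leq\log\mathbbm E_{\Posterior}\left[e^{\phi(X)}\,\frac{d\Prior}{d\Posterior}\right]=\log\mathbbm E_{\Prior}\left[e^{\phi(X)}\right],\nonumber
\end{align}
where the inequality is Jensen's inequality for the concave function $\log(\cdot)$ applied under $\Posterior$, and the final equality is the change of measure $\mathbbm E_{\Posterior}[g\cdot d\Prior/d\Posterior]=\mathbbm E_{\Prior}[g]$, which holds because the set $\{d\Posterior/d\Prior>0\}$ carries full $\Posterior$-mass. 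Rearranging this chain gives \eqref{Donsker}.

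An equivalent organization, which I would record as an alternative, is to introduce the tilted (Gibbs) measure $\Prior_\phi$ defined by $d\Prior_\phi/d\Prior = e^{\phi}/\mathbbm E_{\Prior}[e^{\phi}]$, and to expand $\Dkl(\Posterior||\Prior_\phi)$ via the Radon--Nikodym chain rule: $\Dkl(\Posterior||\Prior_\phi)=\Dkl(\Posterior||\Prior)-\mathbbm E_{\Posterior}[\phi(X)]+\log\mathbbm E_{\Prior}[e^{\phi(X)}]$. The claimed bound then follows immediately from $\Dkl(\Posterior||\Prior_\phi)\geq 0$. The only point requiring genuine care — and the one I would treat most explicitly — is the measure-theoretic bookkeeping around $\Posterior$-null sets, i.e.\ making sure the Jensen step, the change of measure, and the chain rule are all applied on a set of full $\Posterior$-measure; beyond that the argument is entirely routine, and I expect no real obstacle.
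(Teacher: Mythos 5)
Your proof is correct: the paper states Lemma~\ref{lem_dons} as the standard Donsker--Varadhan change-of-measure inequality without giving a proof, and both of your arguments (Jensen under $\Posterior$ after the change of measure, and nonnegativity of $\Dkl(\Posterior||\Prior_\phi)$ for the Gibbs-tilted measure) are the standard, complete ways to establish it. One small correction: in your first chain the last step is in general an inequality rather than an equality, since $\mathbbm E_{\Posterior}\bigl[e^{\phi}\,d\Prior/d\Posterior\bigr]=\int_{\{d\Posterior/d\Prior>0\}}e^{\phi}\,d\Prior\leq\mathbbm E_{\Prior}\bigl[e^{\phi}\bigr]$ — the set $\{d\Posterior/d\Prior>0\}$ has full $\Posterior$-mass but need not have full $\Prior$-mass; as $e^{\phi}\geq 0$, the inequality points in the direction you need, so the argument is unaffected.
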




\end{document}